\newtheorem{lemma}{Lemma}
\newtheorem{theorem}{Theorem}
\newtheorem{proposition}{Proposition}
\newtheorem{corollary}{Corollary}
\newcommand\BlockSparseSplit[1]{B(d)}
\newcommand\SparseSplit[1]{S(d)}
\newcommand{\myoparagraph}[1]{\noindent {\it #1.}}
\newcommand{\comment}[1]{}
\def\real{\mathbb{R}}
\def\support{\text{Supp}}
\def\rowsupport{\text{RowSupp}}
\def\sgn{\text{sign}}
\newcommand{\tr}[2]{\left\langle#1,#2\right\rangle}
\def\TrueParMatrix{\bar{\Theta}}
\def\ParMatrix{\Theta}
\def\EstDual{\widetilde{Z}}
\def\delpar{\Delta}
\def\Bst{B^*}
\def\Ss{S^*}
\def\bsj{b^*_j}
\def\ssj{s^*_j}
\def\bsjk{b^{*(k)}_j}
\def\ssjk{s^{*(k)}_j}
\def\bsjkl{b_j^{*\left(k_l\right)}}
\def\ssjkl{s_j^{*\left(k_l\right)}}
\def\Bh{\hat{B}}
\def\Sh{\hat{S}}
\def\1b{\hat{b}_j}
\def\As{\hat{s}_j}
\def\2b{\hat{b}^{(k)}_j}
\def\Bs{\hat{s}^{(k)}_j}
\def\3b{\hat{b}_{j_0}^{\left(k_0\right)}}
\def\Cs{\hat{s}_{j_0}^{\left(k_0\right)}}
\def\4b{\hat{b}_{j_0}}
\def\Ds{\hat{s}_{j_0}}
\def\5b{\hat{b}_{j_0}^{(k)}}
\def\Es{\hat{s}_{j_0}^{(k)}}
\def\6b{\hat{b}_{j_0}^{\left(k^*\right)}}
\def\Fs{\hat{s}_{j_0}^{\left(k^*\right)}}
\def\7b{\hat{b}_{j_0}^{\left(k_l\right)}}
\def\Gs{\hat{s}_{j_0}^{\left(k_l\right)}}
\def\8b{\hat{b}^{(k)}}
\def\Hs{\hat{s}^{(k)}}
\newcommand\mysqrt[1]{\left(#1\right)^{1/2}}
\begin{document}

\title{A Dirty Model for Multiple Sparse Regression}

\author{
Ali Jalali, Pradeep Ravikumar\IEEEmembership{},~and~
Sujay Sanghavi,~\IEEEmembership{Member}\thanks{The authors are with
the Departments of Electrical and Computer Engineering (Jalali and Sanghavi) and Computer Science (Ravikumar), The University of Texas at Austin, Austin, TX 78712 USA email: (alij@mail.utexas.edu; pradeepr@cs.utexas.edu; sanghavi@mail.utexas.edu) }}

\maketitle

\begin{abstract}
Sparse linear regression -- finding an unknown vector from linear measurements -- is now known to be possible with fewer samples than variables, via methods like the LASSO. We consider the multiple sparse linear regression problem, where several related vectors -- with {\em partially shared} support sets --  have to be recovered. A natural question in this setting is whether one can use the sharing to further decrease the overall number of samples required. A line of recent research has studied the use of $\ell_1/\ell_q$ norm block-regularizations with $q > 1$ for such problems; however these could actually perform {\em worse} in sample complexity -- vis a vis solving each problem separately ignoring sharing -- depending on the level of sharing. 

We present a new method for multiple sparse linear regression that can  leverage support and parameter overlap when it exists, but not pay a penalty when it does not.  a very simple idea: we decompose  the parameters into two components and {\em regularize these  
differently.} We show both theoretically and empirically, our method  strictly and noticeably outperforms both $\ell_1$ or $\ell_1/\ell_q$  
methods, over the entire range of possible overlaps  (except at  boundary cases, where we match the best method). We also provide  
theoretical guarantees that the method performs well under high-dimensional scaling.
\end{abstract}

\begin{IEEEkeywords}
Multi-task Learning, High-dimensional Statistics, Multiple Regression.
\end{IEEEkeywords}

\section{Introduction: Motivation and Setup}

\myoparagraph{High-dimensional scaling} In fields across science and engineering, we are increasingly faced with problems where the number of variables or features $p$ is larger than the number of observations $n$. Under such high-dimensional scaling, for any hope of statistically consistent estimation, it becomes vital to leverage any potential structure in the problem such as sparsity (e.g. in compressed sensing~\citep{CS} and LASSO~\cite{Lasso}), low-rank structure \citep{RechtLowRank,NWLowRank}, or sparse graphical model structure \cite{Ising}. It is in such high-dimensional contexts in particular that multi-task learning~\citep{CaruanaMTL} could be most useful. Here, multiple tasks share some common structure such as sparsity, and estimating these tasks jointly by leveraging this common structure could be more statistically efficient.

\myoparagraph{Block-sparse Multiple Regression} A common multiple task learning setting, and which is the focus of this paper, is that of multiple regression, where we have $r > 1$ response variables, and a common set of $p$ features or covariates. The $r$ tasks could share certain aspects of their underlying distributions, such as common variance, but the setting we focus on in this paper is where the response variables have {\em simultaneously sparse} structure: the index set of relevant features for each task is sparse; and there is a large overlap of these relevant features across the different regression problems. Such ``simultaneous sparsity'' arises in a variety of contexts \citep{TroppSS};
indeed, most applications of sparse signal recovery in contexts ranging from graphical model learning, kernel learning, and function estimation have natural extensions to the simultaneous-sparse setting \citep{Ising,BachMKL,Spam}. \par It is useful to represent the multiple regression parameters via a matrix, where each column corresponds to a task, and each row to a feature. Having simultaneous sparse structure then corresponds to the matrix being largely ``block-sparse'' -- where each row is either all zero or mostly non-zero, and the number of non-zero rows is small. A lot of recent research in this setting has focused on $\ell_1/\ell_q$ norm regularizations, for $q > 1$, that encourage the parameter matrix to have such block-sparse structure. Particular examples include results using the $\ell_1/\ell_\infty$ norm \citep{TurlachVW, ZH08,NWJoint}, and the $\ell_1/\ell_2$ norm~\citep{Lounici09,Obozinski10}.

\myoparagraph{Our Model} Block-regularization is ``heavy-handed'' in two ways. By strictly encouraging shared-sparsity, it assumes that all relevant features are shared, and hence suffers under settings, arguably more realistic, where each task depends on features specific to itself in addition to the ones that are common. The second concern with such block-sparse regularizers is that the $\ell_1/\ell_q$ norms can be shown to encourage the entries in the non-sparse rows taking nearly identical {\em values}. Thus we are far away from the original goal of multitask learning: not only do the set of relevant features have to be exactly the same, but their values have to as well. Indeed recent research into such regularized methods \cite{NWJoint,Obozinski10} caution against the use of block-regularization in regimes where the supports and values of the parameters for each task can vary widely. Since the true parameter values are unknown, that would be a worrisome caveat.

We thus ask the question: can we learn multiple regression models by leveraging whatever overlap of features there exist, and without requiring the parameter values to be near identical? Indeed this is an instance of a more general question on whether we can estimate statistical models where the data may not fall cleanly into any one structural bracket (sparse, block-sparse and so on). With the explosion of complex and \emph{dirty} high-dimensional data in modern settings, it is vital to investigate estimation of corresponding \emph{dirty} models, which might require new approaches to biased high-dimensional estimation. In this paper we take a first step, focusing on such dirty models for a specific problem: simultaneously sparse multiple regression. 

Our approach uses a simple idea: while any one structure might not capture the data, a superposition of structural classes might.
Our method thus searches for a parameter matrix that can be \emph{decomposed} into a row-sparse matrix (corresponding to the overlapping or shared features) and an elementwise sparse matrix (corresponding to the non-shared features). As we show both theoretically and empirically, with this simple fix we are able to leverage any extent of shared features, while allowing disparities in support and values of the parameters, so that we are \emph{always} better than both the Lasso or block-sparse regularizers (at times remarkably so). 

The rest of the paper is organized as follows: In Sec 2. basic definitions and setup of the problem are presented. Main results of the paper is discussed in sec 3. Experimental results and simulations are demonstrated in Sec 4.

{\bf Notation:} For any matrix $M$, we denote its $j^{th}$ row as $m_j$, and its $k$-th column as $m^{(k)}$. The set of all non-zero rows (i.e. all rows with at least one non-zero element) is denoted by $\rowsupport(M)$ and its support by $\support(M)$. Also, for any matrix $M$, let $\|M\|_{1,1} := \sum_{j,k} |m_{j}^{(k)}|$, i.e. the sums of absolute values of the elements, and $\|M\|_{1,\infty} := \sum_j \|m_j\|_\infty$ where, $\|m_j\|_\infty := \max_k |m_{j}^{(k)}|$.

\section{Problem Set-up and Our Method}

\myoparagraph{Multiple regression} We consider the following standard multiple linear regression model: 
\begin{eqnarray*}
y^{(k)} = X^{(k)} \bar{\theta}^{(k)} + w^{(k)},\quad k = 1,\hdots,r,
\end{eqnarray*}
where, $y^{(k)} \in \real^{n}$ is the response for the $k$-th task, regressed on the design matrix $X^{(k)} \in \real^{n \times p}$ (possibly different across tasks), while $w^{(k)} \in \real^{n}$ is the noise vector. We assume each $w^{(k)}$ is drawn independently from $\mathcal{N}(0,\sigma^2)$. The total number of tasks or target variables is $r$, the number of features is $p$, while the number of samples we have for each task is $n$. For notational convenience, we collate these quantities into matrices $Y \in \real^{n\times r}$ for the responses, $\TrueParMatrix\in\real^{p\times r}$ for the regression parameters and $W \in \real^{n\times r}$ for the noise.

\myoparagraph{Our Model} In this paper we are interested in estimating the true parameter $\TrueParMatrix$ from data $\{y^{(k)},X^{(k)}\}$ by leveraging 
any (unknown) extent of simultaneous-sparsity. In particular, certain rows of $\TrueParMatrix$ would have many non-zero entries, corresponding to features shared by several tasks (``shared'' rows), while certain rows would be elementwise sparse, corresponding to those features which are relevant for some tasks but not all (``non-shared rows''), while certain rows would have all zero entries, corresponding to those features that are not relevant to any task. We are interested in estimators $\widehat{\Theta}$ that automatically adapt to different levels of sharedness, and yet enjoy the following guarantees:\\

{\bf Support recovery:} We say an estimator $\widehat{\Theta}$ successfully recovers the true signed support if
$\text{sign}(\support(\widehat{\Theta})) = \text{sign}(\support(\TrueParMatrix))$. We are interested in deriving sufficient conditions under which the estimator succeed. We note that this is stronger than merely recovering the row-support of $\TrueParMatrix$, which is union of its supports for the different tasks. In particular, denoting $\mathcal{U}_k$ for the support of the $k$-th column of $\TrueParMatrix$, and $\mathcal{U} = \bigcup_k \mathcal{U}_k$.\\

{\bf Error bounds:}
We are also interested in providing bounds on the elementwise $\ell_\infty$ norm error of the estimator $\widehat{\ParMatrix}$,
\small
\begin{eqnarray*}
\|\widehat{\ParMatrix} - \TrueParMatrix\|_\infty = \max_{j = 1,\hdots,p}\max_{k = 1,\hdots,r} \left|\widehat{\ParMatrix}^{(k)}_j - \TrueParMatrix^{(k)}_j\right|.
\end{eqnarray*}
\normalsize

\subsection{Our Method}
Our method models the unknown parameter $\Theta$ as a superposition of a block-sparse matrix $B$ (corresponding to the features shared across many tasks) and a sparse matrix $S$ (corresponding to the features shared across few tasks). We estimate the sum of two parameter matrices $B$ and $S$ with different regularizations for each: encouraging block-structured row-sparsity in $B$ and  elementwise sparsity in $S$. The corresponding simple models would either just use block-sparse regularizations~\cite{NWJoint,Obozinski10} or just elementwise sparsity regularizations~\cite{Lasso,WainwrightLasso}, so that either method would perform better in certain suited regimes. Interestingly, as we will see in the main results, by explicitly allowing to have both block-sparse and elementwise sparse component (see Algorithm~\ref{AlgDirtyModel}), we are able to \emph{outperform both} classes of these ``clean models'', for {\em all} regimes $\TrueParMatrix$.

\begin{algorithm*}[t]\label{AlgDirtyModel}
\small
Solve the following convex optimization problem:
\begin{eqnarray}\label{EqnDirtyEstNoisy}
(\widehat{S},\widehat{B}) \in \arg\min_{S,B} && \frac{1}{2n} \sum_{k=1}^{r} 
\left\|y^{(k)} - X^{(k)} \left(s^{(k)} + b^{(k)}\right)\right\|_{2}^{2} + \lambda_s \|S\|_{1,1} + \lambda_b \|B\|_{1,\infty}.
\end{eqnarray}
Then output $\widehat{\ParMatrix} = \widehat{B} + \widehat{S}$.\\ 
\caption{Complex Block Sparse}
\end{algorithm*}

\section{Main Results and Their Consequences} \label{sec:main_results}

We now provide precise statements of our main results. A number of recent results have shown that the Lasso~\citep{Lasso,WainwrightLasso} and $\ell_1/\ell_\infty$ block-regularization~\citep{NWJoint} methods succeed in model selection, i.e., recovering signed supports with controlled error bounds under high-dimensional
scaling regimes. Our first two theorems extend these results to our model setting. In Theorem \ref{noisysupportrecoverytheorem}, we consider the case of deterministic design matrices $X^{(k)}$, and provide sufficient conditions guaranteeing signed support recovery, and elementwise $\ell_\infty$ norm error bounds. In Theorem \ref{noisysupportrecoverygaussiantheorem}, we specialize this theorem to the case where the rows of the design matrices are random from a general zero mean Gaussian distribution: this allows us to provide scaling on the number of observations required in order to guarantee signed support recovery and bounded elementwise $\ell_\infty$ norm error.

Our third result is the most interesting in that it explicitly quantifies the performance gains of our method vis-a-vis Lasso and the $\ell_1/\ell_\infty$ block-regularization method. Since this entailed finding the precise constants underlying earlier theorems, and a correspondingly more delicate analysis, we follow \citet{NWJoint} and focus on the case where there are two-tasks (i.e. $r=2$), and where we have standard Gaussian design matrices as in Theorem~\ref{noisysupportrecoverygaussiantheorem}. Further, while each of two tasks depends on $s$ features, only a fraction $\alpha$ of these are common. It is then interesting to see how the behaviors of the different regularization methods vary with the extent of overlap $\alpha$. 

\myoparagraph{Comparisons} \citet{NWJoint} show that there is actually a ``phase transition'' in the scaling of the probability of successful signed support-recovery with the number of observations. Denote a particular rescaling of the sample-size $\theta_{Lasso}(n,p,\alpha) = \frac{n}{s\log\left(p-s\right)}$. Then as \citet{WainwrightLasso} show, when the rescaled number of samples scales as $\theta_{Lasso} > 2 + \delta$ for any $\delta > 0$, Lasso succeeds in recovering the signed support of all columns with probability converging to one. But when the sample size scales as $\theta_{Lasso} < 2 - \delta$ for any $\delta > 0$, Lasso \emph{fails} with probability converging to one. For the $\ell_1/\ell_\infty$-regularized multiple linear regression, define a similar rescaled sample size  $\theta_{1,\infty}(n,p,\alpha) = \frac{n}{s\log\left(p-(2-\alpha)s\right)}$. Then as \citet{NWJoint} show there is again a transition in probability of success from near zero to near one, at the rescaled sample size of $\theta_{1,\infty} =  (4 - 3 \alpha)$. Thus, for $\alpha < 2/3$ (``less sharing'') Lasso would perform better since its transition is at a smaller sample size, while for $\alpha > 2/3$ (``more sharing'') the $\ell_1/\ell_\infty$  regularized method would perform better. 

As we show in our third theorem, the phase transition for our method occurs at the rescaled sample size of $\theta_{1,\infty} = (2 - \alpha)$, which is \emph{strictly} before either the Lasso or the $\ell_1/\ell_\infty$ regularized method except for the boundary cases: $\alpha = 0$, i.e. the case of no sharing, where we \emph{match} Lasso, and for $\alpha =1$, i.e. full sharing, where we \emph{match} $\ell_1/\ell_\infty$. Everywhere else, we {\it strictly outperform both} methods.
Figure \ref{fig1} shows the empirical performance of each of the three methods; as can be seen, they agree very well with the theoretical analysis. (Further details in the experiments Section~\ref{SecExperiments}).

\begin{figure*}[t]
\centering
\subfigure[$\alpha=0.3$]{
\includegraphics[width=0.45\linewidth]{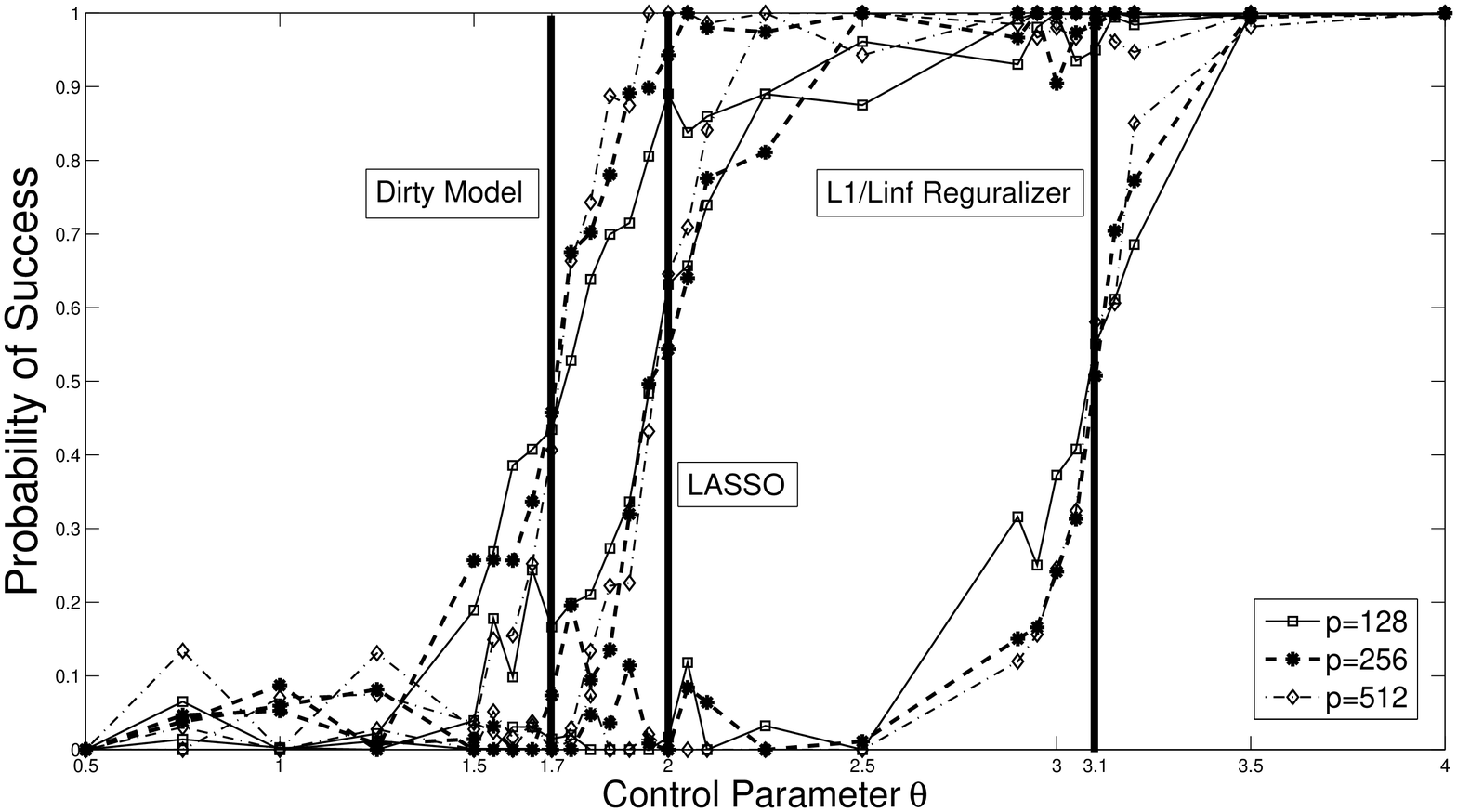}
\label{fig1:alpha03}
}
\subfigure[$\alpha=\frac{2}{3}$]{
\includegraphics[width=0.45\linewidth]{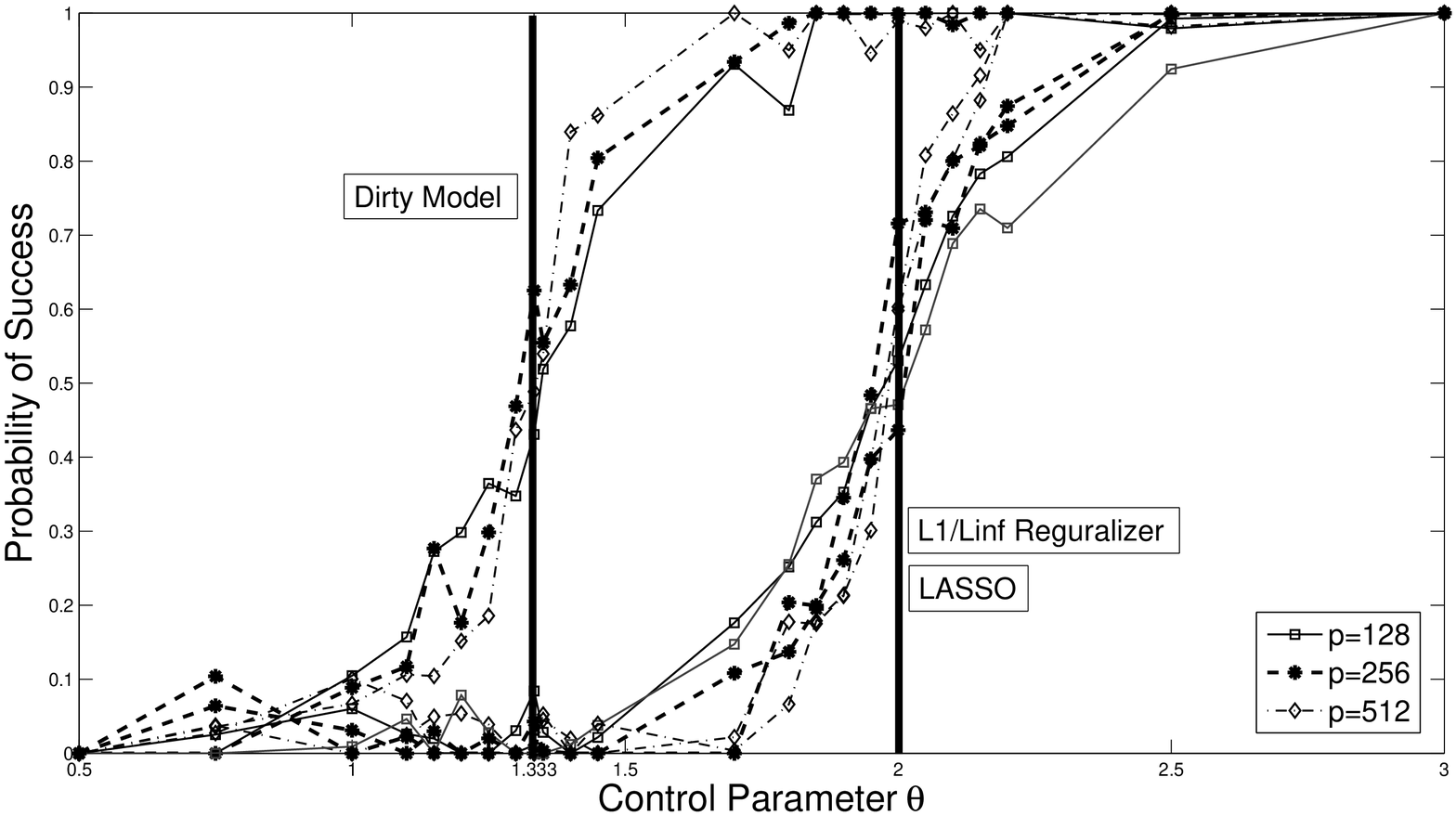}
\label{fig1:alpha066}
}
\subfigure[$\alpha=0.8$]{
\includegraphics[width=0.5\linewidth]{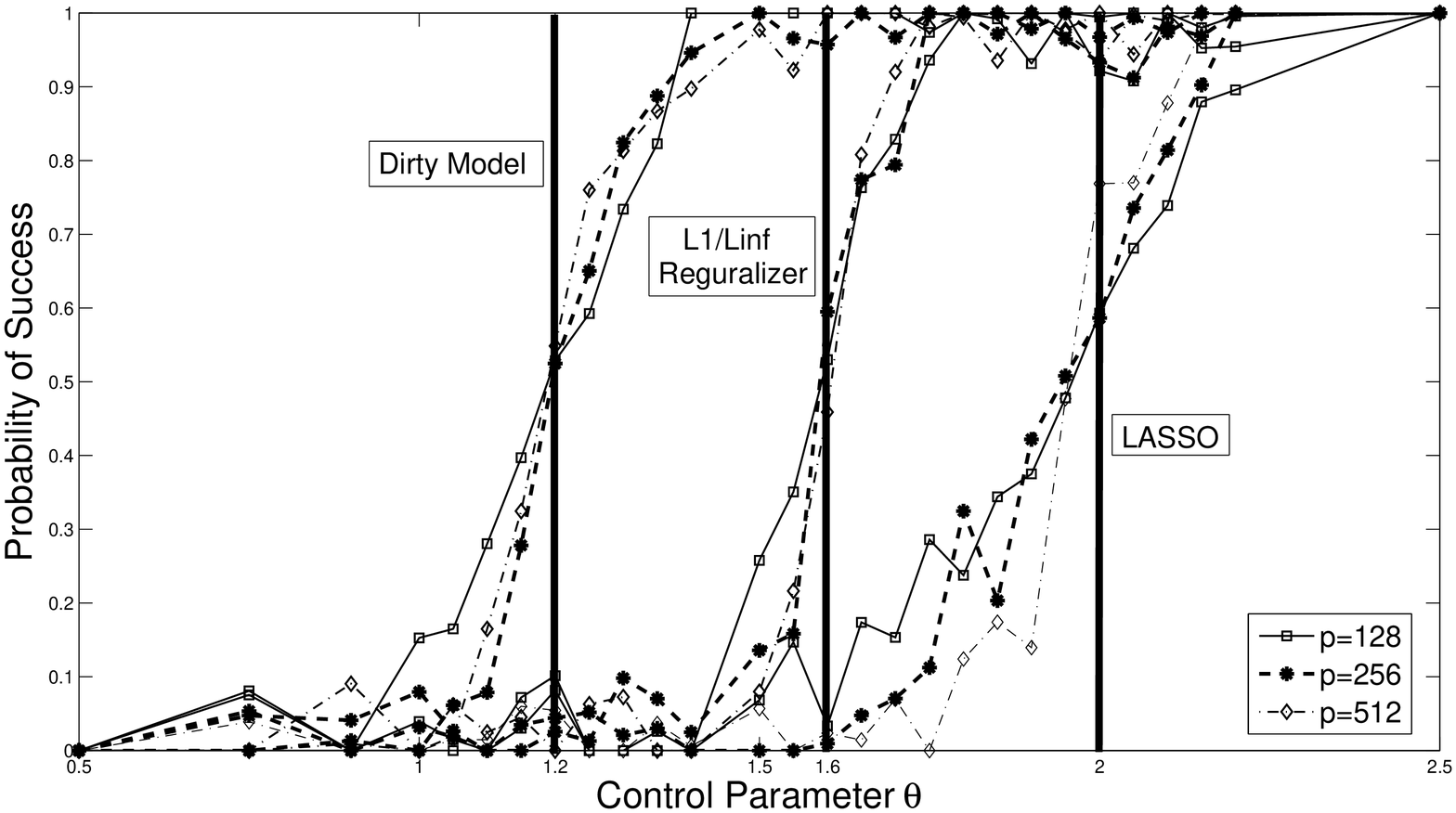}
\label{fig1:alpha08}
}
\label{fig1}
\caption{\small Probability of success in recovering the true signed support using dirty model, Lasso and $\ell_1/\ell_\infty$ regularizer. For a $2$-task problem, the probability of success for different values of feature-overlap fraction $\alpha$ is plotted. As we can see in the regimes that Lasso is better than, as good as and worse than $\ell_1/\ell_\infty$ regularizer (\subref{fig1:alpha03},  \subref{fig1:alpha066} and \subref{fig1:alpha08} respectively), the dirty model outperforms both of the methods, i.e., it requires less number of observations for successful recovery of the true signed support compared to Lasso and $\ell_1/\ell_\infty$ regularizer. Here $s=\lfloor\frac{p}{10}\rfloor$ always.}
\end{figure*}

\subsection{Sufficient Conditions for Deterministic Designs}

We first consider the case where the design matrices $X^{(k)}$ for $k=1,\cdot\cdot\cdot,r$ are deterministic, and start by specifying the assumptions we impose on the model. We note that similar sufficient conditions for the deterministic $X^{(k)}$'s case were imposed in papers analyzing Lasso~\citep{WainwrightLasso} and block-regularization methods~\cite{NWJoint,Obozinski10}.\\

\noindent {\em {\bf A0} Column Normalization:} $\|X^{(k)}_j\|_2\leq\sqrt{2n}$ for all $j = 1,\ldots,p$ and $k = 1,\ldots,r$.\\ \\

\noindent {\em {\bf A1} Incoherence Condition:}
\small$$\displaystyle\gamma_b:=1-\max_{j\in\mathcal{U}^c}\sum_{k=1}^r \left\|\tr{X_{j}^{(k)}}{X^{(k)}_{\mathcal{U}_k}\left(\tr{X^{(k)}_{\mathcal{U}_k}}{X^{(k)}_{\mathcal{U}_k}}\right)^{-1}}\right\|_1>0,$$\normalsize
where, $\mathcal{U}_k$ denotes the support of the $k$-th column of $\TrueParMatrix$, and $\mathcal{U} = \bigcup_k \mathcal{U}_k$ denotes the union of the supports of all tasks. We will also find it useful to define 
\small$$\gamma_s := 1 - \max_{1 \leq k \leq r} \max_{j \in \mathcal{U}_k^c} \left\| \tr{X^{(k)}_{j}}{X^{(k)}_{\mathcal{U}_k}} \left( \tr{X^{(k)}_{\mathcal{U}_k}}{X^{(k)}_{\mathcal{U}_k}} \right)^{-1} \right\|_1.$$\normalsize Note that by the incoherence condition {\bf A1}, we have $\gamma_s>0$. \\

\noindent {\em {\bf A2} Minimum Curvature Condition:} \small$$\displaystyle C_{min}:=\min_{1\leq k\leq r}\lambda_{min}\left(\frac{1}{n}\tr{X^{(k)}_{\mathcal{U}_k}}{X^{(k)}_{\mathcal{U}_k}}\right)>0.$$\normalsize
Also, define \small$\displaystyle D_{max}:=\max_{1\leq k\leq r}\left\|\left(\frac{1}{n}\tr{X^{(k)}_{\mathcal{U}_k}}{X^{(k)}_{\mathcal{U}_k}}\right)^{-1}\right\|_{\infty,1}$\normalsize. As a consequence of {\bf A2}, we have that $D_{\max}$ is finite.\\

\noindent {\em {\bf A3} Regularizers:} We require the regularization parameters satisfy

{\em {\bf A3-1}} $\,\,\lambda_s > \frac{2(2-\gamma_s)\sigma\sqrt{\log(pr)}}{\gamma_s\sqrt{n}}$.

{\em {\bf A3-2}} $\,\,\lambda_b > \frac{2(2-\gamma_b)\sigma\sqrt{\log(pr)}}{\gamma_b\sqrt{n}}$.

{\em {\bf A3-3}} $\,\,1\leq\frac{\lambda_b}{\lambda_s}\leq r$ and $\frac{\lambda_b}{\lambda_s}$ is not an integer (see Lemma~\ref{signBSnoisy} and \ref{Lem:non_uniqueness} for the reason).\\

\begin{theorem}
Suppose {\bf A0-A3} hold, and that we obtain estimate $\widehat{\ParMatrix}$ from our algorithm. Then, with probability at least $1 - c_1\exp(-c_2n)$, we are guaranteed that the convex program (\ref{EqnDirtyEstNoisy}) has a unique optimum and
\begin{itemize}
\item[(a)] The estimate $\widehat{\ParMatrix}$ has no false inclusions, and has bounded $\ell_\infty$ norm error:
\small\begin{equation}
\begin{aligned}
&\support(\widehat{\ParMatrix})\subseteq\support(\TrueParMatrix), \quad  \text{and} \quad\\ 
&\| \widehat{\ParMatrix} - \TrueParMatrix\|_{\infty,\infty}  \leq \underbrace{\sqrt{\frac{4\sigma^2\log\left(pr\right)}{n \, C_{min}}} + \lambda_s D_{max}}_{b_{\min}}.\\
\end{aligned}
\end{equation}\normalsize

\item[(b)]  The estimate $\widehat{\ParMatrix}$ has no false exclusions, i.e., $\sgn(\support(\widehat{\ParMatrix})) = \sgn\left(\support(\TrueParMatrix)\right)$ provided that
\footnotesize$\displaystyle\min_{(j,k) \in \support(\TrueParMatrix)} \left| \bar{\theta}_j^{(k)} \right| >  b_{\min}\,\,\,$\normalsize for $b_{\min}$ defined in part (a).
\end{itemize}
The positive constants $c_1,c_2$ depend only on $\gamma_s,\gamma_b,\lambda_s,\lambda_b$ and $\sigma$, but are otherwise independent of $n,p,r$, the problem dimensions of interest.
\label{noisysupportrecoverytheorem}
\end{theorem}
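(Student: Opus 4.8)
The plan is to use the primal-dual witness (PDW) construction, exactly as in the Lasso analysis of \citet{WainwrightLasso} and the block-regularized analysis of \citet{NWJoint}, but adapted to handle the two coupled variables $B$ and $S$ simultaneously. First I would write down the subgradient optimality (KKT) conditions for the convex program (\ref{EqnDirtyEstNoisy}). Because $b^{(k)}$ and $s^{(k)}$ enter the loss only through their sum $\theta^{(k)} = b^{(k)} + s^{(k)}$, the gradient of the quadratic loss with respect to $B$ and with respect to $S$ is identical; hence stationarity forces the two subgradients to be proportional, $\lambda_s Z_s = \lambda_b Z_b = -G$, where $G^{(k)} = \frac{1}{n}(X^{(k)})^\top(X^{(k)}\theta^{(k)} - y^{(k)})$ is the shared loss gradient. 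Here $Z_s$ lies in the subdifferential of $\regsparse{\cdot}$ (so $|(Z_s)_j^{(k)}| \le 1$ entrywise, with equality to $\sgn(s_j^{(k)})$ on $\support(\Sh)$) and $Z_b$ lies in the subdifferential of $\regblocksparse{\cdot}$ (so $\regblocksparsedual{Z_b} \le 1$, i.e.\ each row has $\ell_1$-norm at most one, supported on the argmax entries of the corresponding row of $\Bh$). This proportionality coupling is the feature that distinguishes the analysis from the single-regularizer cases.

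Second, I would construct the witness. I restrict the program to the true support: solve (\ref{EqnDirtyEstNoisy}) with $B$ constrained to have row-support inside $\mathcal{U}$ and $S$ constrained to have support inside $\support(\TrueParMatrix)$, obtaining a candidate $(\Bh, \Sh)$ and hence $\widehat{\ParMatrix} = \Bh + \Sh$ with $\support(\widehat{\ParMatrix}) \subseteq \support(\TrueParMatrix)$. On this restricted problem the minimum curvature condition {\bf A2} ($C_{\min} > 0$) makes the loss strictly convex in the on-support directions, so the restricted solution and its sign pattern are well defined, and the on-support dual variables are pinned down by stationarity. I then extend $Z_s, Z_b$ to the off-support coordinates via the coupling and check strict dual feasibility.

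Third --- the crux --- I would verify strict dual feasibility off the support, which certifies that the restricted solution is in fact the unique global optimum and has no false inclusions. For each off-support entry, substituting the explicit $G$ and the true model $y^{(k)} = X^{(k)}\bar{\theta}^{(k)} + w^{(k)}$ writes the dual variable as a deterministic ``incoherence'' term plus a stochastic noise term. The incoherence conditions {\bf A1} (through $\gamma_s, \gamma_b > 0$) bound the deterministic term strictly below $1$, while the lower bounds {\bf A3-1} and {\bf A3-2} on $\lambda_s, \lambda_b$ are calibrated precisely so that the noise contribution is dominated; combining these gives $|(Z_s)_j^{(k)}| < 1$ off the entrywise support and $\|Z_{b,j}\|_1 < 1$ off the row support, with room to spare. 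Controlling the stochastic term is where the high-probability statement enters: using column normalization {\bf A0} and the Gaussianity of $W$, the relevant maxima over the $\mathcal{O}(pr)$ coordinates concentrate, producing the $\sqrt{\log(pr)/n}$ scale and the $1 - c_1\exp(-c_2 n)$ probability.

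Finally, for the error bound and part (b) I would restrict to the support and invert the restricted Gram matrices. Expressing the on-support error $\widehat{\ParMatrix} - \TrueParMatrix$ through the normal equations, one term is controlled by $C_{\min}$ and the noise (yielding $\sqrt{4\sigma^2 \log(pr)/(n C_{\min})}$) and the other by $D_{\max}$ and $\lambda_s$ (yielding $\lambda_s D_{\max}$), together giving $b_{\min}$; part (b) is then immediate, since if every nonzero true entry exceeds $b_{\min}$ the corresponding estimate cannot vanish or flip sign, so $\sgn(\support(\widehat{\ParMatrix})) = \sgn(\support(\TrueParMatrix))$. The main obstacle I anticipate is uniqueness of the decomposition and its sign pattern: since $Z_s = (\lambda_b/\lambda_s) Z_b$, on support entries where $s_j^{(k)} \neq 0$ the row-$\ell_1$ budget of $Z_b$ is spent in discrete units of $\lambda_s/\lambda_b$, and a tie in this counting would permit distinct optimal splits $(B,S)$ of the same $\ParMatrix$. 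Assumption {\bf A3-3}, that $\lambda_b/\lambda_s \in [1,r]$ is non-integer, is exactly what rules out such ties; this is where I would invoke Lemmas~\ref{signBSnoisy} and~\ref{Lem:non_uniqueness} to fix the sign of the split and establish uniqueness.
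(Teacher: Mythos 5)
Your overall route is the paper's route: a primal--dual witness built from an oracle (support-restricted) problem, with the coupled subgradient condition ($\hat{Z}\in\lambda_s\partial\|\hat{S}\|_{1,1}$ and $\hat{Z}\in\lambda_b\partial\|\hat{B}\|_{1,\infty}$, the paper's Lemma~\ref{LemOptNoisy}), strict dual feasibility off the support verified via the incoherence constants $\gamma_s,\gamma_b$ plus Gaussian concentration at scale $\sqrt{\log(pr)/n}$, and the error bound obtained by inverting the restricted Gram matrices to get $\sqrt{4\sigma^2\log(pr)/(nC_{\min})}+\lambda_s D_{\max}$. All of that matches Lemmas~\ref{LemWitnessOptCond}, \ref{problemma}, \ref{NoiseInf} and Proposition~\ref{Th1}.

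There is, however, one genuine gap: your oracle problem is under-specified, and the missing ingredient is the paper's explicit decomposition $(\Bst,\Ss)=\mathcal{H}_d(\bar{\Theta})$ with $d=\lfloor\lambda_b/\lambda_s\rfloor$, obtained by clipping each row of $\bar{\Theta}$ at its $(d+1)^{th}$ largest magnitude. You restrict $S$ to $\support(\bar{\Theta})$ and $B$ to row-support in $\mathcal{U}$, but since the loss depends only on $B+S$, this restricted problem does not determine the split, so you cannot write down the dual candidate on the support --- and you must, because conditions (C1)--(C2) of Lemma~\ref{LemWitnessOptCond} require $\tilde{z}_j^{(k)}=\lambda_s\sgn(\tilde{s}_j^{(k)})$ on $\support(\tilde{S})$ while simultaneously the row sums $\sum_k|\tilde{z}_j^{(k)}|$ equal exactly $\lambda_b$ on $\rowsupport(\tilde{B})$. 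These are only mutually consistent if each nonzero row of the target $B$-component has at least $d+1$ entries attaining the row maximum and the target $S$-component has at most $d$ nonzeros per row (Lemma~\ref{Lem:Star_Properties}); the $\mathcal{H}_d$ construction is precisely what guarantees this and yields the explicit on-support dual $\frac{\lambda_b-\lambda_s\|\tilde{s}_j\|_0}{|M_j(\tilde{B})|-\|\tilde{s}_j\|_0}$ used in Lemma~\ref{problemma} to show this quantity is strictly below $\lambda_s$. You correctly sense the difficulty (your closing remarks about ties and {\bf A3-3}), but invoking Lemmas~\ref{signBSnoisy} and \ref{Lem:non_uniqueness} only tells you what any optimum must look like; the witness argument needs you to \emph{construct} a specific candidate with that structure in advance, and without $\mathcal{H}_d$ your construction does not produce one. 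A secondary consequence is that part (a) as you argue it would only give $\support(\widehat{\Theta})\subseteq\support(\bar{\Theta})$ at the level of $\Theta$, whereas the proof actually needs the finer containments $\support(\Sh)\subseteq\support(\Ss)$ and $\rowsupport(\Bh)\subseteq\rowsupport(\Bst)$ to run the sign-recovery step (P3)--(P4) of Proposition~\ref{Th1}.
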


{\bf Remark:} Condition (a) guarantees that the estimate will have no \emph{false inclusions}; i.e. all included features will be relevant. If in addition, we require that it have no \emph{false exclusions} and that recover the support exactly, we need to impose the assumption in (b) that the non-zero elements are large enough to be detectable above the noise.

\subsection{General Gaussian Designs}

Often the design matrices consist of samples from a Gaussian ensemble (e.g. in Gaussian graphical model structure learning). Suppose that for each task $k = 1,\ldots,r$ the design matrix $X^{(k)}\in\mathbb{R}^{n\times p}$ is such that each row $X^{(k)}_i\in\mathbb{R}^{p}$ is a zero-mean Gaussian random vector with  covariance matrix $\Sigma^{(k)}\in\mathbb{R}^{p\times p}$, and is independent of every other row. Let $\Sigma^{(k)}_{\mathcal{V},\mathcal{U}}\in\mathbb{R}^{\left|\mathcal{V}\right|\times\left|\mathcal{U}\right|}$ be the submatrix of $\Sigma^{(k)}$ with corresponding rows to $\mathcal{V}$ and columns to $\mathcal{U}$. We require these covariance matrices to satisfy the following conditions: \\

\noindent {\em {\bf C1} Incoherence Condition:} \small$$\displaystyle\gamma_b:=1-\max_{j\in\mathcal{U}^c}\sum_{k=1}^r \left\|\Sigma^{(k)}_{j,\mathcal{U}_k},\left(\Sigma^{(k)}_{\mathcal{U}_k,\mathcal{U}_k}\right)^{-1}\right\|_1>0\,$$\normalsize. \\
 
\noindent {\em {\bf C2} Minimum Curvature Condition:} \small$$\displaystyle C_{min}:=\min_{1\leq k\leq r}\lambda_{min}\left(\Sigma^{(k)}_{\mathcal{U}_k,\mathcal{U}_k}\right)>0\,$$\normalsize and let \small$\displaystyle D_{max}:=\left\|\left(\Sigma^{(k)}_{\mathcal{U}_k,\mathcal{U}_k}\right)^{-1}\right\|_{\infty,1}\,$\normalsize.\\

These conditions are analogues of the conditions for deterministic designs; they are now imposed on the covariance matrix of the (randomly generated) rows of the design matrix.\\

\noindent {\em {\bf C3} Regularizers:} Defining $s:=\max_k\left|\mathcal{U}_k\right|$, we require the regularization parameters satisfy

{\em {\bf C3-1}} $\,\,\lambda_s \geq \frac{\mysqrt{4\sigma^2C_{min}\log(pr)}}{\gamma_s\sqrt{nC_{min}}-\sqrt{2s\log(pr)}}$.

{\em {\bf C3-2}} $\,\,\lambda_b \geq \frac{\mysqrt{4\sigma^2C_{min}r (r\log(2)+\log(p))}}{\gamma_b\sqrt{nC_{min}}-\sqrt{2sr(r\log(2)+\log(p))}}$.

{\em {\bf C3-3}} $\,\,1\leq\frac{\lambda_b}{\lambda_s}\leq r$ and $\frac{\lambda_b}{\lambda_s}$ is not an integer.\\

\begin{theorem}
Suppose assumptions {\bf C1-C3} hold, and that the number of samples scale as  
\small$$n>\max\left(\frac{2s\log(pr)}{C_{min}\gamma_s^2},\frac{2sr\big(r\log(2)+\log(p)\big)}{C_{min}\gamma_b^2}\right).$$\normalsize
Suppose we obtain estimate $\widehat{\ParMatrix}$ from our algorithm. Then, with probability at least $$1-c_1\exp\left(-c_2\left(r\log(2)+\log(p)\right)\right)-c_3\exp(-c_4\log(rs)) \rightarrow 1$$ for some positive numbers $c_1-c_4$, we are guaranteed that the algorithm estimate $\widehat{\ParMatrix}$ is unique and satisfies the following conditions:
\begin{itemize}
\item[(a)]
The estimate $\widehat{\ParMatrix}$ has no false inclusions, and has bounded $\ell_\infty$ norm error so that
\small\begin{equation}
\begin{aligned}
&\support(\widehat{\ParMatrix})\subseteq\support(\TrueParMatrix), \quad  \text{and} \\\ 
&\| \widehat{\ParMatrix} - \TrueParMatrix\|_{\infty,\infty}  \leq \underbrace{\sqrt{\frac{50\sigma^2\log(rs)}{nC_{min}}}+\lambda_s\left(\frac{4s}{C_{min}\sqrt{n}}+D_{max}\right)}_{g_{\min}}.\\
\end{aligned}
\end{equation}\normalsize

\item[(b)]  The estimate $\widehat{\ParMatrix}$ has no false exclusions, i.e., $\sgn(\support(\widehat{\ParMatrix}) ) = \sgn\left(\support(\TrueParMatrix)\right)$ provided that
\footnotesize$\displaystyle\min_{(j,k) \in \support(\TrueParMatrix)} \left| \bar{\theta}_j^{(k)} \right| > g_{\min}\,$\normalsize for $g_{\min}$ defined in part (a).
\end{itemize}
\label{noisysupportrecoverygaussiantheorem}
\end{theorem}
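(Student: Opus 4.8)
The plan is to deduce Theorem~\ref{noisysupportrecoverygaussiantheorem} from the deterministic-design result of Theorem~\ref{noisysupportrecoverytheorem} by a \emph{population-to-sample} transfer argument. First I would condition on the design matrices $\{X^{(k)}\}$ and define a ``good event'' $\mathcal{E}$ on which the randomly generated design satisfies the deterministic assumptions {\bf A0}--{\bf A2} with constants that are a fixed fraction of their population counterparts from {\bf C1}--{\bf C2}: namely, sample column norms obeying {\bf A0}, sample incoherence parameters at least $\gamma_s/2$ and $\gamma_b/2$, and sample minimum eigenvalue at least $C_{min}/2$. Since {\bf A3}/{\bf C3} only constrain the (deterministic) regularization parameters, they carry over unchanged. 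On $\mathcal{E}$ the conclusions of Theorem~\ref{noisysupportrecoverytheorem} hold with high probability over the noise $W$, and a final union bound over the design-failure and noise-failure events produces the stated probability $1-c_1\exp(-c_2(r\log 2+\log p))-c_3\exp(-c_4\log(rs))$.

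The core of the argument is showing $\Pr[\mathcal{E}]\to 1$ under the sample-size scaling, one assumption at a time. For {\bf A0} I would use that $\|X^{(k)}_j\|_2^2$ is a $\chi^2$ sum of $n$ terms with mean $n\,\Sigma^{(k)}_{jj}$, so under the standard normalization of bounded diagonals a $\chi^2$ tail bound plus a union bound over the $pr$ columns gives $\|X^{(k)}_j\|_2\le\sqrt{2n}$ simultaneously. For {\bf C2} I would invoke standard Gaussian random-matrix concentration (of Davidson--Szarek type) on the singular values of $X^{(k)}_{\mathcal{U}_k}$: once $n\gtrsim s/C_{min}$, the sample Gram matrix $\tfrac1n\tr{X^{(k)}_{\mathcal{U}_k}}{X^{(k)}_{\mathcal{U}_k}}$ has minimum eigenvalue at least $C_{min}/2$ and a controlled inverse, giving both the curvature bound and a bound on the sample $D_{max}$. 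The delicate step is {\bf C1}, which I would handle by Gaussian conditioning: writing $X^{(k)}_j = X^{(k)}_{\mathcal{U}_k}\big(\Sigma^{(k)}_{\mathcal{U}_k,\mathcal{U}_k}\big)^{-1}\Sigma^{(k)}_{\mathcal{U}_k,j}+E^{(k)}_j$ with $E^{(k)}_j$ an independent Gaussian vector, the sample regression coefficient $\tr{X^{(k)}_j}{X^{(k)}_{\mathcal{U}_k}}\big(\tr{X^{(k)}_{\mathcal{U}_k}}{X^{(k)}_{\mathcal{U}_k}}\big)^{-1}$ equals the population coefficient $\Sigma^{(k)}_{j,\mathcal{U}_k}\big(\Sigma^{(k)}_{\mathcal{U}_k,\mathcal{U}_k}\big)^{-1}$ of {\bf C1} plus a fluctuation $E^{(k)\top}_j X^{(k)}_{\mathcal{U}_k}(\cdots)^{-1}$. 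Conditioned on $X^{(k)}_{\mathcal{U}_k}$ this fluctuation is Gaussian, its $\ell_1$ norm (summed over tasks for $\gamma_b$, per-task for $\gamma_s$) concentrates, and a union bound over the at most $p$ indices $j\in\mathcal{U}^c$ is exactly what produces the $\log p$ (and, for the block term, the additional $r\log 2$) factors in the sample-size requirement and in the probability exponents.

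It then remains to reconcile the \emph{sample}-quantity error bound of Theorem~\ref{noisysupportrecoverytheorem} with the \emph{population}-quantity bound claimed here. On $\mathcal{E}$ I would replace the sample $C_{min}$ by $C_{min}/2$, which inflates the leading $\sqrt{\sigma^2\log(\cdot)/(nC_{min})}$ term and accounts for the larger constant $50$ and the appearance of $\log(rs)$; and I would bound the sample $D_{max}$ by its population value plus the perturbation incurred when passing from $\big(\tfrac1n\tr{X^{(k)}_{\mathcal{U}_k}}{X^{(k)}_{\mathcal{U}_k}}\big)^{-1}$ to $\big(\Sigma^{(k)}_{\mathcal{U}_k,\mathcal{U}_k}\big)^{-1}$. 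Controlling this inverse-perturbation in the relevant $\|\cdot\|_{\infty,1}$ norm is what yields the extra $\tfrac{4s}{C_{min}\sqrt n}$ summand inside $g_{\min}$. The explicit sample-size lower bound $n>\max(\cdots)$ is precisely the condition that keeps the denominators $\gamma_s\sqrt{nC_{min}}-\sqrt{2s\log(pr)}$ and its block analogue in {\bf C3} positive, so that admissible $\lambda_s,\lambda_b$ exist and the degraded constants stay bounded away from zero.

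The step I expect to be the main obstacle is the incoherence transfer {\bf C1}$\Rightarrow${\bf A1}, and within it the \emph{block} parameter $\gamma_b$: unlike the single-task Lasso incoherence feeding $\gamma_s$, it couples all $r$ tasks through a sum of $\ell_1$ norms, so the quantity to be controlled is a sum over tasks of $\ell_1$ norms of conditionally-Gaussian vectors. Obtaining the sharp $r\log 2+\log p$ dependence rather than a crude $\log(pr)$ requires carefully exploiting the independence across tasks and the $\ell_\infty$ geometry of the block norm instead of a naive entrywise union bound, and this is where I would concentrate the technical effort.
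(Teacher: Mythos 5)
There is a genuine gap, and it sits exactly where you predicted the main obstacle would be: the incoherence transfer {\bf C1}$\Rightarrow${\bf A1}. Your plan is to establish that the \emph{sample} incoherence parameters are at least $\gamma_s/2$ and $\gamma_b/2$ by writing $X^{(k)}_j = X^{(k)}_{\mathcal{U}_k}\bigl(\Sigma^{(k)}_{\mathcal{U}_k,\mathcal{U}_k}\bigr)^{-1}\Sigma^{(k)}_{\mathcal{U}_k,j}+E^{(k)}_j$ and concentrating the $\ell_1$ norm of the conditionally Gaussian fluctuation vector $E^{(k)\top}_j X^{(k)}_{\mathcal{U}_k}\bigl(\tr{X^{(k)}_{\mathcal{U}_k}}{X^{(k)}_{\mathcal{U}_k}}\bigr)^{-1}$. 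But that fluctuation is an $s$-dimensional vector each of whose coordinates has conditional standard deviation of order $1/\sqrt{nC_{min}}$, so its $\ell_1$ norm concentrates around a \emph{mean} of order $s/\sqrt{nC_{min}}$; the $\log p$ from the union bound only controls the deviation above this mean. Forcing $s/\sqrt{nC_{min}}\lesssim \gamma_s$ requires $n\gtrsim s^2/(C_{min}\gamma_s^2)$, which is quadratically worse than the theorem's stated requirement $n\gtrsim s\log(pr)/(C_{min}\gamma_s^2)$ (and similarly for the block condition). So the reduction to Theorem~\ref{noisysupportrecoverytheorem} via verified sample incoherence cannot yield the claimed sample complexity.

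The paper avoids this by never establishing sample incoherence. In Lemma~\ref{LemmaGaussDualProj} the primal--dual witness is re-run directly for the Gaussian design: the population coefficient $\Sigma^{(k)}_{j,\mathcal{U}_k}\bigl(\Sigma^{(k)}_{\mathcal{U}_k,\mathcal{U}_k}\bigr)^{-1}$ is used with {\bf C1} as is, and the fluctuation enters the dual variable $\tilde{z}^{(k)}_j$ only through the \emph{scalar} $\mathcal{R}^{(k)}_j=\frac{1}{n}\tr{R^{(k)}_j}{X^{(k)}_{\mathcal{U}_k}\bigl(\frac{1}{n}\tr{X^{(k)}_{\mathcal{U}_k}}{X^{(k)}_{\mathcal{U}_k}}\bigr)^{-1}\tilde{z}^{(k)}_{\mathcal{U}_k}}$, whose conditional variance is bounded by $\|\tilde{z}^{(k)}_{\mathcal{U}_k}\|_2^2/(nC_{min})\le s\lambda_s^2/(nC_{min})$. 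Because the $\ell_2$ norm of $\tilde{z}_{\mathcal{U}_k}$ appears here rather than an $\ell_1$ operator norm times $\|\tilde{z}_{\mathcal{U}_k}\|_\infty$, the resulting requirement is linear in $s$, which is the whole point of the Gaussian-design analysis (this is the same device as in \citet{WainwrightLasso} and \citet{NWJoint}). Your remaining ingredients --- the $\chi^2$ bound for {\bf A0}, the Davidson--Szarek eigenvalue bound for the curvature and for the $\frac{4s}{C_{min}\sqrt{n}}$ perturbation term in $g_{\min}$, and the final union bound --- do match the paper's Proposition~\ref{GaussianProp}, but the proof cannot be completed along the route you describe without degrading the sample complexity from $s$ to $s^2$.
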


\subsection{Quantifying the gain for $2$-Task Gaussian Designs}

This is one of the most important results of this paper. Here, we perform a more delicate and finer analysis to establish precise quantitative gains of our method. We focus on the special case where $r=2$ and the design matrix has rows generated from the standard Gaussian distribution $\mathcal{N}(0,I_{n\times n})$. As we will see both analytically and experimentally, our method strictly outperforms both Lasso and $\ell_1/\ell_\infty$-block-regularization over for all cases, except at the extreme endpoints of no support sharing (where it matches that of Lasso) and full support sharing (where it matches that of $\ell_1/\ell_\infty$). We now present our analytical results; the empirical comparisons are presented next in Section \ref{SecExperiments}. The results will be in terms of a particular rescaling of the sample size $n$ as 
\begin{equation}
\theta(n,p,s,\alpha):=\frac{n}{(2-\alpha)s\log\left(p-(2-\alpha)s\right)}.
\nonumber
\end{equation}
We also require that the regularizers satisfy

\noindent {\em {\bf F1}} \footnotesize$\displaystyle \lambda_s > \frac{\mysqrt{4\sigma^2 (1-\sqrt{s/n}) (\log(r)+\log(p-(2-\alpha)s))}} 
{\sqrt{n}- \sqrt{s}- \mysqrt{ (2 - \alpha) \; s\; (\log(r)+\log(p-(2-\alpha)s))}}\,$\normalsize.\\

\noindent {\em {\bf F2}} \footnotesize$\displaystyle \lambda_b > \frac{\mysqrt{4\sigma^2 (1 - \sqrt{s/n}) r (r \log(2) + \log (p-(2-\alpha)s))}} 
{\sqrt{n} - \sqrt{s} - \mysqrt{ (1-\alpha/2 )\; sr \; (r\log(2) + \log(p-(2-\alpha)s))}}\,$\normalsize. \\

\noindent {\em {\bf F3}} $\frac{\lambda_b}{\lambda_s}=\sqrt{2}$.\\

\begin{theorem}
Consider a $2$-task regression problem $(n,p,s,\alpha)$, where the design matrix has rows generated from the standard Gaussian distribution $\mathcal{N}(0,I_{n\times n})$.  Suppose \footnotesize $$\max_{j\in B^*}\Bigg|\left|\Theta^{*(1)}_j\right|-\left|\Theta^{*(2)}_j\right|\Bigg| \leq c\lambda_s,$$\normalsize where, $B^{*}$ is the submatrix of $\Theta^*$ with rows where both entries are non-zero and $c$ is a constant specified in Lemma~\ref{LemGapLambda}. Then the
estimate $\widehat{\ParMatrix}$ of the problem~\eqref{EqnDirtyEstNoisy} satisfies the following: 
\begin{itemize}
\item [(\textbf{Success})] Suppose the regularization coefficients satisfy ${\bf F1-F3}$. Further, assume that the number of samples scales as $\theta(n,p,s,\alpha) > 1$. Then, with probability at least $1-c_1\exp(-c_2n)$ for some positive numbers $c_1$ and $c_2$, we are guaranteed that $\widehat{\ParMatrix}$ satisfies the support-recovery and $\ell_\infty$ error bound conditions (a-b) in Theorem~\ref{noisysupportrecoverygaussiantheorem}.
\item [(\textbf{Failure})] If $\theta(n,p,s,\alpha) < 1$ there is no solution $(\hat{B},\hat{S})$ for any choices of $\lambda_s$ and $\lambda_b$ such that $\sgn\left(\support(\widehat{\ParMatrix})\right)=\sgn\left(\support(\TrueParMatrix)\right)$.
\end{itemize}
\label{twotasktheorem}
\end{theorem}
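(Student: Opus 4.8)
The plan is to handle the two directions of the theorem separately. The \textbf{Success} direction refines Theorem~\ref{noisysupportrecoverygaussiantheorem} for the standard Gaussian ensemble with $r=2$, tracking the constants sharply enough to reach the threshold $\theta > 1$; the \textbf{Failure} direction is a converse built on the necessary (primal--dual/KKT) conditions for correct signed support recovery, combined with an extreme-value analysis of Gaussian maxima over the $p-(2-\alpha)s$ irrelevant features.

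For Success I would first instantiate the population quantities of C1--C2 at $\Sigma^{(k)}=I$: since off-support covariances vanish, $\gamma_b=\gamma_s=1$, $C_{min}=1$, and $D_{max}=1$. The real work is concentration of the empirical analogues. I would show that under the stated scaling the empirical incoherence terms $\left\| \tr{X^{(k)}_j}{X^{(k)}_{\mathcal{U}_k}} \left( \tr{X^{(k)}_{\mathcal{U}_k}}{X^{(k)}_{\mathcal{U}_k}} \right)^{-1} \right\|_1$ stay uniformly below $1$ over all $j\in\mathcal{U}^c$; because each such term is, conditionally on the on-support columns, a Gaussian with variance $O(s/n)$, a union bound over the $p-(2-\alpha)s$ irrelevant features produces exactly the factor $\log(p-(2-\alpha)s)$ and the effective sparsity $(2-\alpha)s=|\mathcal{U}|$. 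Conditions F1--F2 are then the specialization of C3-1--C3-2 once these constants and the effective support size are inserted, and F3 ($\lambda_b/\lambda_s=\sqrt2$) is the $r=2$ choice balancing the two subgradient budgets. I would then invoke the primal--dual witness underlying Theorem~\ref{noisysupportrecoverytheorem}: construct $(\Bst,\Ss)$ supported on the true (row-)support, verify strict dual feasibility for both $\|\cdot\|_{1,\infty}$ and $\|\cdot\|_{1,1}$, and conclude no false inclusions plus the $g_{min}$ error bound. The gap hypothesis $\max_{j\in\Bst} | \, |\Theta^{*(1)}_j|-|\Theta^{*(2)}_j| \, | \le c\lambda_s$ (via Lemma~\ref{LemGapLambda}) is exactly what keeps the two entries of each shared row close enough that $B$ can carry the common magnitude while $S$ absorbs the residual difference within its subgradient, so the witness has the correct row pattern.

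For Failure I would argue by contraposition from the stationarity conditions. Suppose some $(\Bh,\Sh)$ with $\sgn(\support(\Bh+\Sh))=\sgn(\support(\TrueParMatrix))$ existed. Restricting the stationarity equations to $\mathcal{U}$ pins down the dual vector and the residual up to the unknown regularizers. Feasibility on $\mathcal{U}^c$ then demands that $\frac{1}{n}\tr{X^{(k)}_j}{\text{residual}^{(k)}}$ be dominated, for every irrelevant $j$, by the corresponding regularizer budget. For standard Gaussian design these correlations are conditionally zero-mean Gaussian with variance concentrating near $(1-s/n)$, so their maximum over the $p-(2-\alpha)s$ irrelevant columns concentrates at $\sqrt{2\log(p-(2-\alpha)s)}$. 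The decisive point is that the on-support conditions simultaneously force a \emph{lower} bound on the same budget proportional to $\sqrt{n/((2-\alpha)s)}$ (through the inverse Gram matrix on $\mathcal{U}$), and the ratio of these two requirements is governed precisely by $\theta(n,p,s,\alpha)$. When $\theta<1$ the feasibility bound cannot accommodate the Gaussian maximum no matter how $\lambda_s,\lambda_b$ are chosen, because the regularizers enter both sides and cancel in the binding ratio; hence signed support recovery fails with probability tending to one.

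The main obstacle is the Failure direction, for two intertwined reasons. First, it must hold \emph{uniformly over all} $(\lambda_s,\lambda_b)$, so I cannot merely exhibit one bad regularizer; the argument has to isolate a scale-invariant necessary condition in which the regularizers cancel, which requires carefully combining the two tasks' stationarity equations given the $\alpha$-overlap structure. Second, reaching the \emph{sharp} constant $\theta=1$ (rather than an order bound) demands two-sided control of the extreme value $\max_{j\in\mathcal{U}^c}$ of the correlation variables --- ideally via Gaussian comparison (Slepian / Sudakov--Fernique) to handle the mild correlations these variables inherit from sharing the same on-support projection --- together with matching concentration of the variance factor $(1-s/n)$ and of the inverse-Gram operator norm on $\mathcal{U}$. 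Tightening both the maximum and the normalizing curvature to first order is what pins the transition to exactly $(2-\alpha)s\log(p-(2-\alpha)s)$.
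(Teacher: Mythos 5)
Your high-level architecture matches the paper's: a primal--dual witness for \textbf{Success}, and for \textbf{Failure} a converse built from the necessary optimality conditions plus a lower bound on a Gaussian maximum over the $p-(2-\alpha)s$ irrelevant coordinates, with the regularizers cancelling. However, there is a genuine gap: you never identify the mechanism that actually produces the constant $2-\alpha$, and the shortcut you propose for Success does not reach it. The paper first decomposes $\bar{\Theta}=B^*+S^*$ by clipping each row at its $(d+1)$-st largest magnitude with $d=\lfloor\lambda_b/\lambda_s\rfloor$; the gap hypothesis via Lemma~\ref{LemGapLambda} then forces $\tilde{S}_j=0$ on every shared row, so both entries of such a row of $\tilde{B}$ attain the row maximum and the $\ell_1/\ell_\infty$ subgradient splits $\lambda_b$ as $\lambda_b/2$ on each. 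Hence $\|\tilde{Z}^{(k)}_{\mathcal{U}_k}\|_2^2=(1-\alpha)s\lambda_s^2+\alpha s\lambda_b^2/4=(1-\alpha/2)s\lambda_s^2$ at $\kappa=\sqrt{2}$, and it is this \emph{reduced dual energy on the support} --- not the cardinality $|\mathcal{U}|=(2-\alpha)s$ of the union of supports, as your ``effective sparsity'' heuristic suggests --- that makes the off-support dual variance sum to $(2-\alpha)s\lambda_s^2/n$ over the two tasks and yields the threshold (Lemma~\ref{ThresholdCertificateLemma} and Theorem~\ref{Thm3Gen}). By contrast, inserting $\gamma_s=\gamma_b=C_{min}=D_{max}=1$ into C3-1/C3-2 only reproduces Theorem~\ref{noisysupportrecoverygaussiantheorem}'s requirement, roughly $4s\log p$ for $r=2$, so F1--F3 and the $\theta>1$ scaling do not follow ``by specialization'' as you claim.

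The same omission is fatal in the Failure direction, where two further points matter. First, correct signed support recovery pins the dual on the support exactly only through the combinatorial structure of optimal solutions (Lemma~\ref{signBSnoisy}); your phrase ``pins down the dual up to the unknown regularizers'' silently relies on this. Second, to rule out \emph{all} choices of $\lambda_s,\lambda_b$ you must violate at least one of the \emph{two} dual feasibility constraints of Lemma~\ref{LemOptNoisy} --- $|\tilde{Z}_j^{(k)}|\le\lambda_s$ elementwise and $\|\tilde{Z}_j\|_1\le\lambda_b$ rowwise --- and then take $\min_{\kappa}\max\bigl(f(\kappa),\,2f(\kappa)/\kappa^2\bigr)$. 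Using only the elementwise constraint, an adversary taking $\kappa\to 1$ reduces the requirement to $(2-\tfrac{3}{2}\alpha)s\log(p-(2-\alpha)s)$, which is strictly below $(2-\alpha)s\log(p-(2-\alpha)s)$, so your argument would leave part of the claimed failure regime uncovered; $\kappa=\sqrt{2}$ is exactly the equalizer of the two constraints. A minor remark in your favor: for the standard Gaussian design the off-support correlations are conditionally i.i.d.\ across $j\in\mathcal{U}^c$, so the Slepian/Sudakov--Fernique machinery you anticipate is unnecessary.
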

{\bf Remark:} The assumption on the gap \footnotesize  $\Big|\left|\Theta^{*(1)}_j\right|-\left|\Theta^{*(2)}_j\right|\Big|\leq c\lambda_s$ \normalsize reflects the fact that we require that most values of $\Theta^*$ to be balanced on both tasks on the shared support. As we show in a more general theorem (Theorem~4) in Section~\ref{appendix:proof_thm3}, even in the case where the gap is large, the dependence of the sample scaling on the gap is quite weak.

\section{Simulation Results}\label{SecExperiments}
In this section, we provide some simulation results. First, using our synthetic data set, we investigate the consequences of Theorem 3 when we have $r=2$ tasks to learn. As we see, the empirical result verifies our theoretical guarantees. Next, we apply our method regression to a real datasets: a hand-written digit classification dataset with $r=10$ tasks (equal to the number of digits $0-9$). For this dataset, we show that our method outperforms both LASSO and $\ell_1/\ell_\infty$ practically. For each method, the parameters are chosen via cross-validation; see supplemental material for more details.

\subsection{Synthetic Data Simulation}
Consider a $r=2$-task regression problem of the form $(n,p,s,\alpha)$ as discussed in Theorem 3. For a fixed set of parameters $(n,s,p,\alpha)$, we generate $100$ instances of the problem. Then, we solve the same problem using our model, $\ell_1/\ell_\infty$ regularizer and LASSO by searching for penalty regularizer coefficients independently for each one of these programs to find the best regularizer by cross validation. After solving the three problems, we compare the signed support of the solution with the true signed support and decide whether or not the program was successful in signed support recovery. We describe these process in more details in this section.\\

\textbf{Data Generation}: We explain how we generated the data for our simulation here. We pick three different values of $p=128,256,512$ and let $s=\lfloor 0.1p\rfloor$. For different values of $\alpha$, we let $n=c\,s\log(p-(2-\alpha)s)$ for different values of $c$. We generate a random \underline{sign} matrix $\widetilde{\Theta}^*\in\mathbb{R}^{p\times 2}$ (each entry is either $0$, $1$ or $-1$) with column support size $s$ and row support size $(2-\alpha)s$ as required by Theorem 3. Then, we multiply each row by a real random number with magnitude greater than the minimum required for sign support recovery by Theorem 3. We generate two sets of matrices $X^{(1)}$, $X^{(2)}$ and $W$ and use one of them for training and the other one for cross validation (test), subscripted $\text{Tr}$ and $\text{Ts}$, respectively. Each entry of the noise matrices $W_{\text{Tr}},W_{\text{Ts}}\in\mathbb{R}^{n\times 2}$ is drawn independently according to $\mathcal{N}(0,\sigma^2)$ where $\sigma=0.1$. Each row of a design matrix $X^{(k)}_{\text{Tr}},X^{(k)}_{\text{Ts}}\in\mathbb{R}^{n\times p}$ is sampled, independent of any other rows, from $\mathcal{N}(0,\mathbf{I}_{2\times 2})$ for all $k=1,2$. Having $X^{(k)}$, $\bar{Theta}$ and $W$ in hand, we can calculate $Y_{\text{Tr}},Y_{\text{Ts}}\in\mathbb{R}^{n\times 2}$ using the model $y^{(k)}=X^{(k)}\theta^{(k)}+w^{(k)}$ for all $k=1,2$ for both train and test set of variables.\\

\textbf{Coordinate Descent Algorithm}: Given the generated data $X^{(k)}_{\text{Tr}}$ for $k=1,2$ and $Y_{\text{Tr}}$ in the previous section, we want to recover matrices $\Bh$ and $\Sh$ that satisfy \eqref{EqnDirtyEstNoisy}. We use the coordinate descent algorithm to numerically solve the problem (see Appendix~\ref{appendix:coordinate_descent}). The algorithm inputs the tuple $(X^{(1)}_{\text{Tr}}, X^{(2)}_{\text{Tr}}, Y_{\text{Tr}}, \lambda_s, \lambda_b, \epsilon, \underline{B}, \underline{S})$ and outputs a matrix pair $(\Bh,\Sh)$. The inputs $(\underline{B},\underline{S})$ are initial guess and can be set to zero. However, when we search for optimal penalty regularizer coefficients, we can use the result for previous set of coefficients $(\lambda_b,\lambda_s)$ as a good initial guess for the next coefficients $(\lambda_b+\xi,\lambda_s+\zeta)$. The parameter $\epsilon$ captures the stopping criterion threshold of the algorithm. We iterate inside the algorithm until the relative update change of the objective function is less than $\epsilon$. Since we do not run the algorithm completely (until $\epsilon=0$ works), we need to filter the small magnitude values in the solution $(\Bh,\Sh)$ and set them to be zero. \\

\textbf{Choosing penalty regularizer coefficients}: Dictated by optimality conditions, we have $1>\frac{\lambda_s}{\lambda_b}>\frac{1}{2}$. Thus, searching range for one of the coefficients is bounded and known. We set $\lambda_b=c\sqrt{\frac{rlog(p)}{n}}$ and search for $c\in[0.01,100]$, where this interval is partitioned logarithmic. For any pair $(\lambda_b,\lambda_s)$ we compute the objective function of $Y_{\text{Ts}}$ and $X^{(k)}_{\text{Ts}}$ for $k=1,2$ using the filtered $(\Bh,\Sh)$ from the coordinate descent algorithm. Then across all choices of $(\lambda_b,\lambda_s)$, we pick the one with minimum objective function on the test data. Finally we let $\hat{\Theta}=\text{Filter}(\Bh+\Sh)$ for $(\Bh,\Sh)$ corresponding to the optimal $(\lambda_b,\lambda_s)$.\\

\textbf{Performance Analysis}: We ran the algorithm for five different values of the overlap ratio $\alpha\in\{0.3, \frac{2}{3}, 0.8\}$ with three different number of features $p\in\{128, 256, 512\}$. For any instance of the problem $(n,p,s,\alpha)$, if the recovered matrix $\hat{\Theta}$ has the same sign support as the true $\bar{\Theta}$, then we count it as success, otherwise failure (even if one element has different sign, we count it as failure). 

As Theorem 3 predicts and Fig~\ref{fig1} shows, the right scaling for the number of oservations is $\frac{n}{s\log(p-(2-\alpha)s)}$, where all curves stack on the top of each other at $2-\alpha$. Also, the number of observations required by our model for true signed support recovery is always less than both LASSO and $\ell_1/\ell_\infty$ regularizer. Fig~\ref{fig1:alpha03} shows the probability of success for the case $\alpha=0.3$ (when LASSO is better than $\ell_1/\ell_\infty$ regularizer) and that our model outperforms both methods. When $\alpha=\frac{2}{3}$ (see Fig~\ref{fig1:alpha066}), LASSO and $\ell_1/\ell_\infty$ regularizer performs the same; but our model require almost $33\%$ less observations for the same performance. As $\alpha$ grows toward $1$, e.g. $\alpha=0.8$ as shown in Fig~\ref{fig1:alpha08}, $\ell_1/\ell_\infty$ performs better than LASSO. Still, our model performs better than both methods in this case as well.\\

\textbf{Scaling Verification}: To verify that the phase transition threshold changes linearly with $\alpha$ as predicted by Theorem 3, we plot the phase transition threshold versus $\alpha$. For five different values of $\alpha\in\{0.05,0.3,\frac{2}{3},0.8,0.95\}$ and three different values of $p\in\{128,256,512\}$, we find the phase transition threshold for our model, LASSO and $\ell_1/\ell_\infty$ regularizer. We consider the point where the probability of success in recovery of signed support exceeds $50\%$ as the phase transition threshold. We find this point by interpolation on the closest two points. Fig~\ref{fig2} shows that phase transition threshold for our model is always lower than the phase transition for LASSO and $\ell_1/\ell_\infty$ regularizer.\\

\begin{figure}[t]
\centering
\includegraphics[width=3.5in]{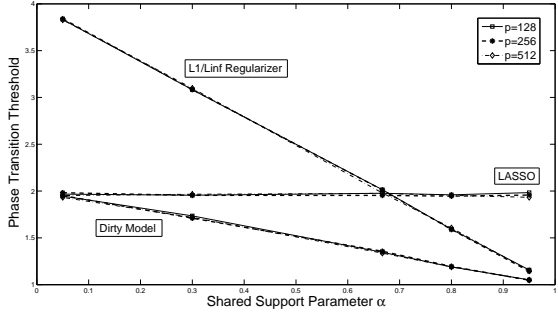}
\caption{\small Verification of the result of the Theorem 3 on the behavior of phase transition threshold by changing the parameter $\alpha$ in a $2$-task $(n,p,s,\alpha)$ problem for our method, LASSO and $\ell_1/\ell_\infty$ regularizer. The $y$-axis is $\frac{n}{s\log(p-(2-\alpha)s)}$, where $n$ is the number of samples at which threshold was observed. Here $s=\lfloor\frac{p}{10}\rfloor$. Our method shows a gain in sample complexity over the entire range of sharing $\alpha$. The pre-constant in Theorem \ref{twotasktheorem} is also validated.}
\label{fig2}
\end{figure}

\subsection{Handwritten Digits Dataset}
We use a handwritten digit dataset to illustrate the performance of our method. According to the description of the dataset, this dataset consists of features of handwritten numerals ($0$-$9$) extracted from a collection of Dutch utility maps \cite{DSet}. This dataset has been used by a number of papers \cite{BREDUITAXHAR,HENIY} as a reliable dataset for handwritten recognition algorithms.\\

\textbf{Structure of the Dataset}: In this dataset, there are 200 instances of handwritten digits $0$-$9$ (totally 2000 digits). Each instance of each digit is scanned to an image of the size $30\times 48$ pixels. This image is NOT provided by the dataset. Using the full resolution image of each digit, the dataset provides six different classes of features. A total of $649$ features are provided for each instance of each digit. The information about each class of features is provided in Table~\ref{ta1b}. The combined handwriting images of the record number $100$ is shown in Fig~\ref{exp1fig1} (ten images are concatenated together with a spacer between each two).\\

\begin{table*}
\centering
\begin{tabular}{|c||c|c|c|c|}
\hline
& \textbf{Feature} & \textbf{Size} & \textbf{Type} & \textbf{Dynamic Range}\\
\hline\hline
1 & Pixel Shape ($15\times 16$) & 240 & Integer & 0-6\\
\hline
2 & 2D Fourier Transform Coefficients & 74 & Real & 0-1\\
\hline
3 & Karhunen-Loeve Transform Coeficients & 64 & Real & -17:17\\
\hline
4 & Profile Correlation & 216 & Integer & 0-1400\\
\hline
5 & Zernike Moments & 46 & Real & 0-800\\
\hline 
&& 3 & Integer & 0-6 \\
\cline{3-5}
6 & Morphological Features & 1 & Real& 100-200\\
\cline{3-5}
&& 1 & Real & 1-3 \\
\cline{3-5}
&& 1 & Real & 1500-18000\\
\hline
\end{tabular}
\caption{Six different classes of features provided in the dataset. The dynamic ranges are approximate not exact. The dynamic range of different morphological features are completely different. For those $6$ morphological features, we provide their different dynamic ranges separately.}
\label{ta1b}
\end{table*}

\begin{figure}[t]
\centering
\includegraphics[width=3in]{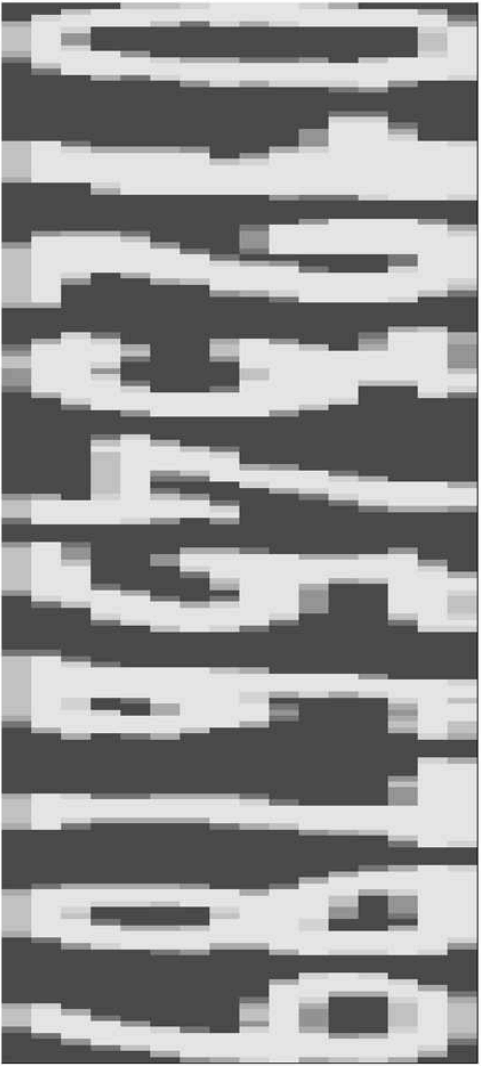}
\caption{An instance of images of the ten digits extracted from the dataset}
\label{exp1fig1}
\end{figure}

\textbf{Fitting the dataset to our model}: Regardless of the nature of the features, we have $649$ features for each of $200$ instance of each digit. We need to learn $K=10$ different tasks corresponding to ten different digits. To make the associated numbers of features comparable, we shrink the dynamic range of each feature to the interval $-1$ and $1$. We divide each feature by an appropriate number (perhaps larger than the maximum of that feature in the dataset) to make sure that the dynamic range of all features is a (not too small) subset of $[-1,1]$. Notice that in this division process, we don't care about the minimum and maximum of the training set. We just divide each feature by a fixed and predetermined number we provided as maximum in Table~\ref{ta1b}. For example, we divide the Pixel Shape feature by $6$, Karhunen-Loeve coefficients by $17$ or the last morphological feature by $18000$ and so on. We do not shift the data; we only scale it.

Out of $200$ samples provided for each digit, we take $n\leq 200$ samples for training. Let $X^{(k)}=X\in\mathbb{R}^{10n\times 649}$ for all $0\leq k\leq 9$ be the matrix whose first $n$ rows correspond to the features of the digit $0$, the second $n$ rows correspond to the features of the digit $1$ and so on. Consequently, we set the vector $y^{(k)}\in\{0,1\}^{10n}$ to be the vector such that $y^{(k)}_j=1$ if and only if the $j^{th}$ row of the feature matrix $X$ corresponds to the digit $k$. This setup is called binary classification setup.

We want to find a block-sparse matrix $\Bh\in
\mathbb{R}^{649\times 10}$ 
and a sparse matrix $\Sh\in\mathbb{R}^{649\times 10}$, so that for a given feature vector 
$\mathbf{x}\in\mathbb{R}^{649}$ extracted from the image of 
a handwritten digit $0\leq k^*\leq 9$, we 
ideally have $k^*=\arg\max_{0\leq k\leq 9}\mathbf{x}\left(\Bh+\Sh\right)$.

To find such matrices $\Bh$ and $\Sh$, we solve \eqref{EqnDirtyEstNoisy}. We tune the parameters $\lambda_b$ and $\lambda_s$ in order to get the best result by cross validation. Since we have $10$ tasks, we search for $\frac{\lambda_s}{\lambda_b}\in\left[\frac{1}{10},1\right]$ and let $\lambda_b=c\sqrt{\frac{2log(649)}{n}}\approx\frac{5c}{\sqrt{n}}$, where, empirically $c\in[0.01,10]$ is a constant to be searched.\\

\textbf{Performance Analysis}: Table~\ref{tab2} shows the results of our analysis for different sizes of the training set as $\frac{n}{200}$. We measure the classification error on the test set for each digit to get the $10$-vector of errors. Then, we find the average error and the variance of the error vector to show how the error is distributed over all tasks. We compare our method with $\ell_1/\ell_{\infty}$ reguralizer method and LASSO.\\

\begin{table*}
\centering
\scriptsize
\begin{tabular}{|c||c|c|c|c|}
\hline
\textbf{$\frac{n}{200}$}&&\textbf{Our Model} &\textbf{$\ell_1/\ell_\infty$}& \textbf{LASSO}\\
\hline\hline
5\% & Average Classification Error & 8.6\% & 9.9\% & 10.8\%\\ 
& Variance of Error & 0.53\% & 0.64\% & 0.51\%\\
& Average Row Support Size & $B$:165  $\qquad B+S$:171 & 170 & 123\\
& Average Support Size & $S$:18  $\qquad B+S$:1651 & 1700 & 539\\
\hline
10\% & Average Classification Error & 3.0\% & 3.5\% & 4.1\%\\ 
& Variance of Error & 0.56\% & 0.62\% & 0.68\%\\
& Average Row Support Size & $B$:211 $\qquad B+S$:226 & 217 & 173\\
& Average Support Size & $S$:34 $\qquad B+S$:2118 & 2165 & 821\\
\hline
20\% & Average Classification Error & 2.2\% & 3.2\% & 2.8\%\\ 
& Variance of Error & 0.57\% & 0.68\% & 0.85\%\\
& Average Row Support Size & $B$:270  $\qquad B+S$:299 & 368 & 354\\
& Average Support Size & $S$:67  $\qquad B+S$:2761 & 3669 & 2053\\
\hline
\end{tabular}
\caption{\small Simulation Results for our model, $\ell_1/\ell_\infty$ and LASSO.}
\label{tab2}
\end{table*}

\section{Proof Outline}
\noindent In this section we illustrate the proof outline of all three theorems as they are very similar in the nature. First, we introduce some notations and definitions and then, we provide a three step proof technique that we used to prove all three theorems.

\subsection{Definitions and Setup}
\label{appendix:definitions}
In this section, we rigorously define the terms and notation we used throughout the proofs. \\

\noindent \textbf{Notation}: For a vector $v$, the norms $\ell_1$, $\ell_2$ and $\ell_\infty$ are denoted as $\|v\|_1=\sum_{k}\left|v^{(k)}\right|$, $\|v\|_2=\sqrt{\sum_{k}\left|v^{(k)}\right|^2}$ and $\|v\|_\infty=\max_{k}\left|v^{(k)}\right|$, respectively. Also, for a matrix $Q\in\mathbb{R}^{p\times r}$, the norm $\ell_\zeta/\ell_\rho$ is denoted as $\|Q\|_{\rho,\zeta}=\|\left(\|q_1\|_{\zeta},\cdot\cdot\cdot,\|q_p\|_{\zeta}\right)\|_{\rho}$. The maximum singular value of $Q$ is denoted as $\lambda_{max}(Q)$. For a matrix $X\in\mathbb{R}^{n\times p}$ and a set of indices $\mathcal{U}\subseteq\{1,\cdot\cdot\cdot,p\}$, the matrix $X_{\mathcal{U}}\in\mathbb{R}^{n\times\left|\mathcal{U}\right|}$ represents the sub-matrix of $X$ consisting of $X_j$'s where $j\in\mathcal{U}$.\\

\subsubsection{Towards Identifying Optimal Solution}
\label{appendix:H_Transform}
This is a key step in our analysis. Our proof proceeds by choosing a pair $\widehat{B},\widehat{S}$ such that the signed support of $\widehat{B} +\widehat{S}$ is the same as that of $\bar{\Theta}$, and then certifying that, under our assumptions, this pair is the optimum of  the optimization problem~\eqref{EqnDirtyEstNoisy}. We construct this pair via a surrogate optimization problem -- dubbed {\em oracle problem} in the literature as well as our proof outline below -- which adds extra constraints to \eqref{EqnDirtyEstNoisy} in a way that ensures signed support recovery. Making the oracle problem is a key step in our proof. 

For \eqref{EqnDirtyEstNoisy}, let $d=\lceil\frac{\lambda_b}{\lambda_s}\rceil$; in this paper we will always have $1\leq d \leq r$, where we recall $r$ is the number of tasks. Using this $d$, we now define two matrices $\Bst,\Ss$, such that $\Bst + \Ss = \bar{\Theta}$, as follows. In each row $\bar{\Theta}_j$, let $v_j$ be the $(d+1)^{th}$ largest magnitude of the elements in $\Theta_j$. Then, the $(j,k)^{th}$ element $s^{*(k)}_j$ of the matrix $\Ss$ is defined as follows
\[
s^{*(k)}_j ~ = ~ \sgn(\theta^{(k)}_j) \max \left \{ 0, \left | \theta^{(k)}_j \right | - v_j \right \}
\]
In words, to obtain $\Ss$ we take the matrix $\bar{\Theta}$ and for each element  we {\em clip its magnitude} to be the {\em excess} over the $(d+1)^{th}$ largest magnitude in its row. We retain the sign.  Finally, define $\Bst = \bar{\Theta} - \Ss$ to be the residual. It is thus  clear that 
\begin{itemize}
\item $\Ss$ will have at most $d$ non-zero elements in each row.
\item Each row of $\Bst$ is either identically 0, or has at least $d$ non-zero elements. Also, in the latter case, at least $d$ of them have the same magnitude.
\item If any element $(j,k)$ is non-zero in both $\Ss$ and $\Bst$ then its sign is the same in both.
\end{itemize}
$\Ss$ thus takes on the role of the ``true sparse matrix", and $\Bst$ the role of the ``true block-sparse matrix". We will use $\Bst,\Ss$ to construct our oracle problem later. The pair also has the following significance: our results will imply that if we have infinite samples, then $\Bst,\Ss$ will be the solution to  \eqref{EqnDirtyEstNoisy}. 
 
\subsubsection{Sparse Matrix Setup}
For any matrix $S$, define $\support(S)=\{(j,k):s_{j}^{(k)}\neq 0\}$, and let $U_s=\{S\in\mathbb{R}^{p\times r}:\support(S)\subseteq\support(\Ss)\}$ be the subspace of matrices whose their support is the subset of the matrix $\Ss$. The orthogonal projection to the subspace $U_s$ can be defined as follows:
\begin{equation}
\left(P_{U_s}(S)\right)_{j,k}=\left\{
\begin{aligned}
&s_{j}^{(k)}&(j,k)\in\support(\Ss)\\
&0&\text{ow}.
\end{aligned}\right.
\nonumber
\end{equation}
We can define the orthogonal complement space of $U_s$ to be $U_s^c=\{S\in\mathbb{R}^{p\times r}:\support(S)\cap\support(\Ss)=\phi\}$. The orthogonal projection to this space can be defined as $P_{U_s^c}(S)=S-P_{U_s}(S)$. Since the type of the block-sparsity we consider is a block-sparsity assumption on the rows of matrices, we need to characterize the sparsity of the rows of the matrix $\Ss$. This motivates to define $D(S) = \max_{1\leq j\leq p} \|s_{j}\|_{0}$ denoting the maximum number of non-zero elements in any row of the sparse matrix $S$.\\

\subsubsection{Row-Sparse Matrix Setup}
\noindent For any matrix $B$, define $\rowsupport(B)=\{j:\exists k \quad\text{s.t.}\quad b_{j}^{(k)}\neq 0\}$, and let $U_b=\{B\in\mathbb{R}^{p\times r}:\rowsupport(B)\subseteq\rowsupport(\Bst)\}$ be the subspace of matrices whose their row support is the subset of the row support of the matrix $\Bst$. The orthogonal projection to the subspace $U_b$ can be defined as follows:
\begin{equation}
\left(P_{U_b}(B)\right)_j=\left\{
\begin{aligned}
&b_{j}&\qquad j\in\rowsupport(\Bst)\\
&\mathbf{0}&\qquad\text{ow}.
\end{aligned}\right.
\nonumber
\end{equation}
We can define the orthogonal complement space of $U_b$ to be $U_b^c=\{B\in\mathbb{R}^{p\times r}:\rowsupport(B)\cap\rowsupport(\Bst)=\phi\}$. The orthogonal projection to this space can be defined as $P_{U_b^c}(B)=B-P_{U_b}(B)$.

\noindent For a given matrix $B\in\mathbb{R}^{p\times r}$, let $M_j(B)=\{k: |b_{j}^{(k)}|=\|b_j\|_{\infty}>0\}$ be the set of indices that the corresponding elements achieve the maximum magnitude on the $j^{th}$ row with positive or negative signs. Also, let $M(B)=\min_{1\leq j\leq p}|M_{j}(B)|$ be the minimum number of elements who achieve the maximum in each row of the matrix $B$.\\ \\

The following technical lemma is useful in the proof of all three theorems.

\begin{lemma}
If $(B,S)=\mathcal{H}_d(\Theta)$ then
\begin{itemize}
\item [(P1)] $M(B)\geq d+1$ and $D(S)\leq d$.
\item [(P2)] $\sgn(s_j^{(k)})=\sgn(b_j^{(k)})$ for all $j\in\rowsupport(B)$ and $k\in M_j(B)$.
\item [(P3)] $s_j^{(k)}=0$ for all $j\in\rowsupport(B)$ and $k\notin M_j(B)$.
\end{itemize}
\label{Lem:Star_Properties}
\end{lemma}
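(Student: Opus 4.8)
The plan is to exploit the fact that $\mathcal{H}_d$ acts independently on each row of $\Theta$, so it suffices to fix a row index $j$ and analyze the single row $\Theta_j=(\theta_j^{(1)},\ldots,\theta_j^{(r)})$. First I would sort the magnitudes of that row's entries and recall that, by construction, $v_j$ is the $(d+1)^{\text{th}}$ largest among them. The whole argument then reduces to a two-case dichotomy for each coordinate $k$, comparing $|\theta_j^{(k)}|$ against the threshold $v_j$: if $|\theta_j^{(k)}|>v_j$ the soft-thresholding formula gives $s_j^{(k)}=\sgn(\theta_j^{(k)})(|\theta_j^{(k)}|-v_j)\neq 0$, and since $b_j^{(k)}=\theta_j^{(k)}-s_j^{(k)}$ this forces $b_j^{(k)}=\sgn(\theta_j^{(k)})\,v_j$; whereas if $|\theta_j^{(k)}|\le v_j$ we get $s_j^{(k)}=0$ and $b_j^{(k)}=\theta_j^{(k)}$. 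The structural picture that drops out is that every coordinate with $|\theta_j^{(k)}|\ge v_j$ contributes $|b_j^{(k)}|=v_j$, while every coordinate with $|\theta_j^{(k)}|<v_j$ contributes $|b_j^{(k)}|=|\theta_j^{(k)}|<v_j$; consequently $\|b_j\|_\infty=v_j$ whenever $v_j>0$, and $b_j=0$ exactly when $v_j=0$.

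From this dichotomy all three statements follow by reading off the two index sets $\{k:s_j^{(k)}\neq 0\}=\{k:|\theta_j^{(k)}|>v_j\}$ and $M_j(B)=\{k:|b_j^{(k)}|=\|b_j\|_\infty>0\}=\{k:|\theta_j^{(k)}|\ge v_j\}$ (the latter for $v_j>0$). For (P1) I would invoke the definition of the $(d+1)^{\text{th}}$ order statistic: at least $d+1$ entries of the row have magnitude $\ge v_j$, giving $|M_j(B)|\ge d+1$ on every nonzero row and hence $M(B)\ge d+1$; and at most $d$ entries have magnitude strictly greater than $v_j$, giving $\|s_j\|_0\le d$ and hence $D(S)\le d$. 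Property (P3) is then immediate: for $j\in\rowsupport(B)$ (so $v_j>0$) and $k\notin M_j(B)$ we have $|\theta_j^{(k)}|<v_j$, and the second case of the dichotomy forces $s_j^{(k)}=0$. For (P2), any $k\in M_j(B)$ with $s_j^{(k)}\neq 0$ lies strictly above threshold, where $s_j^{(k)}$ and $b_j^{(k)}$ both carry the sign $\sgn(\theta_j^{(k)})$, so their signs coincide.

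I expect the one genuinely delicate point — and the part I would write out most carefully — to be the handling of ties at the threshold value $v_j$. When several entries share the magnitude $v_j$, the ``$\ge v_j$'' versus ``$>v_j$'' distinction is precisely what separates the counts $d+1$ and $d$, so the order-statistic bookkeeping must keep strict and non-strict inequalities apart. Moreover, a coordinate with $|\theta_j^{(k)}|=v_j$ lies in $M_j(B)$ yet has $s_j^{(k)}=0$; here (P2) is to be read under the convention that a vanishing entry of $S$ creates no sign conflict (equivalently, the nonzero entries of $S$ restricted to $M_j(B)$ agree in sign with $B$). Finally I would dispose of the degenerate cases separately: a row with $v_j=0$ yields $b_j=0$ and so contributes no constraint to $M(B)$ (the minimum defining $M(B)$ being understood over $\rowsupport(B)$), and the padding convention $v_j=0$ when $d\ge r$ keeps the counting bounds valid. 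With these conventions fixed, the per-row analysis assembles immediately into the stated row-uniform bounds $M(B)\ge d+1$ and $D(S)\le d$.
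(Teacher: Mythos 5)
Your proposal is correct and follows the only natural route: the paper's own proof is the one-line remark that the lemma ``follows from the definition of $\mathcal{H}$,'' and your row-by-row case analysis against the threshold $v_j$ is exactly the unpacking of that definition, with the tie cases $|\theta_j^{(k)}|=v_j$ and the degenerate row $v_j=0$ handled appropriately (matching the paper's intent that (P2) concerns only entries nonzero in both $S$ and $B$, and that $M(B)$ is taken over $\rowsupport(B)$). Nothing further is needed.
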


\begin{proof}
The proof follows from the definition of $\mathcal{H}$.\\
\end{proof}

\subsection{Proof Overview}
The proofs of all three of our theorems follow a primal-dual witness technique, and consist of two steps, as detailed in this section. The first step
constructs a primal-dual witness candidate, and is common to all three theorems. The second step consists of showing that the candidate constructed in the first step is indeed 
a primal-dual witness. The theorem proofs differ in this second step, and show that under the respective conditions imposed in the theorems, the construction succeeds with high probability. 
These steps are as follows:\\

{\bf STEP 1:} Denote the true optimal solution pair $(B^*,S^*)=\mathcal{H}_d(\bar{\Theta})$ as defined in Section~\ref{appendix:H_Transform}, for $d=\lfloor\frac{\lambda_b}{\lambda_s}\rfloor$. See Lemma~\ref{Lem:Star_Properties} for basic properties of these matrices $B^*$ and $S^*$.

{\bf Primal Candidate:}
We can then design a candidate optimal solution $(\tilde{S},\tilde{B})$ with the desired sparsity pattern using a restricted support optimization problem, called \emph{oracle problem}:

\begin{equation}\label{EqnOracleEst}
\begin{aligned}
(\tilde{S},\tilde{B}) \in \arg\min_{S\in U_s,B\in U_b} &\frac{1}{2n} \sum_{k=1}^{r} \left\|y^{(k)} - X^{(k)}\left(s^{(k)} + b^{(k)}\right)\right\|_{2}^{2}\\ &\qquad+ \lambda_s \|S\|_{1,1} + \lambda_b \|B\|_{1,\infty}.\\
\end{aligned}
\end{equation}

{\bf Dual Candidate:}
We set $\EstDual_{\bigcup_{k=1}^r\mathcal{U}_k}$ as the subgradient of the optimal primal parameters of \eqref{EqnOracleEst} . Specifically, 
we set $$ \EstDual_{\bigcup_{k=1}^r\mathcal{U}_k} = \left(\EstDual_s\right)_{\bigcup_{k=1}^r\mathcal{U}_k} + \left(\EstDual_b\right)_{\bigcup_{k=1}^r\mathcal{U}_k},$$ where, $\EstDual_s = \lambda_s \sgn(\tilde{S})$, and for all $j\in\bigcup_{k=1}^r\mathcal{U}_k$,
\begin{equation}
(\tilde{z}_b)_{j}^{(k)}=\left\{
\begin{aligned}
&\frac{\lambda_b-\lambda_s\|\tilde{s}_j\|_0}{\left|M_{j}(\tilde{B})\right|-\|\tilde{s}_j\|_0}\sgn\left(\tilde{b}_{j}^{(k)}\right) \\ &\qquad\qquad\qquad k\in M_{j}(\tilde{B})\quad\&\quad (j,k)\notin\support(\tilde{S})\\
&\\
&0\qquad\qquad\qquad\qquad\qquad\qquad\text{ow}
\end{aligned}\right..\\
\nonumber
\end{equation}

\noindent To get an explicit form for $\EstDual_{\bigcap_{k=1}^r\mathcal{U}_k^c}$, let $\delpar = \tilde{B}+\tilde{S}-\Bst-\Ss$. From the optimality conditions for the oracle problem~\eqref{EqnOracleEst}, we have
\begin{equation}
\frac{1}{n}\tr{X^{(k)}_{\mathcal{U}_k}}{X^{(k)}_{\mathcal{U}_k}} \delpar^{(k)}_{\mathcal{U}_k} - \frac{1}{n} \left(X^{(k)}_{\mathcal{U}_k}\right)^T w^{(k)} + \tilde{z}^{(k)}_{\mathcal{U}_k} = 0.
\nonumber
\end{equation}
and consequently,
\begin{equation}
\delpar^{(k)}_{\mathcal{U}_k}=\left(\frac{1}{n}\tr{X^{(k)}_{\mathcal{U}_k}}{X^{(k)}_{\mathcal{U}_k}}\right)^{-1} \left(\frac{1}{n} \left(X^{(k)}_{\mathcal{U}_k}\right)^T w^{(k)} - \tilde{z}^{(k)}_{\mathcal{U}_k}\right).
\label{errmag}
\end{equation}

\noindent Solving for $\tilde{z}^{(k)}_{\bigcap_{k=1}^r\mathcal{U}^c_k}$, for all $j\in{\bigcap_{k=1}^r\mathcal{U}_k^c}$, we get
\begin{equation}
\tilde{z}^{(k)}_{j} = - \frac{1}{n}\tr{X^{(k)}_{j}}{X^{(k)}_{\mathcal{U}_k}} \delpar^{(k)}_{\mathcal{U}_k} + \frac{1}{n} \left(X^{(k)}_{j}\right)^T w^{(k)}.
\nonumber
\end{equation}
Substituting for the value of $\delpar^{(k)}_{\mathcal{U}_k}$, we get
\begin{equation}\label{EqnDualBarAssign}
\begin{aligned}
\tilde{z}^{(k)}_{j}\!\! &= \frac{1}{n} \left(X^{(k)}_{j}\right)^T\!\!\!w^{(k)} \!\!-\!\frac{1}{n}\tr{X^{(k)}_{j}}{X^{(k)}_{\mathcal{U}_k}}\! \left(\frac{1}{n}\tr{X^{(k)}_{\mathcal{U}_k}}{X^{(k)}_{\mathcal{U}_k}}\right)^{\!-1}\\ &\qquad\qquad\qquad\qquad\qquad\qquad\left(\frac{1}{n} \left(X^{(k)}_{\mathcal{U}_k}\right)^T w^{(k)} - \tilde{z}^{(k)}_{\mathcal{U}_k}\right).\\
\end{aligned}
\end{equation}

{\bf STEP 2:}
This step consists of showing that the pair $(\tilde{S},\tilde{B},\EstDual)$ constructed in the earlier step is actually a \emph{feasible} primal-dual pair of \eqref{EqnDirtyEstNoisy}.
This would then the required support-recovery result since the constructed primal candidate $\tilde{S},\tilde{B}$ had the required sparsity pattern by \emph{construction}.

We will make use of the following lemma that specifies a set of sufficient (stationary) optimality conditions for the $(\tilde{S},\tilde{B})$ from \eqref{EqnOracleEst} to be the unique solution of the (unrestricted) optimization problem~\eqref{EqnDirtyEstNoisy}:

\begin{lemma}\label{LemWitnessOptCond}
Under our (stationary) assumptions on the design matrices $X^{(k)}$, the matrix pair $(\tilde{S},\tilde{B})$ is the unique solution of the problem \eqref{EqnDirtyEstNoisy} if there exists a matrix $\EstDual\in\mathbb{R}^{p\times r}$ such that 
\begin{itemize}
\item [(C1)] $P_{U_s}(\EstDual)=\lambda_s\sgn\left(\tilde{S}\right)$.\\

\item [(C2)] $P_{U_b}(\EstDual)=\left\{ \begin{array}{cc} t_{j}^{(k)} \, \sgn\left(\tilde{b}^{(k)}_{j}\right), & k \in M_{j}(\Bst)\\ 0 & \text{o.w.}.
\end{array}\right.$, where, $t_{j}^{(k)} \ge 0$ such that $\sum_{k \in M_j(\Bst)} t_j^{(k)} = \lambda_b$.\\

\item [(C3)] $\left\|P_{U_s^c}(\EstDual)\right\|_{\infty,\infty} < \lambda_s$.\\

\item [(C4)] $\left\|P_{U_b^c}(\EstDual)\right\|_{\infty,1} < \lambda_b$.\\

\item [(C5)] $\frac{1}{n} \tr{X^{(k)}}{X^{(k)}} \left(\tilde{b}^{(k)}\!\!+\!\tilde{s}^{(k)}\right)\!\! -\! \frac{1}{n} (X^{(k)})^T\! y^{(k)}\!\! +\! \tilde{z}^{(k)}\! =\! 0$ $\quad\forall 1\leq k\leq r$.\\
\end{itemize}
\end{lemma}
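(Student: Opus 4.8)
The plan is to read this as a textbook convex (KKT / primal--dual witness) sufficiency argument. Since the objective in \eqref{EqnDirtyEstNoisy} is convex, I only need to exhibit a single matrix $\EstDual$ that lies in the subdifferential of the penalty at $(\AlgSolSparse,\AlgSolBlockSparse)$ and simultaneously cancels the gradient of the quadratic loss. The first thing I would record is that the loss $L(S,B)=\frac{1}{2n}\sum_k\|y^{(k)}-X^{(k)}(s^{(k)}+b^{(k)})\|_2^2$ depends on $S$ and $B$ only through the sum $\ParMatrix=S+B$, so $\nabla_{s^{(k)}}L=\nabla_{b^{(k)}}L=-\frac1n (X^{(k)})^T(y^{(k)}-X^{(k)}(\tilde s^{(k)}+\tilde b^{(k)}))$, and condition (C5) says exactly that $\EstDual^{(k)}$ equals this common negative gradient. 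I would then reinterpret the remaining four conditions as subgradient memberships: (C1) with (C3) say $\EstDual\in\partial(\regpar_s\regsparse{\cdot})(\AlgSolSparse)$ (correct signs on $\support(\AlgSolSparse)$, strictly dominated off it), while (C2) with (C4) say $\EstDual\in\partial(\regpar_b\regblocksparse{\cdot})(\AlgSolBlockSparse)$ (on each active row supported on the argmax set with matching signs and row-mass exactly $\regpar_b$, and strictly $\regblocksparsedual{\cdot}<\regpar_b$ on the inactive rows).

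With these readings in hand, optimality is immediate: since $\EstDual=-\nabla L$ is simultaneously a valid subgradient of both penalties, $0$ lies in the subdifferential of the full objective at $(\AlgSolSparse,\AlgSolBlockSparse)$. Concretely I would prove this by the subgradient inequality. For any competitor $(S,B)$, convexity of $L$ gives $L(S+B)-L(\AlgSolSparse+\AlgSolBlockSparse)\ge \langle-\EstDual,(S+B)-(\AlgSolSparse+\AlgSolBlockSparse)\rangle$, while the subgradient property of each penalty gives $\langle\EstDual,S-\AlgSolSparse\rangle\le \regpar_s(\regsparse{S}-\regsparse{\AlgSolSparse})$ and $\langle\EstDual,B-\AlgSolBlockSparse\rangle\le \regpar_b(\regblocksparse{B}-\regblocksparse{\AlgSolBlockSparse})$. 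Adding the three lines, the inner products telescope to zero and yield $f(S,B)\ge f(\AlgSolSparse,\AlgSolBlockSparse)$, so $(\AlgSolSparse,\AlgSolBlockSparse)$ is a global minimizer.

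The genuinely delicate part is uniqueness, and this is where the strict inequalities (C3)--(C4) and the curvature assumptions are spent. Taking any optimizer $(S',B')$, equality must hold in all three inequalities above. Equality in the $\regsparse{\cdot}$ line, combined with $|\EstDual_{j}^{(k)}|<\regpar_s$ off $\support(\AlgSolSparse)$, forces $S'\in\Sset$; equality in the $\regblocksparse{\cdot}$ line, combined with the strict row-margin on the inactive rows, forces $B'\in\Bset$. Equality in the loss line forces the second-order remainder $\frac{1}{2n}\sum_k\|X^{(k)}(\theta'^{(k)}-\tilde\theta^{(k)})\|_2^2$ to vanish, i.e. $X^{(k)}\theta'^{(k)}=X^{(k)}\tilde\theta^{(k)}$ for every task. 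I would then invoke the minimum-curvature (restricted invertibility) assumption to upgrade this equality of fitted values to equality of the parameter vectors on the relevant support, and finally appeal to the non-integrality of $\regpar_b/\regpar_s$ (Assumption {\bf A3-3}) to conclude that the split of the common sum into an $\regsparse{\cdot}$-piece and an $\regblocksparse{\cdot}$-piece is itself unique, giving $(S',B')=(\AlgSolSparse,\AlgSolBlockSparse)$.

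The main obstacle I anticipate is precisely this last stretch. The elementwise-support constraint on $S'$ and the row-support constraint on $B'$ only confine the difference $\theta'^{(k)}-\tilde\theta^{(k)}$ to the union support $\mathcal{U}=\bigcup_k\mathcal{U}_k$ rather than to the per-task set $\mathcal{U}_k$, so I must be careful to ensure the design is injective on the set actually carrying the difference, and to cleanly separate the block component from the elementwise component. This separation is exactly what the strict margins in (C3)--(C4) and the non-integral ratio are designed to guarantee, so the delicate bookkeeping is in showing those strictnesses suffice rather than in any new estimate.
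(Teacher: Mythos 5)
Your proposal is correct and follows essentially the same primal--dual witness strategy as the paper: conditions (C1)+(C3) and (C2)+(C4) certify that $\EstDual$ lies simultaneously in $\lambda_s\,\partial\|\tilde{S}\|_{1,1}$ and $\lambda_b\,\partial\|\tilde{B}\|_{1,\infty}$, (C5) is stationarity of the smooth part, and the strict dual inequalities force any other optimum into the restricted subspaces $\Sset$ and $\Bset$. The only real difference is in how that last implication is derived: the paper rewrites \eqref{EqnDirtyEstNoisy} as a saddle-point problem over the intersection of the two dual balls and uses complementary slackness ($\lambda_b\|\Bh\|_{1,\infty}=\tr{\EstDual}{\Bh}$ and $\lambda_s\|\Sh\|_{1,1}=\tr{\EstDual}{\Sh}$) to zero out the rows and entries where the dual bound is strict, whereas you telescope the three subgradient inequalities directly; the two arguments are equivalent. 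On uniqueness you are in fact more careful than the paper: the paper asserts the restricted problem is ``strictly convex'' because $\frac1n\tr{X^{(k)}_{\mathcal{U}_k}}{X^{(k)}_{\mathcal{U}_k}}$ is invertible, but since the loss depends on $(S,B)$ only through the sum this can only give uniqueness of $\widehat{\Theta}=\Sh+\Bh$, not of the split; your explicit appeal to the non-integrality of $\lambda_b/\lambda_s$ (which the paper isolates in Lemma~\ref{Lem:non_uniqueness}) is exactly what is needed to finish. The support-leakage caveat you flag --- that $B'\in\Bset$ constrains only the row support, so $\theta'^{(k)}-\tilde{\theta}^{(k)}$ may a priori live on $\rowsupport(\Bst)$ rather than on $\mathcal{U}_k$, and the curvature assumption as stated covers only $\mathcal{U}_k$ --- is a genuine subtlety, but it is one the paper's own proof elides in exactly the same way, so it does not count against your attempt.
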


\begin{proof}
By assumptions (C1) and (C3), $\frac{1}{\lambda_s}\EstDual\in\partial\|\tilde{S}\|_{1,1}$ and by assumptions (C2) and (C4), $\frac{1}{\lambda_b}\EstDual\in\partial\|\tilde{B}\|_{1,\infty}$. Thus, $(\tilde{S},\tilde{B},\EstDual)$ is a feasible primal-dual pair of \eqref{EqnDirtyEstNoisy} according to the Lemma~\ref{LemOptNoisy}.\\

\noindent Let $\mathbb{B}$ and $\mathbb{S}$ to be balls of $\ell_\infty/\ell_1$ and $\ell_\infty/\ell_\infty$ with radiuses $\lambda_b$ and $\lambda_s$, respectively. Considering the fact that $\lambda_b\|B\|_{1,\infty}=\sup_{Z\in\mathbb{B}}\tr{Z}{B}$ and $\lambda_s\|S\|_{1,1}=\sup_{Z\in\mathbb{S}}\tr{Z}{S}$, the problem~\eqref{EqnDirtyEstNoisy} can be written as
\begin{equation}
\begin{aligned}
(\Sh,\Bh)=\arg\inf_{S,B}\sup_{Z\in\mathbb{B}\cap\mathbb{S}} \Bigg\{&\frac{1}{2n}\sum_{k=1}^{r}\left\|y^{(k)}\!\!-\!X^{(k)}\left(b^{(k)}\!\!+\!s^{(k)}\right)\right\|_2^2\\ &\,\,\,\quad\qquad\qquad+\tr{Z}{S}+\tr{Z}{B}\Bigg\}.
\end{aligned}
\nonumber
\end{equation}
This saddle-point problem is strictly feasible and convex-concave. Given any dual variable, in particular $\EstDual$, and any primal optimal $(\Sh,\Bh)$ we have $\lambda_b\|\Bh\|_{1,\infty}=\tr{\EstDual}{\Bh}$ and $\lambda_s\|\Sh\|_{1,1}=\tr{\EstDual}{\Sh}$. This implies that $\1b=\mathbf{0}$ if $\left\|\tilde{z}_j\right\|_{1}<\lambda_b$ (because $\lambda_b\sum_j\|\1b\|_\infty\leq\sum_j\left\|\tilde{z}_j\right\|_{1}\|\1b\|_\infty$ and if $\|\tilde{z}_{j_0}\|_1<\lambda_b$ for some $j_0$, then others can not compensate for that in the sum due to the fact that $\EstDual\in\mathbb{B}$, i.e., $\|\tilde{z}_j\|_1\leq\lambda_b$). It also implies that $\Bs=0$ if $\left|\tilde{z}_j^{(k)}\right|<\lambda_s$ for a similar reason. Hence, $P_{U_b^c}(\hat{B})=0$ and $P_{U_s^c}(\hat{S})=0$. This means that solving the restricted problem~\eqref{EqnOracleEst} is equivalent to solving the problem~\eqref{EqnDirtyEstNoisy}.\\

The uniqueness follows from our (stationary) assumptions on design matrices $X^{(k)}$ that the matrix $\frac{1}{n}\tr{X^{(k)}_{\mathcal{U}_k}}{X^{(k)}_{\mathcal{U}_k}}$ is invertible for all $1\leq k\leq r$. Using this assumption, the problem~\eqref{EqnOracleEst} is \emph{strictly} convex and the solution is unique. Consequently, the solution of \eqref{EqnDirtyEstNoisy} is also unique, since we showed that these two problems are equivalent. This concludes the proof of the lemma.\\
\end{proof}
\vspace{0.5cm}

By construction, the primal-dual pair $(\tilde{B},\tilde{S},\EstDual)$  satisfies the (C1), (C2) and (C5) conditions in Lemma~\ref{LemWitnessOptCond}. \emph{It only remains to guarantee (C3) and (C4) separately for each of the theorems.} 

\noindent Indeed, this is where the proofs of the theorems differ. Specifically, Lemmas~\ref{problemma}, \ref{LemmaGaussDualProj} and \ref{ThresholdCertificateLemma} ensure these conditions are satisfied with given sample complexities in Theorems 1, 2 and 3, respectively.

\section{Proofs} 

The proofs of our three main theorems are in sections~\ref{appendix:proof_thm1}, \ref{appendix:proof_thm2} and \ref{appendix:proof_thm3} respectively.

\subsection{Proof of Theorem 1}
\label{appendix:proof_thm1}
Let $d=\lfloor\frac{\lambda_b}{\lambda_s}\rfloor$ and $(\Bst,\Ss)=\mathcal{H}_d(\bar{\Theta})$. Then, the result follows from Proposition~\ref{Th1} below.\\

\begin{proposition}[Structure Recovery]\label{Th1}
Under assumptions of Theorem 1, with probability $1-c_1\exp(-c_2n)$ for some positive constants $c_1$ and $c_2$, we are guaranteed that the following properties hold:\\

\begin{itemize}
\item [(P1)] Problem \eqref{EqnDirtyEstNoisy} has unique solution $(\Sh,\Bh)$ such that $\support(\Sh)\subseteq\support(\Ss)$ and $\rowsupport(\Bh)\subseteq\rowsupport(\Bst)$.\\

\item [(P2)] $\left\|\Bh+\Sh-\Bst-\Ss\right\|_\infty\leq\underbrace{\sqrt{\frac{4\sigma^2\log\left(pr\right)}{C_{min}n}}+\lambda_sD_{max}}_{b_{\min}}$.\\

\item [(P3)] $\sgn\left(\support(\As)\right)=\sgn\left(\support(\ssj)\right)$ 

for all $j\notin\rowsupport(\Bst)$ provided that $$\displaystyle\min_{j\notin\rowsupport(\Bst)\atop{(j,k)\in\support(\Ss)}}  \left|\ssjk\right|>b_{\min}.$$\\

\item [(P4)] $\sgn\left(\support(\As+\1b)\right)=\sgn\left(\support(\ssj+\bsj)\right)$  

for all $j\in\rowsupport(\Bst)$ provided that $$\displaystyle\min_{(j,k)\in\support(\Bst)}\left|\bsjk+\ssjk\right|>b_{\min}.$$\\
\end{itemize}
\end{proposition}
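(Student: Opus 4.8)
The plan is to follow the primal--dual witness recipe laid out in the Proof Overview. First I would set $d=\lfloor \lambda_b/\lambda_s\rfloor$ and take $(\Bst,\Ss)=\mathcal{H}_d(\bar\Theta)$, whose structural properties are recorded in Lemma~\ref{Lem:Star_Properties}. I construct the primal candidate $(\tilde S,\tilde B)$ as the solution of the oracle problem~\eqref{EqnOracleEst}, which is strictly convex---and hence has a unique solution---because assumption {\bf A2} makes each $\frac1n\tr{X^{(k)}_{\mathcal U_k}}{X^{(k)}_{\mathcal U_k}}$ invertible; by construction $\support(\tilde S)\subseteq\support(\Ss)$ and $\rowsupport(\tilde B)\subseteq\rowsupport(\Bst)$. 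I then build the dual witness $\EstDual$ exactly as in Step~1, so that conditions (C1), (C2) and (C5) of Lemma~\ref{LemWitnessOptCond} hold automatically. If I can further certify the strict dual-feasibility conditions (C3) and (C4), then Lemma~\ref{LemWitnessOptCond} yields that $(\tilde S,\tilde B)$ is the \emph{unique} optimum of~\eqref{EqnDirtyEstNoisy}; setting $(\Sh,\Bh)=(\tilde S,\tilde B)$ immediately gives the no-false-inclusion statement (P1).

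For (P2) I would use the closed form of the on-support error from~\eqref{errmag}, namely $\delpar^{(k)}_{\mathcal U_k}=(\frac1n\tr{X^{(k)}_{\mathcal U_k}}{X^{(k)}_{\mathcal U_k}})^{-1}(\frac1n (X^{(k)}_{\mathcal U_k})^T w^{(k)}-\tilde z^{(k)}_{\mathcal U_k})$, and bound its $\ell_\infty$ norm by a triangle inequality into a noise term and a regularization term. The noise term is controlled using the curvature bound $C_{\min}$ from {\bf A2} together with a Gaussian maximal inequality over the at most $pr$ coordinates, producing the $\sqrt{4\sigma^2\log(pr)/(nC_{\min})}$ contribution; the regularization term is bounded by $\|(\frac1n\tr{X^{(k)}_{\mathcal U_k}}{X^{(k)}_{\mathcal U_k}})^{-1}\|_{\infty,1}\,\|\tilde z^{(k)}_{\mathcal U_k}\|_\infty\le D_{\max}\lambda_s$, since every on-support dual entry has magnitude at most $\lambda_s$ (the witness lies in the $\ell_\infty/\ell_\infty$ ball of radius $\lambda_s$). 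Because $\Bh+\Sh$ and $\bar\Theta$ agree (both vanish) off the support, $\|\Bh+\Sh-\Bst-\Ss\|_\infty=\max_k\|\delpar^{(k)}_{\mathcal U_k}\|_\infty\le b_{\min}$, which is exactly (P2).

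Parts (P3) and (P4) then follow from (P2) by the standard ``signal dominates the perturbation'' argument. For a row $j\notin\rowsupport(\Bst)$ the block candidate vanishes, so $\hat\theta^{(k)}_j=\tilde s^{(k)}_j$; combining the no-false-inclusion guarantee (zeros stay zero) with the bound $|\tilde s^{(k)}_j-\ssjk|\le b_{\min}$ shows that any entry with $|\ssjk|>b_{\min}$ keeps its correct nonzero sign, giving (P3). For a row $j\in\rowsupport(\Bst)$ I would apply (P2) to the combined entry $\hat\theta^{(k)}_j=\tilde b^{(k)}_j+\tilde s^{(k)}_j$, whose distance from $\bar\theta^{(k)}_j=\bsjk+\ssjk$ is at most $b_{\min}$; the minimum-magnitude hypothesis then preserves the sign of every true support entry, yielding (P4).

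The crux---and the step I expect to be the main obstacle---is certifying (C3) and (C4) with high probability (this is the content of Lemma~\ref{problemma}). Here I would substitute the on-support error~\eqref{errmag} into the off-support dual formula~\eqref{EqnDualBarAssign}, writing each $\tilde z^{(k)}_j$ as a \emph{noise projection} term $\frac1n(X^{(k)}_j)^T w^{(k)}$ minus an \emph{incoherence} term that couples $X^{(k)}_j$ to the on-support block through $\tr{X^{(k)}_j}{X^{(k)}_{\mathcal U_k}}(\ldots)^{-1}$. The incoherence assumption {\bf A1} bounds the latter by $(1-\gamma_s)\lambda_s$ entrywise (for (C3)) and by $(1-\gamma_b)\lambda_b$ after the row-wise $\ell_1$ sum over tasks (for (C4)), while column-normalization {\bf A0} and a union bound over the at most $pr$ Gaussian noise terms keep the noise projection below $\tfrac{\gamma_s}{2}\lambda_s$, respectively $\tfrac{\gamma_b}{2}\lambda_b$. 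The regularizer floors {\bf A3-1} and {\bf A3-2} are calibrated precisely so that the sum of the two contributions is \emph{strictly} below $\lambda_s$ and $\lambda_b$, delivering the strict inequalities on the event of probability $1-c_1\exp(-c_2 n)$. The delicacy lies in getting the constants right and in handling the witness construction itself---the weights defining $\tilde z_b$ are well defined and the optimum unique precisely because $\lambda_b/\lambda_s$ is non-integer, as flagged by {\bf A3-3} and Lemmas~\ref{signBSnoisy} and~\ref{Lem:non_uniqueness}.
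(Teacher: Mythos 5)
Your proposal is correct and follows essentially the same route as the paper: the primal--dual witness built from the oracle problem, Lemma~\ref{LemWitnessOptCond} with the probabilistic certification of (C3)--(C4) via incoherence plus Gaussian concentration (the paper's Lemma~\ref{problemma}), the bound \eqref{errmag} split into a noise piece and a $\lambda_s D_{\max}$ piece for (P2), and signal-dominance for (P3)--(P4). The only point left implicit is that (C3) and (C4) also have a purely deterministic component on the rows in $\rowsupport(\Bst)$ and in $\bigcup_k\mathcal{U}_k\setminus\rowsupport(\Bst)$, namely $\frac{\lambda_b-\lambda_s\|\tilde{s}_j\|_0}{|M_j(\tilde{B})|-\|\tilde{s}_j\|_0}<\lambda_s$ and $\lambda_s\|\tilde{s}_j\|_0<\lambda_b$, which follow from {\bf A3-3} and the properties of $\mathcal{H}_d$ exactly as you gesture at, so this is a matter of explicitness rather than a gap.
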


\begin{proof}
We prove the result separately for each part.
\begin{itemize}
\item [(P1)] Considering the constructed primal-dual pair, it suffices to show that (C3) and (C4) in Lemma~\ref{LemWitnessOptCond} are satisfied with high probability. By Lemma~\ref{problemma}, with probability at least $1-c_1\exp(-c_2n)$ those two conditions hold and hence, $(\Sh,\Bh)=(\tilde{S},\tilde{B})$ is the unique solution of (\ref{EqnDirtyEstNoisy}) and the property (P1) follows.\\

\item [(P2)] Using (\ref{errmag}), we have
\footnotesize\begin{equation}
\begin{aligned}
\max_{j\in\mathcal{U}_k}\left|\delpar_j^{(k)}\right|& \leq\left\|\left(\frac{1}{n}\tr{X^{(k)}_{\mathcal{U}_k}}{X^{(k)}_{\mathcal{U}_k}}\right)^{\!\!-1} \!\!\frac{1}{n}\! \left(X^{(k)}_{\mathcal{U}_k}\right)^T\!\!\! w^{(k)}\right\|_\infty\\ &\qquad\qquad+ \left\|\left(\frac{1}{n}\tr{X^{(k)}_{\mathcal{U}_k}}{X^{(k)}_{\mathcal{U}_k}}\right)^{-1}\tilde{z}^{(k)}_{\mathcal{U}_k}\right\|_\infty\\
&\leq\sqrt{\frac{4\sigma^2\log\left(pr\right)}{C_{min}n}}+\lambda_sD_{max},
\end{aligned}
\nonumber
\end{equation}\normalsize
where, the second inequality holds with high probability as a result of Lemma~\ref{NoiseInf} for $\alpha=\epsilon\sqrt{\frac{4\sigma^2\log\left(pr\right)}{C_{min}n}}$ for some $\epsilon>1$, considering the fact that $\text{Var}\left(\delpar_j^{(k)}\right)\leq\frac{\sigma^2}{C_{min}n}$.\\

\item [(P3)] Using (P1) in Lemma~\ref{signBSnoisy}, this event is equivalent to the event that for all $j\notin\rowsupport(\Bst)$ with $(j,k)\in\support(\Ss)$, we have $\left(\delpar_j^{(k)}+\ssjk\right)\sgn\left(\ssjk\right)>0$. By Hoeffding inequality, we have
\small\begin{equation}
\begin{aligned}
&\mathbb{P}\left[\left(\delpar_j^{(k)}+\ssjk\right)\sgn\left(\ssjk\right)>0\right]\\ &\qquad\qquad\qquad\qquad=\mathbb{P}\Bigg[-\delpar_j^{(k)}\sgn\left(\ssjk\right)<\left|\ssjk\right|\Bigg]\\
&\qquad\qquad\qquad\qquad\geq \mathbb{P}\Bigg[\left|\delpar_j^{(k)}\right|<\left|\ssjk\right|\Bigg].
\end{aligned}
\nonumber
\end{equation}\normalsize
By part (P2), this event happens with high probability if $\displaystyle\min_{j\notin\rowsupport(\Bst)\atop{(j,k)\in\support(\Ss)}} \left|\ssjk\right|>b_{\min}$.\\

\item [(P4)] Using (P1) in Lemma~\ref{signBSnoisy}, this event is equivalent to the event that for all $j\in\rowsupport(\Bst)$, we have $\left(\delpar_j^{(k)}+\bsjk+\ssjk\right)\sgn\left(\bsjk+\ssjk\right)>0$. By Hoeffding inequality, we have
\small\begin{equation}
\begin{aligned}
&\mathbb{P}\left[\left(\delpar_j^{(k)}+\bsjk+\ssjk\right)\sgn\left(\bsjk+\ssjk\right)>0\right]\\ &\qquad=\mathbb{P}\Bigg[-\delpar_j^{(k)}\sgn\left(\bsjk+\ssjk\right)<\left|\bsjk+\ssjk\right|\Bigg]\\
&\qquad\geq \mathbb{P}\Bigg[\left|\delpar_j^{(k)}\right|<\left|\bsjk+\ssjk\right|\Bigg].
\end{aligned}
\nonumber
\end{equation}\normalsize
By part (P2), this event happens with high probability if $\displaystyle\min_{(j,k)\in\support(\Bst)}\left|\bsjk+\ssjk\right|>b_{\min}$.
\end{itemize}
\end{proof}

\begin{lemma}
Under conditions of Proposition~\ref{Th1}, the conditions (C3) and (C4) in Lemma~\ref{LemWitnessOptCond} hold for the constructed primal-dual pair with probability at least $1-c_1\exp(-c_2n)$ for some positive constants $c_1$ and $c_2$.
\label{problemma}
\end{lemma}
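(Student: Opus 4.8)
The goal is to verify conditions (C3) and (C4) of Lemma~\ref{LemWitnessOptCond} for the constructed primal-dual witness $(\tilde{S},\tilde{B},\EstDual)$, namely $\|P_{\Scomp}(\EstDual)\|_{\infty,\infty} < \lambda_s$ and $\|P_{\Bcomp}(\EstDual)\|_{\infty,1} < \lambda_b$. The starting point is the explicit formula~\eqref{EqnDualBarAssign} for $\tilde{z}^{(k)}_j$ with $j \in \bigcap_k \mathcal{U}_k^c$, which decomposes each off-support dual entry into a \emph{noise term} and a \emph{signal (correlation) term}:
$$
\tilde{z}^{(k)}_{j} = \underbrace{\frac{1}{n}(X^{(k)}_j)^T\left(I - X^{(k)}_{\mathcal{U}_k}\bigl(\tfrac{1}{n}\tr{X^{(k)}_{\mathcal{U}_k}}{X^{(k)}_{\mathcal{U}_k}}\bigr)^{-1}\tfrac{1}{n}(X^{(k)}_{\mathcal{U}_k})^T\right)w^{(k)}}_{\text{noise}} + \underbrace{\frac{1}{n}\tr{X^{(k)}_{j}}{X^{(k)}_{\mathcal{U}_k}}\bigl(\tfrac{1}{n}\tr{X^{(k)}_{\mathcal{U}_k}}{X^{(k)}_{\mathcal{U}_k}}\bigr)^{-1}\tilde{z}^{(k)}_{\mathcal{U}_k}}_{\text{signal}}.
$$
First I would bound the signal term using the incoherence assumptions {\bf A1}. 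For condition (C4), I would sum $\|\cdot\|_1$ over tasks $k$ and use the definition of $\gamma_b$: since $\|\tilde{z}^{(k)}_{\mathcal{U}_k}\|_\infty \leq \lambda_b$ by construction of the dual variable on the support, the incoherence condition guarantees the signal contribution to $\|P_{\Bcomp}(\EstDual)\|_{\infty,1}$ is at most $(1-\gamma_b)\lambda_b$, leaving a slack of $\gamma_b \lambda_b$ for the noise. For condition (C3), I would likewise use $\gamma_s$ to bound the signal term by $(1-\gamma_s)\lambda_s$, leaving slack $\gamma_s \lambda_s$.

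The remaining work is a high-probability bound on the noise term. Each noise entry is a zero-mean Gaussian (conditioned on the design) obtained by projecting $w^{(k)} \sim \mathcal{N}(0,\sigma^2 I)$ through an orthogonal-projection-type matrix; by column normalization {\bf A0} its variance is at most $2\sigma^2/n$. The plan is to apply a standard Gaussian tail bound together with a union bound over all $j \in \mathcal{U}^c$ and all $k = 1,\dots,r$ (at most $pr$ terms), which produces the $\sqrt{\log(pr)}$ factor. Choosing the threshold so that each noise entry is below $\tfrac{\gamma_s}{2-\gamma_s}\cdot\tfrac{\lambda_s}{\text{const}}$ (respectively the analogous bound for the block case) will hold with probability $1 - c_1\exp(-c_2 n)$, and the regularizer lower bounds {\bf A3-1} and {\bf A3-2} are precisely engineered so that $\gamma_s\lambda_s$ and $\gamma_b\lambda_b$ dominate these noise contributions. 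Combining the signal and noise bounds then yields the strict inequalities in (C3) and (C4).

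A subtlety I would handle carefully is that the dual entries $\tilde{z}^{(k)}_{\mathcal{U}_k}$ on the support are themselves partly random (through the sign pattern and the block dual values $(\tilde{z}_b)_j^{(k)}$ defined via $M_j(\tilde{B})$), so I must verify the bound $\|\tilde{z}^{(k)}_{\mathcal{U}_k}\|_\infty \leq \lambda_b$ holds deterministically from the construction before feeding it into the incoherence step; this is where assumption {\bf A3-3} (that $\lambda_b/\lambda_s$ is not an integer, so $d = \lfloor \lambda_b/\lambda_s\rfloor$ is well-behaved and the block dual weights are nonnegative and sum to $\lambda_b$) enters. The main obstacle is controlling the noise term uniformly: because the projection matrix couples the off-support column $X^{(k)}_j$ with the on-support block, one cannot treat entries as independent across $j$, so the concentration argument must bound the operator-norm / variance of the projected Gaussian correctly and then union-bound, relying on $C_{min}$ from {\bf A2} to control the inverse $(\tfrac{1}{n}\tr{X^{(k)}_{\mathcal{U}_k}}{X^{(k)}_{\mathcal{U}_k}})^{-1}$. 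Once the per-entry Gaussian tail bound with the right variance is established, the union bound and the choice of $\lambda_s,\lambda_b$ make the rest routine.
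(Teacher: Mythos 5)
Your proposal follows essentially the same route as the paper's proof: split the off-support dual entries from \eqref{EqnDualBarAssign} into a signal part bounded deterministically by $(1-\gamma_s)\lambda_s$ (resp.\ $(1-\gamma_b)\lambda_b$ after summing over tasks) via {\bf A1}, and a Gaussian noise part controlled by a tail bound plus a union bound over $pr$ entries, with {\bf A3-1}/{\bf A3-2} sized so the noise fits in the remaining $\gamma_s\lambda_s$ (resp.\ $\gamma_b\lambda_b$) slack; the paper folds the noise through the incoherence factor to get a $(2-\gamma)$ coefficient and invokes Lemma~\ref{NoiseInf}, whereas you bound the projected noise directly, but this is a cosmetic difference.

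One incompleteness worth fixing: conditions (C3) and (C4) must be verified on \emph{all} of $\Scomp$ and $\Bcomp$, not only on the rows $j\in\bigcap_k\mathcal{U}_k^c$ where \eqref{EqnDualBarAssign} applies. In particular, (C3) also covers entries with $j\in\rowsupport(\Bst)$ but $(j,k)\notin\support(\Ss)$, where the constructed dual equals $\frac{\lambda_b-\lambda_s\|\tilde{s}_j\|_0}{|M_j(\tilde{B})|-\|\tilde{s}_j\|_0}$ and must be shown strictly below $\lambda_s$; and (C4) also covers rows $j\in\bigcup_k\mathcal{U}_k\setminus\rowsupport(\Bst)$, where $\|\tilde{z}_j\|_1=\lambda_s\|\tilde{s}_j\|_0$ must be shown strictly below $\lambda_b$. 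Both follow deterministically from $M(\tilde B)\geq d+1$, $D(\tilde S)\leq d$ and {\bf A3-3} — i.e., from exactly the facts you invoke in your ``subtlety'' paragraph to bound $\|\tilde{z}^{(k)}_{\mathcal{U}_k}\|_\infty$ — but they are separate cases of (C3)/(C4) and should be checked explicitly, as the paper does at the start of each half of its proof.
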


\begin{proof}
First, we need to bound the projection of $\EstDual$ into the space $U_s^c$. Notice that
\footnotesize\begin{equation}
\left|\left(P_{U_s^c}(\EstDual)\right)_j^{(k)}\right|=\left\{
\begin{aligned}
&\frac{\lambda_b-\lambda_s\|\tilde{s}_j\|_0}{\left|M_{j}(\tilde{B})\right|-\|\tilde{s}_j\|_0}\\ &\quad j\in\rowsupport(\tilde{B})\;\&\; (j,k)\notin\support(\tilde{S})\\
&\\
&\left|\tilde{z}^{(k)}_{j}\right|\qquad\qquad j\in\bigcap_{k=1}^r\mathcal{U}_{k}^c\\
&\\
&0\qquad\qquad\qquad\qquad\text{ow}.\\
\end{aligned}\right..
\nonumber
\end{equation}\normalsize
By our assumption on the ratio of the penalty regularizer coefficients, we have $\frac{\lambda_b-\lambda_s\|\tilde{s}_j\|_0}{\left|M_{j}(\tilde{B})\right|-\|\tilde{s}_j\|_0}<\lambda_s$. Moreover, we have
\small\begin{equation}
\begin{aligned}
\left|\tilde{z}^{(k)}_{j}\right|&\leq\max_{j\in\bigcap_{k=1}^r\mathcal{U}_k^c} \left\|\frac{1}{n}\tr{X^{(k)}_{j}}{X^{(k)}_{\mathcal{U}_k}}\left(\frac{1}{n}\tr{X^{(k)}_{\mathcal{U}_k}}{X^{(k)}_{\mathcal{U}_k}}\right)^{-1}\right\|_1\\ &\qquad\qquad\qquad\qquad\qquad\left(\left\|\frac{1}{n} \left(X^{(k)}\right)^T w^{(k)}\right\|_{\infty}\!\!\!+\! \left\|\tilde{z}^{(k)}_{\mathcal{U}_k}\right\|_{\infty}\right)\\ &\qquad+\! \left\|\frac{1}{n} \left(X^{(k)}\right)^T w^{(k)}\right\|_{\infty}\\
&\leq(2-\gamma_s)\left\|\frac{1}{n} \left(X^{(k)}\right)^T w^{(k)}\right\|_{\infty}+(1-\gamma_s)\left\|\tilde{z}^{(k)}_{\mathcal{U}_k}\right\|_{\infty}\\
&\leq(2-\gamma_s)\left\|\frac{1}{n} \left(X^{(k)}\right)^T w^{(k)}\right\|_{\infty}+(1-\gamma_s)\lambda_s.\\
\end{aligned}
\nonumber
\end{equation}\normalsize
Thus, the event $\|P_{U_s^c}(\EstDual)\|_{\infty,\infty}<\lambda_s$ is equivalent to the event \footnotesize$\displaystyle\max_{1\leq k\leq r}\left\|\frac{1}{n}\left(X^{(k)}\right)^T w^{(k)}\right\|_{\infty}<\frac{\gamma_s}{2-\gamma_s}\lambda_s\,\,$\normalsize. By Lemma~\ref{NoiseInf}, this event happens with probability at least $1-2\exp\left(-\frac{\gamma_s^2n\lambda_s^2}{4(2-\gamma_s)^2\sigma^2}+\log(pr)\right)$. This probability goes to $1$ if $\lambda_s>\frac{2(2-\gamma_s)\sigma\sqrt{\log(pr)}}{\gamma_s\sqrt{n}}$ as stated in the assumptions.\\

\noindent
Next, we need to bound the projection of $\EstDual$ into the space $U_b^c$. Notice that
\small\begin{equation}
\sum_{k=1}^r\left|\left(P_{U_b^c}(\EstDual)\right)_j^{(k)}\right|=\left\{
\begin{aligned}
&\lambda_s\|\tilde{s}_j\|_0&j\in\bigcup_{k=1}^r\mathcal{U}_k-\rowsupport(\Bst)\\
&\sum_{k=1}^r\left|\tilde{z}^{(k)}_{j}\right|&j\in\bigcap_{k=1}^r\mathcal{U}_k^c\\
&0&\text{ow}
\end{aligned}\right..
\nonumber
\end{equation}\normalsize
We have $\lambda_s\|\tilde{s}_j\|_0\leq\lambda_s D(\Ss)<\lambda_b$ by our assumption on the ratio of the penalty regularizer coefficients. We can establish the following bound:
\footnotesize\begin{equation}
\begin{aligned}
&\sum_{k=1}^{r}\left|\tilde{z}^{(k)}_{j}\right|\\ &\leq \max_{j\in\bigcap_{k=1}^r\mathcal{U}_k^c}\sum_{k=1}^{r}\left\|\frac{1}{n}\tr{X^{(k)}_{j}}{X^{(k)}_{\mathcal{U}_k}}\left(\frac{1}{n}\tr{X^{(k)}_{\mathcal{U}_k}}{X^{(k)}_{\mathcal{U}_k}}\right)^{-1}\right\|_1\\ &\qquad\qquad\qquad\left(\max_{j\in\bigcup_{k=1}^r\mathcal{U}_k}\left\|\tilde{z}^{(k)}_{j}\right\|_{1}+\max_{1\leq k\leq r}\left\|\frac{1}{n}\left(X^{(k)}\right)^T w^{(k)}\right\|_{\infty}\right)\\
&\qquad+\max_{1\leq k\leq r}\left\|\frac{1}{n}\left(X^{(k)}\right)^T w^{(k)}\right\|_{\infty}\\
&\leq(1-\gamma_b)\lambda_b+(2-\gamma_b)\max_{1\leq k\leq K}\left\|\frac{1}{n}\left(X^{(k)}\right)^T w^{(k)}\right\|_{\infty}.\\
\end{aligned}
\nonumber
\end{equation}\normalsize
Thus, the event $\|P_{U_b^c}(\EstDual)\|_{\infty,1}<\lambda_b$ is equivalent to the event
\small$\max_{1\leq k\leq r}\left\|\frac{1}{n}\left(X^{(k)}\right)^T w^{(k)}\right\|_{\infty}<\frac{\gamma_b}{2-\gamma_b}\lambda_b\,$\normalsize. By Lemma~\ref{NoiseInf}, this event happens with probability at least $1-2\exp\left(-\frac{\gamma_b^2n\lambda_b^2}{4(2-\gamma_b)^2\sigma^2}+\log(pr)\right)$. This probability goes to $1$ if $\lambda_b>\frac{2(2-\gamma_b)\sigma\sqrt{\log(pr)}}{\gamma_b\sqrt{n}}$ as stated in the assumptions.\\

Hence, with probability at least $1-c_1\exp(-c_2n)$ conditions (C3) and (C4) in Lemma~\ref{LemWitnessOptCond} are satisfied.\\
\end{proof}

\begin{lemma}
\footnotesize$$\mathbb{P}\left[\displaystyle\max_{1\leq k\leq r}\left\|\frac{1}{n}\left(X^{(k)}\right)^T w^{(k)}\right\|_{\infty}<\alpha\right]\geq 1-2\exp\left(\displaystyle-\frac{\alpha^2n}{4\sigma^2}+\log(pr)\right).$$\normalsize
\label{NoiseInf}
\end{lemma}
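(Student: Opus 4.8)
The plan is to prove this as a standard Gaussian tail bound for the maximum absolute entry of the vectors $\frac{1}{n}(X^{(k)})^T w^{(k)}$ across all tasks $k$ and all coordinates $j$. The key observation is that for fixed $k$ and fixed row index $j$, the scalar $\frac{1}{n}\langle X_j^{(k)}, w^{(k)}\rangle$ is, conditionally on the design matrix $X^{(k)}$, a zero-mean Gaussian random variable, since $w^{(k)}\sim\mathcal{N}(0,\sigma^2 I)$ and the quantity is a fixed linear combination of the entries of $w^{(k)}$. Its variance is $\frac{\sigma^2}{n^2}\|X_j^{(k)}\|_2^2$, which by the column-normalization assumption {\bf A0} ($\|X_j^{(k)}\|_2\le\sqrt{2n}$) is at most $\frac{2\sigma^2}{n}$.

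First I would fix a single pair $(j,k)$ and apply the standard Gaussian tail inequality: for $Z\sim\mathcal{N}(0,v^2)$ one has $\mathbb{P}[|Z|>\alpha]\le 2\exp(-\alpha^2/(2v^2))$. Substituting $v^2\le\frac{2\sigma^2}{n}$ gives the per-entry bound $\mathbb{P}\big[\,|\frac{1}{n}\langle X_j^{(k)},w^{(k)}\rangle|>\alpha\,\big]\le 2\exp\!\big(-\frac{\alpha^2 n}{4\sigma^2}\big)$, which matches the exponent in the claimed bound. (If the statement is intended to hold unconditionally on the design, I would note that the bound holds conditionally on $X^{(k)}$ for every realization, hence also unconditionally.)

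Next I would apply a union bound over all $pr$ entries: there are $p$ choices of the row index $j\in\{1,\dots,p\}$ and $r$ choices of the task index $k\in\{1,\dots,r\}$, giving $pr$ total scalar random variables whose maximum absolute value is exactly $\max_{1\le k\le r}\|\frac{1}{n}(X^{(k)})^T w^{(k)}\|_\infty$. Thus
\begin{equation*}
\mathbb{P}\Big[\max_{1\le k\le r}\big\|\tfrac{1}{n}(X^{(k)})^T w^{(k)}\big\|_\infty\ge\alpha\Big]\le pr\cdot 2\exp\!\Big(-\tfrac{\alpha^2 n}{4\sigma^2}\Big)=2\exp\!\Big(-\tfrac{\alpha^2 n}{4\sigma^2}+\log(pr)\Big),
\end{equation*}
and taking complements yields the stated lower bound on the probability that the maximum is strictly below $\alpha$.

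Honestly, there is no serious obstacle here: the lemma is a routine Gaussian maximal inequality, and the only points requiring care are (i) correctly invoking {\bf A0} to control the conditional variance by $\frac{2\sigma^2}{n}$ so that the factor of $4$ (rather than $2$) appears in the denominator of the exponent, and (ii) confirming that the conditioning on $X^{(k)}$ does not cause trouble — since the bound is uniform over all admissible designs, the result transfers directly. The union-bound cardinality $pr$ is what produces the $\log(pr)$ term, so I would just make sure the counting of events is stated cleanly.
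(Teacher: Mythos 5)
Your proposal is correct and follows essentially the same route as the paper: a per-entry Gaussian tail bound with the conditional variance $\frac{\sigma^2}{n^2}\|X_j^{(k)}\|_2^2 \leq \frac{2\sigma^2}{n}$ controlled via assumption \textbf{A0}, followed by a union bound over the $pr$ entries to produce the $\log(pr)$ term. The only cosmetic difference is that the paper invokes this tail bound under the name ``Hoeffding inequality'' and unions over $j$ and $k$ in two stages, but the substance is identical.
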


\begin{proof}
Since $w_j^{(k)}$'s are distributed as $\mathcal{N}(0,\sigma^2)$, we have $\frac{1}{n}\left(X^{(k)}\right)^T w^{(k)}$ distributed as $\mathcal{N}\left(0,\frac{\sigma^2}{n}\left(X^{(k)}\right)^{T}X^{(k)}_{\mathcal{U}_k}\right)$. Using Hoeffding inequality, we have
\footnotesize\begin{equation}
\begin{aligned}
\mathbb{P}\left[\left\|\frac{1}{n}\left(X^{(k)}\right)^T w^{(k)}\right\|_{\infty}\geq\alpha\right]&\leq\sum_{j=1}^{p}\mathbb{P}\left[\left|\frac{1}{n}\left(X^{(k)}_j\right)^T w^{(k)}\right|\geq\alpha\right]\\
&\leq \sum_{j=1}^{p}2\exp\left(\displaystyle-\frac{\alpha^2n}{2\sigma^2\left(X^{(k)}_j\right)^{T}X^{(k)}_j}\right)\\
&\leq 2p\exp\left(\displaystyle-\frac{\alpha^2n}{4\sigma^2}\right).
\end{aligned}
\nonumber
\end{equation}\normalsize
By union bound, the result follows.\\
\end{proof}

\vspace{0.25cm}
\subsection{Proof of Theorem 2}
\label{appendix:proof_thm2}
Let $d=\lfloor\frac{\lambda_b}{\lambda_s}\rfloor$ and $(\Bst,\Ss)=\mathcal{H}_d(\bar{\Theta})$. Then, the result follows from the next proposition.\\

\begin{proposition} Under assumptions of Theorem 2, if $$n>\max\left(\frac{Bs\log(pr)}{C_{min}\gamma_s^2},\frac{Bsr\big(r\log(2)+\log(p)\big)}{C_{min}\gamma_b^2}\right)$$ then with probability at least $1-c_1\exp\left(-c_2\left(r\log(2)+\log(p)\right)\right)-c_3\exp(-c_4\log(rs))$ for some positive constants $c_1-c_4$, we are guaranteed that the following properties hold:\\

\begin{itemize}
\item [(P1)] The solution $(\Bh,\Sh)$ to (\ref{EqnDirtyEstNoisy}) is unique and $\rowsupport(\Bh)\subseteq\rowsupport(\Bst)$ and $\support(\Sh)\subseteq\support(\Ss)$.\\

\item [(P2)] \footnotesize$\left\|\Bh+\Sh-\Bst-\Ss\right\|_{\infty}\leq \underbrace{\sqrt{\frac{50\sigma^2\log(rs)}{nC_{min}}}+\lambda_s\left(\frac{Ds}{C_{min}\sqrt{n}}+D_{max}\right)}_{g_{\min}}\,\,$\normalsize.\\

\item [(P3)] $\sgn\left(\support(\As)\right)=\sgn\left(\support(\ssj)\right)$

for all $j\notin\rowsupport(\Bst)$ provided that $$\displaystyle\min_{j\notin\rowsupport(\Bst)\atop{(j,k)\in\support(\Ss)}}  \left|\ssjk\right|>g_{\min}.$$\\

\item [(P4)] $\sgn\left(\support(\As+\1b)\right)=\sgn\left(\support(\ssj+\bsj)\right)$

for all $j\in\rowsupport(\Bst)$ provided that $$\displaystyle\min_{(j,k)\in\support(\Bst)}\left|\bsjk+\ssjk\right|>g_{\min}.$$
\end{itemize}
\label{GaussianProp}
\end{proposition}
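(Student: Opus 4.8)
The plan is to mirror the deterministic proof of Proposition~\ref{Th1}, invoking the primal-dual witness construction of Section~\ref{appendix:H_Transform} and the optimality conditions in Lemma~\ref{LemWitnessOptCond}, but now treating the design matrices $X^{(k)}$ as random Gaussian ensembles. The conceptual structure of the four parts is identical; the work is entirely in replacing the deterministic quantities appearing in conditions (C3)/(C4) and in the error expression $\delpar^{(k)}_{\mathcal{U}_k}$ with their random counterparts and showing that these concentrate around the population values $\gamma_s,\gamma_b,C_{\min},D_{\max}$ defined through the covariance matrices $\Sigma^{(k)}$ in conditions {\bf C1--C2}. First I would establish (P1): as in Proposition~\ref{Th1}, it suffices to verify (C3) and (C4) of Lemma~\ref{LemWitnessOptCond} for the constructed dual $\EstDual$. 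The crucial difference from the deterministic case is that the sample incoherence terms $\frac{1}{n}\tr{X^{(k)}_j}{X^{(k)}_{\mathcal{U}_k}}\left(\frac{1}{n}\tr{X^{(k)}_{\mathcal{U}_k}}{X^{(k)}_{\mathcal{U}_k}}\right)^{-1}$ and the minimum eigenvalue $\lambda_{\min}\!\left(\frac{1}{n}\tr{X^{(k)}_{\mathcal{U}_k}}{X^{(k)}_{\mathcal{U}_k}}\right)$ are now random, so I must show these are close to the population versions $\gamma_b,\gamma_s,C_{\min}$ with high probability, which is exactly the content I would delegate to Lemma~\ref{LemmaGaussDualProj} (referenced in the proof outline as the Gaussian analogue of Lemma~\ref{problemma}).

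The main new obstacle, and the reason for the extra $\frac{4s}{C_{\min}\sqrt{n}}$ term inside $g_{\min}$ in (P2), is controlling the deviation of the empirical Gram matrices from the population covariances. I would decompose $\delpar^{(k)}_{\mathcal{U}_k}$ via equation~\eqref{errmag} into the noise part $\left(\frac{1}{n}\tr{X^{(k)}_{\mathcal{U}_k}}{X^{(k)}_{\mathcal{U}_k}}\right)^{-1}\frac{1}{n}\left(X^{(k)}_{\mathcal{U}_k}\right)^T w^{(k)}$ and the dual part $\left(\frac{1}{n}\tr{X^{(k)}_{\mathcal{U}_k}}{X^{(k)}_{\mathcal{U}_k}}\right)^{-1}\tilde{z}^{(k)}_{\mathcal{U}_k}$. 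For the dual part I would write the random inverse as the population inverse plus a perturbation, so that the bound splits into the population contribution $\lambda_s D_{\max}$ and a residual governed by $\|\,(\frac{1}{n}X^T X)^{-1} - (\Sigma)^{-1}\,\|$; standard Gaussian covariance concentration (e.g. bounds on $\lambda_{\min}$ and $\lambda_{\max}$ of Wishart matrices over the support of size at most $s$) yields a residual of order $\frac{s}{C_{\min}\sqrt{n}}$, which is precisely where the $\frac{4s}{C_{\min}\sqrt{n}}$ arises. The noise part is handled as in Lemma~\ref{NoiseInf} but now conditionally on $X$, giving the leading Gaussian-tail term $\sqrt{50\sigma^2\log(rs)/(nC_{\min})}$; the sample-size lower bound $n > \max(\cdots)$ in the hypothesis is exactly what guarantees the empirical Gram matrix is invertible and well-conditioned so that all these concentration events hold simultaneously.

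With (P1) and (P2) established, parts (P3) and (P4) follow verbatim from the deterministic argument: using Lemma~\ref{signBSnoisy}, sign recovery on the sparse-only rows and on the block-sparse rows is equivalent to the events $\left(\delpar_j^{(k)}+\ssjk\right)\sgn(\ssjk)>0$ and $\left(\delpar_j^{(k)}+\bsjk+\ssjk\right)\sgn(\bsjk+\ssjk)>0$ respectively, and each is implied by the elementwise bound $\left|\delpar_j^{(k)}\right|\le g_{\min}$ from (P2) whenever the corresponding minimum nonzero magnitude exceeds $g_{\min}$. I would simply substitute $g_{\min}$ for $b_{\min}$ throughout the Hoeffding-type estimates. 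The final probability statement is obtained by a union bound over the failure events for the incoherence concentration (Lemma~\ref{LemmaGaussDualProj}, contributing the $\exp(-c_2(r\log 2+\log p))$ term) and for the covariance/noise concentration on the support (contributing the $\exp(-c_4\log(rs))$ term). The hardest and most delicate step is unquestionably the covariance-perturbation analysis feeding the $\frac{4s}{C_{\min}\sqrt{n}}$ correction, since it requires uniform control of the random inverse over the support and careful tracking of constants rather than a direct copy of the deterministic algebra.
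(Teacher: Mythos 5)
Your proposal is correct and follows essentially the same route as the paper's proof: verifying (C3)/(C4) via Lemma~\ref{LemmaGaussDualProj} for (P1), splitting $\delpar^{(k)}_{\mathcal{U}_k}$ from~\eqref{errmag} into a conditionally Gaussian noise term (giving the $\sqrt{50\sigma^2\log(rs)/(nC_{\min})}$ piece) and a dual term whose random inverse Gram matrix is written as population inverse plus a Wishart-type perturbation (giving $\lambda_s D_{\max}$ plus the $\frac{4s}{C_{\min}\sqrt{n}}\lambda_s$ correction via the Davidson--Szarek eigenvalue bounds), and then deducing (P3)/(P4) directly from (P2). No gaps to flag.
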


\begin{proof}
We provide the proof of each part separately.
\begin{itemize}
\item [(P1)] Considering the constructed primal-dual pair $(\tilde{S},\tilde{B},\EstDual)$, it suffices to show that the conditions (C3) and (C4) in Lemma~\ref{LemWitnessOptCond} are satisfied under these assumptions. Lemma~\ref{LemmaGaussDualProj} guarantees that with probability at least $1-c_1\exp\left(-c_2\left(r\log(2)+\log(p)\right)\right)$ those conditions are satisfied. Hence, $(\Bh,\Sh)=(\tilde{B},\tilde{S})$ are the unique solution to (\ref{EqnDirtyEstNoisy}) and (P1) follows.\\

\item [(P2)] From (\ref{errmag}), we have
\footnotesize\begin{equation}
\begin{aligned}
\max_{j\in\mathcal{U}_k}\left|\delpar_j^{(k)}\right| &\leq\underbrace{\left\|\left(\frac{1}{n}\tr{X^{(k)}_{\mathcal{U}_k}}{X^{(k)}_{\mathcal{U}_k}}\right)^{-1} \frac{1}{n} \left(X^{(k)}_{\mathcal{U}_k}\right)^T w^{(k)}\right\|_\infty}_{\mathcal{W}^{(k)}}\\ &\qquad\qquad+ \left\|\left(\frac{1}{n}\tr{X^{(k)}_{\mathcal{U}_k}}{X^{(k)}_{\mathcal{U}_k}}\right)^{-1}\tilde{z}^{(k)}_{\mathcal{U}_k}\right\|_\infty\\
&\leq\left\|\mathcal{W}^{(k)}\right\|_\infty +\left\|\left(\Sigma^{(k)}_{\mathcal{U}_k,\mathcal{U}_k}\right)^{-1}\tilde{z}^{(k)}_{\mathcal{U}_k}\right\|_\infty\\
&\quad+\left\|\left(\left(\frac{1}{n}\tr{X^{(k)}_{\mathcal{U}_k}}{X^{(k)}_{\mathcal{U}_k}}\right)^{-1}\!\!\!\!-\left(\Sigma^{(k)}_{\mathcal{U}_k,\mathcal{U}_k}\right)^{-1}\right)\tilde{z}^{(k)}_{\mathcal{U}_k}\right\|_\infty.\\ 
\end{aligned}
\nonumber
\end{equation}\normalsize
\noindent We need to bound these three quantities. Notice that
\begin{equation}
\begin{aligned}
\left\|\left(\Sigma^{(k)}_{\mathcal{U}_k,\mathcal{U}_k}\right)^{-1}\tilde{z}^{(k)}_{\mathcal{U}_k}\right\|_\infty &\leq\left\|\left(\Sigma^{(k)}_{\mathcal{U}_k,\mathcal{U}_k}\right)^{-1}\right\|_{\infty,1}\left\|\tilde{z}^{(k)}_{\mathcal{U}_k}\right\|_\infty\\
&\leq D_{max}\lambda_s.
\end{aligned}
\nonumber
\end{equation}
Also, we have
\footnotesize\begin{equation}
\begin{aligned}
&\left\|\left(\left(\frac{1}{n}\tr{X^{(k)}_{\mathcal{U}_k}}{X^{(k)}_{\mathcal{U}_k}}\right)^{-1}\!\!\!\!-\left(\Sigma^{(k)}_{\mathcal{U}_k,\mathcal{U}_k}\right)^{-1}\right)\tilde{z}^{(k)}_{\mathcal{U}_k}\right\|_\infty\\ 
&\qquad\leq\lambda_{max}\left(\left(\frac{1}{n}\tr{X^{(k)}_{\mathcal{U}_k}}{X^{(k)}_{\mathcal{U}_k}}\right)^{-1}\!\!\!\!-\left(\Sigma^{(k)}_{\mathcal{U}_k,\mathcal{U}_k}\right)^{-1}\right)\left\|\tilde{z}^{(k)}_{\mathcal{U}_k}\right\|_2\\
&\qquad\leq\lambda_{max}\left(\left(\frac{1}{n}\tr{X^{(k)}_{\mathcal{U}_k}}{X^{(k)}_{\mathcal{U}_k}}\right)^{-1}\!\!\!\!-\left(\Sigma^{(k)}_{\mathcal{U}_k,\mathcal{U}_k}\right)^{-1}\right)\sqrt{s}\lambda_s\\
&\qquad\leq\frac{4}{C_{min}}\sqrt{\frac{s}{n}}\sqrt{s}\lambda_s,
\end{aligned}
\nonumber
\end{equation}\normalsize
where, the last inequality holds with probability at least $1-c_1\exp\left(-c_2\left(\sqrt{n}-\sqrt{s}\right)^2\right)$ for some positive constants $c_1$ and $c_2$ as a result of \cite{DAVSZA} on eigenvalues of Gaussian random matrices. Conditioned on $X_{\mathcal{U}_k}^{(k)}$, the vector $\mathcal{W}^{(k)}\in\mathbb{R}^{\left|\mathcal{U}_k\right|}$ is a zero-mean Gaussian random vector with covariance matrix $\frac{\sigma^2}{n}\left(\frac{1}{n}\tr{X^{(k)}_{\mathcal{U}_k}}{X^{(k)}_{\mathcal{U}_k}}\right)^{-1}$. Thus, we have
\footnotesize\begin{equation}
\begin{aligned}
&\frac{1}{n}\lambda_{max}\left(\left(\frac{1}{n}\tr{X^{(k)}_{\mathcal{U}_k}}{X^{(k)}_{\mathcal{U}_k}}\right)^{-1}\right)\\ &\qquad\leq\frac{1}{n}\lambda_{max}\left(\left(\frac{1}{n}\tr{X^{(k)}_{\mathcal{U}_k}}{X^{(k)}_{\mathcal{U}_k}}\right)^{-1}\!\!\!\!-\left(\Sigma^{(k)}_{\mathcal{U}_k,\mathcal{U}_k}\right)^{-1}\right)\\ &\qquad\qquad\qquad\qquad\qquad\qquad\qquad\qquad+\frac{1}{n}\lambda_{max}\left(\left(\Sigma^{(k)}_{\mathcal{U}_k,\mathcal{U}_k}\right)^{-1}\right)\\
&\qquad\leq\frac{1}{n}\left(\frac{4}{C_{min}}\sqrt{\frac{s}{n}}+\frac{1}{C_{min}}\right)\\
&\qquad\leq\frac{5}{nC_{min}}.
\end{aligned}
\nonumber
\end{equation}\normalsize
\noindent From the concentration of Gaussian random variables (Lemma~\ref{NoiseInf}) and using the union bound, we get
\small\begin{equation}
\mathbb{P}\left[\max_{1\leq k\leq r}\left\|\mathcal{W}^{(k)}\right\|_\infty\geq t\right]\leq 2\exp\left(-\frac{t^2nC_{min}}{50\sigma^2}+\log(rs)\right).\\
\nonumber
\end{equation}\normalsize
For $t=\epsilon\sqrt{\frac{50\sigma^2\log(rs)}{nC_{min}}}$ for some $\epsilon>1$, the result follows.\\
\item [(P3),(P4)] The results are immediate consequence of (P2).
\end{itemize}
\end{proof}

\begin{lemma}
Under the assumptions of Proposition~\ref{GaussianProp}, the conditions (C3) and (C4) in Lemma~\ref{LemWitnessOptCond} hold for the constructed primal-dual pair with probability at least $1-c_1\exp\left(-c_2\left(r\log(2)+\log(p)\right)\right)$ for some positive constants $c_1$ and $c_2$.
\label{LemmaGaussDualProj}
\end{lemma}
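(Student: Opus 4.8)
The plan is to mirror the deterministic argument of Lemma~\ref{problemma}, but with the essential new ingredient of controlling the \emph{empirical} incoherence terms by means of the \emph{population} incoherence conditions {\bf C1}--{\bf C2}. As in that lemma, the entries of $P_{\Scomp}(\EstDual)$ and $P_{\Bcomp}(\EstDual)$ supported on $\rowsupport(\tilde B)$ are already strictly bounded by the regularizer-ratio assumption {\bf C3-3}, so the entire burden falls on the rows $j\in\bigcap_{k}\mathcal{U}_k^c$, where $\tilde z_j^{(k)}$ is given explicitly by \eqref{EqnDualBarAssign}. First I would reduce (C3) and (C4) to tail bounds on these $\tilde z_j^{(k)}$, much as the deterministic proof reduces them to the noise events $\max_k\|\frac1n (X^{(k)})^T w^{(k)}\|_\infty < \frac{\gamma_s}{2-\gamma_s}\lambda_s$ and $<\frac{\gamma_b}{2-\gamma_b}\lambda_b$; the new point is that the incoherence constants appearing here are now the random empirical ones, which must be tied back to $\gamma_s,\gamma_b$.

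The heart of the argument is a Gaussian conditioning step (in the spirit of \citet{NWJoint}). For $j\notin\mathcal{U}_k$, conditioned on the columns $X^{(k)}_{\mathcal{U}_k}$, the column $X^{(k)}_j$ is Gaussian with conditional mean $X^{(k)}_{\mathcal{U}_k}(\Sigma^{(k)}_{\mathcal{U}_k,\mathcal{U}_k})^{-1}\Sigma^{(k)}_{\mathcal{U}_k,j}$ plus an independent residual. Substituting this decomposition into $\frac{1}{n}\tr{X^{(k)}_j}{X^{(k)}_{\mathcal{U}_k}}\big(\frac1n \tr{X^{(k)}_{\mathcal{U}_k}}{X^{(k)}_{\mathcal{U}_k}}\big)^{-1}$ splits the empirical incoherence vector into the population vector $\Sigma^{(k)}_{j,\mathcal{U}_k}(\Sigma^{(k)}_{\mathcal{U}_k,\mathcal{U}_k})^{-1}$, whose $\ell_1$ mass is controlled by $1-\gamma_b$ and $1-\gamma_s$ through the incoherence condition {\bf C1}, plus a mean-zero fluctuation arising from the residual. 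The sample-size lower bound assumed in Proposition~\ref{GaussianProp} is exactly what forces these fluctuations to be dominated, so that the effective empirical incoherence remains a constant fraction of its population value and the positive margins $\gamma_b,\gamma_s$ survive.

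I would then treat the two conditions separately with concentration. For (C3) the quantity to control is $\max_{j,k}|\tilde z_j^{(k)}|$; a Gaussian tail bound on each entry (the noise part handled conditionally on $X$ as in Lemma~\ref{NoiseInf}, the residual part by the same device) together with a union bound over at most $pr$ coordinates yields a $\log(pr)$ term in the exponent. For (C4), which is the governing condition, I would bound $\sum_{k=1}^r|\tilde z_j^{(k)}|$ uniformly over the $p$ complement rows. Writing $\sum_k |\tilde z_j^{(k)}| = \max_{\varepsilon\in\{-1,+1\}^r}\sum_k \varepsilon_k \tilde z_j^{(k)}$ turns the block-$\ell_1$ sum into a supremum of $2^r$ scalar Gaussians; a union bound over the $2^r$ sign patterns and the $p$ rows produces precisely the $r\log 2 + \log p$ exponent stated in the lemma. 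Combining the two, both events hold with probability at least $1-c_1\exp(-c_2(r\log 2+\log p))$ under {\bf C3-1}--{\bf C3-2}.

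The main obstacle is the coupled passage from population to empirical incoherence while extracting the correct $r$-dependence: one must simultaneously (i) show the random Gram inverse $(\frac1n\tr{X^{(k)}_{\mathcal{U}_k}}{X^{(k)}_{\mathcal{U}_k}})^{-1}$ concentrates around $(\Sigma^{(k)}_{\mathcal{U}_k,\mathcal{U}_k})^{-1}$ (via the spectral bounds on Wishart matrices already invoked in Proposition~\ref{GaussianProp}), and (ii) control the residual fluctuations of the $\ell_1$ incoherence vector uniformly over all $p$ complement rows while keeping the $2^r$ sign-pattern enumeration in check. It is exactly the interaction of the block-norm structure with this $r$-fold sign enumeration that makes $r\log 2$, rather than the milder $\log r$, the right price, and the delicate part is arranging the constants so the fluctuation terms are absorbed into $\gamma_b,\gamma_s$ instead of eroding them.
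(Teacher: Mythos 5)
Your proposal follows essentially the same route as the paper's proof: the regularizer-ratio bound handles the entries on $\rowsupport(\tilde B)$, the Gaussian conditioning decomposition splits each $\tilde z_j^{(k)}$ on $\bigcap_k\mathcal{U}_k^c$ into a population incoherence term controlled by $1-\gamma_s$ (resp.\ $1-\gamma_b$), an independent residual fluctuation, and a projected-noise term, and the block-$\ell_1$ sum in (C4) is handled by enumerating the $2^r$ sign patterns to produce the $r\log 2+\log p$ exponent. The only cosmetic difference is that you phrase the decomposition as acting on the empirical incoherence vector while the paper applies it directly to $\tilde z_j^{(k)}$ (bounding the residual term's variance by $\|\tilde z^{(k)}_{\mathcal{U}_k}\|_2^2/(nC_{min})$), which is the same computation.
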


\begin{proof}
First, we need to bound the projection of $\EstDual$ into the space $U_s^c$. Notice that
\footnotesize\begin{equation}
\left|\left(P_{U_s^c}(\EstDual)\right)_j^{(k)}\right|=\left\{
\begin{aligned}
&\frac{\lambda_b-\lambda_s\|\tilde{s}_j\|_0}{\left|M_{j}(\tilde{B})\right|-\|\tilde{s}_j\|_0}\\ &\qquad\qquad j\in\rowsupport(\tilde{B})\quad\&\quad (j,k)\notin\support(\tilde{S})\\
&\left|\tilde{z}^{(k)}_{j}\right|\qquad\qquad\qquad\qquad j\in\bigcap_{k=1}^r\mathcal{U}_{k}^c\\
&0\qquad\qquad\qquad\qquad\qquad\qquad\text{ow}.\\
\end{aligned}\right..
\nonumber
\end{equation}\normalsize
By our assumptions on the ratio of the penalty regularizer coefficients, we have $\frac{\lambda_b-\lambda_s\|\tilde{s}_j\|_0}{\left|M_{j}(\tilde{B})\right|-\|\tilde{s}_j\|_0}<\lambda_s$. For all $j\in\bigcap_{k=1}^r\mathcal{U}_k$ and $R\in\mathbb{R}^{p\times r}$ with i.i.d. standard Gaussian entries (see Lemma 4 in \cite{NWJoint}), we have
\footnotesize\begin{equation}
\begin{aligned}
&\left|\tilde{z}^{(k)}_{j}\right|\\ 
&\leq\!\!\!\!\!\!\max_{j\in\bigcap_{k=1}^r\mathcal{U}_k^c} \underbrace{\left|\frac{1}{n}\tr{X^{(k)}_{j}\!\!}{\mathbf{I}-\frac{1}{n}X^{(k)}_{\mathcal{U}_k}\left(\frac{1}{n}\tr{X^{(k)}_{\mathcal{U}_k}\!\!}{X^{(k)}_{\mathcal{U}_k}}\right)^{\!\!-1} \!\!\!\!\!\left(X^{(k)}_{\mathcal{U}_k}\right)^T} w^{(k)}\right|}_{\mathcal{W}_j^{(k)}}\\
&\qquad+\max_{j\in\bigcap_{k=1}^r\mathcal{U}_k^c} \left|\frac{1}{n}\tr{X^{(k)}_{j}}{X^{(k)}_{\mathcal{U}_k}\left(\frac{1}{n}\tr{X^{(k)}_{\mathcal{U}_k}}{X^{(k)}_{\mathcal{U}_k}}\right)^{-1}} \tilde{z}^{(k)}_{\mathcal{U}_k}\right|\\ &\leq\max_{j\in\bigcap_{k=1}^r\mathcal{U}_k^c}\left|\mathcal{W}_j^{(k)}\right|+\max_{j\in\bigcap_{k=1}^r\mathcal{U}_k^c} \left\|\Sigma^{(k)}_{j,\mathcal{U}_k}\left(\Sigma^{(k)}_{\mathcal{U}_k,\mathcal{U}_k}\right)^{-1}\right\|_1 \left\|\tilde{z}^{(k)}_{\mathcal{U}_k}\right\|_{\infty}\\
&\quad+\max_{j\in\bigcap_{k=1}^r\mathcal{U}_k^c} \underbrace{\left|\frac{1}{n}\tr{R^{(k)}_{j}}{X^{(k)}_{\mathcal{U}_k}\left(\frac{1}{n}\tr{X^{(k)}_{\mathcal{U}_k}}{X^{(k)}_{\mathcal{U}_k}}\right)^{-1}} \tilde{z}^{(k)}_{\mathcal{U}_k}\right|}_{\mathcal{R}_j^{(k)}}\\
&\leq(1-\gamma_s)\lambda_s+\max_{j\in\bigcap_{k=1}^r\mathcal{U}_k^c}\left|\mathcal{R}_j^{(k)}\right|+\max_{j\in\bigcap_{k=1}^r\mathcal{U}_k^c}\left|\mathcal{W}_j^{(k)}\right|,
\end{aligned}
\nonumber
\end{equation}\normalsize
The second inequality follows from the triangle inequality on the distributions. By Lemma~\ref{l2bound}, if $n\geq\frac{2}{2-\sqrt{3}}\log(pr)$ then with high probability $\left\|X_j^{(k)}\right\|_2^2\leq 2n$ and hence $\text{Var}\left(\mathcal{W}_j^{(k)}\right)\leq\frac{2\sigma^2}{n}$. Using the concentration results for the zero-mean Gaussian random variable $\mathcal{W}_j^{(k)}$ and using the union bound, we get
\small\begin{equation}
\mathbb{P}\left[\max_{j\in\bigcap_{k=1}^r\mathcal{U}_k^c}\left|\mathcal{W}_j^{(k)}\right|\geq t\right]\leq 2\exp\left(-\frac{t^2n}{4\sigma^2}+\log(p)\right)\qquad\forall t\geq 0.\\
\nonumber
\end{equation}\normalsize

\noindent Conditioning on $\left(X^{(k)}_{\mathcal{U}_k},w^{(k)},\tilde{z}^{(k)}\right)$'s, we have that $\mathcal{R}_j^{(k)}$ is a zero-mean Gaussian random variable with
\begin{equation}
\text{Var}\left(\mathcal{R}_j^{(k)}\right)\leq\frac{\left\|\tilde{z}_{\mathcal{U}_k}^{(k)}\right\|_2^2}{nC_{min}}\leq\frac{s\lambda_s^2}{nC_{min}}.
\nonumber
\end{equation}
By concentration of Gaussian random variables, we have
\small\begin{equation}
\mathbb{P}\left[\max_{j\in\bigcap_{k=1}^r\mathcal{U}_k^c}\left|\mathcal{R}_j^{(k)}\right|\geq t\right]\leq 2\exp\left(-\frac{t^2nC_{min}} {Bs\lambda_s^2}+\log(p)\right)\quad\forall t\geq 0.\\
\nonumber
\end{equation}\normalsize
Using these bounds, we get
\footnotesize\begin{equation}
\begin{aligned}
&\!\mathbb{P}\!\left[\!\left\|P_{U_s^c}(\EstDual)\right\|_{\infty,\infty}\!\!\!<\!\lambda_s\right]\\ &\geq\mathbb{P}\left[\max_{j\in\bigcap_{k=1}^r\mathcal{U}_k^c}\left|\mathcal{R}_j^{(k)}\right|+\max_{j\in\bigcap_{k=1}^r\mathcal{U}_k^c}\left|\mathcal{W}_j^{(k)}\right|<\gamma_s\lambda_s\qquad\forall \,1\leq k\leq r\right]\\
&\geq\mathbb{P}\left[\max_{j\in\bigcap_{k=1}^r\mathcal{U}_k^c}\left|\mathcal{R}_j^{(k)}\right|<t_0\quad\forall\,1\leq k\leq r\right]\\ &\qquad\qquad\qquad\mathbb{P}\left[\max_{j\in\bigcap_{k=1}^r\mathcal{U}_k^c}\left|\mathcal{W}_j^{(k)}\right|<\gamma_s\lambda_s-t_0\quad\forall\,1\leq k\leq r\right]\\
&\geq\left(1-2\exp\left(-\frac{t_0^2nC_{min}}{Bs\lambda_s^2}+\log(pr)\right)\right)\\ &\qquad\qquad\qquad\left(1-2\exp\left(-\frac{(\gamma_s\lambda_s-t_0)^2n}{4\sigma^2}+\log(pr)\right)\right).
\end{aligned}
\nonumber
\end{equation}\normalsize
This probability goes to $1$ for $t_0=\frac{\sqrt{Bs}\lambda_s}{\sqrt{Bs}\lambda_s+2\sigma\sqrt{C_{min}}}\gamma_s\lambda_s$  (the solution to $\frac{t_0^2C_{min}} {Bs\lambda_s^2}=\frac{(\gamma_s\lambda_s-t_0)^2}{4\sigma^2}$), if the regularization parameter $\lambda_s>\frac{\sqrt{4\sigma^2C_{min}\log(pr)}}{\gamma_s\sqrt{nC_{min}}-\sqrt{Bs\log(pr)}}$ provided that $n>\frac{Bs\log(pr)}{C_{min}\gamma_s^2}$ as stated in the assumptions.
\vspace{1cm}

\noindent Next, we need to bound the projection of $\EstDual$ into the space $U_b^c$. Notice that
\footnotesize\begin{equation}
\sum_{k=1}^r\left|\left(P_{U_b^c}(\EstDual)\right)_j^{(k)}\right|=\left\{
\begin{aligned}
&\lambda_s\|\tilde{s}_j\|_0\qquad\quad&j\in\bigcup_{k=1}^r\mathcal{U}_k-\rowsupport(\Bst)\\
&\sum_{k=1}^{r}\left|\tilde{z}^{(k)}_{j}\right|&j\in\bigcap_{k=1}^r\mathcal{U}_k^c\\
&0&\text{ow}
\end{aligned}\right..
\nonumber
\end{equation}\normalsize
We have $\lambda_s\|\tilde{s}_j\|_0\leq\lambda_s D(\Ss)<\lambda_b$ by our assumption on the ratio of the penalty regularizer coefficients. For all $j\in\bigcap_{k=1}^r\mathcal{U}_k^c$, we have
\footnotesize\begin{equation}
\begin{aligned}
&\sum_{k=1}^{r}\left|\tilde{z}^{(k)}_{j}\right|\\ 
&\leq\!\!\!\!\!\!\max_{j\in\bigcap_{k=1}^r\mathcal{U}_k^c}\sum_{k=1}^{r} \underbrace{\left|\frac{1}{n}\tr{X^{(k)}_{j}\!\!}{\mathbf{I}\!-\!\frac{1}{n}X^{(k)}_{\mathcal{U}_k}\!\!\left(\!\frac{1}{n}\!\tr{X^{(k)}_{\mathcal{U}_k}\!\!}{X^{(k)}_{\mathcal{U}_k}}\!\right)^{\!\!-1} \!\!\!\!\!\left(X^{(k)}_{\mathcal{U}_k}\right)^T} w^{(k)}\right|}_{\mathcal{W}_j^{(k)}}\\
&\quad+\max_{j\in\bigcap_{k=1}^r\mathcal{U}_k^c} \sum_{k=1}^{r}\left|\frac{1}{n}\tr{X^{(k)}_{j}}{X^{(k)}_{\mathcal{U}_k}\left(\frac{1}{n}\tr{X^{(k)}_{\mathcal{U}_k}}{X^{(k)}_{\mathcal{U}_k}}\right)^{-1}} \tilde{z}^{(k)}_{\mathcal{U}_k}\right|\\
&\leq\max_{j\in\bigcap_{k=1}^r\mathcal{U}_k^c}\sum_{k=1}^r\left|\mathcal{W}_j^{(k)}\right| \\ &\quad+ \max_{j\in\bigcap_{k=1}^r\mathcal{U}_k^c} \sum_{k=1}^{r}\left\|\frac{1}{n}\tr{X^{(k)}_{j}}{X^{(k)}_{\mathcal{U}_k}\left(\frac{1}{n}\tr{X^{(k)}_{\mathcal{U}_k}}{X^{(k)}_{\mathcal{U}_k}}\right)^{-1}}\right\|_1\\ &\qquad\qquad\qquad\qquad\qquad\qquad\qquad\qquad\qquad\qquad\qquad\max_{j\in\bigcup_{k=1}^r\mathcal{U}_k}\left\|\tilde{z}^{(k)}_{j}\right\|_{1}\\
&\quad+\max_{j\in\bigcap_{k=1}^r\mathcal{U}_k^c} \sum_{k=1}^{r}\underbrace{\left|\frac{1}{n}\tr{R^{(k)}_{j}}{X^{(k)}_{\mathcal{U}_k}\left(\frac{1}{n}\tr{X^{(k)}_{\mathcal{U}_k}}{X^{(k)}_{\mathcal{U}_k}}\right)^{-1}} \tilde{z}^{(k)}_{\mathcal{U}_k}\right|}_{\mathcal{R}_j^{(k)}}\\
&\leq(1-\gamma_b)\lambda_b+\max_{j\in\bigcap_{k=1}^r\mathcal{U}_k^c}\sum_{k=1}^r\left|\mathcal{R}_j^{(k)}\right|+\max_{j\in\bigcap_{k=1}^r\mathcal{U}_k^c}\sum_{k=1}^r\left|\mathcal{W}_j^{(k)}\right|.\\
\end{aligned}
\nonumber
\end{equation}\normalsize
Let $\mathbf{v}\in\{-1,+1\}^r$ be a vector of signs such that $\sum_{k=1}^r\left|\mathcal{W}_j^{(k)}\right|=\sum_{k=1}^rv_k\mathcal{W}_j^{(k)}$. Then,
\begin{equation}
\text{Var}\left(\sum_{k=1}^{r}\left|\mathcal{W}_j^{(k)}\right|\right)=\text{Var}\left(\sum_{k=1}^{r}v_k\mathcal{W}_j^{(k)}\right)\leq\frac{2\sigma^2r}{n}.
\nonumber
\end{equation}
Using the union bound and previous discussion, we get
\begin{equation}
\begin{aligned}
&\mathbb{P}\left[\max_{j\in\bigcap_{k=1}^r\mathcal{U}_k^c}\sum_{k=1}^r\left|\mathcal{W}_j^{(k)}\right|\geq t\right]\\ &\qquad=\mathbb{P}\left[\max_{j\in\bigcap_{k=1}^r\mathcal{U}_k^c}\max_{\mathbf{v}\in\{-1,+1\}^r}\sum_{k=1}^r v_k\mathcal{W}_j^{(k)}\geq t\right]\\ &\qquad\leq 2\exp\left(-\frac{t^2n}{4\sigma^2r}+r\log(2)+\log(p)\right)\qquad\forall t\geq 0.\\
\end{aligned}
\nonumber
\end{equation}
We have
\small\begin{equation}
\begin{aligned}
\text{Var}\left(\sum_{k=1}^r\left|\mathcal{R}_j^{(k)}\right|\right)&=\text{Var}\left(\sum_{k=1}^r v_k\mathcal{R}_j^{(k)}\right)\\
&\leq\frac{\sum_{k=1}^r\left\|\tilde{z}_j^{(k)}\right\|_2^2}{nC_{min}}\leq\frac{rs\lambda_s^2}{nC_{min}}<\frac{rs\lambda_b^2}{nC_{min}}
\end{aligned}
\nonumber
\end{equation}\normalsize
and consequently by concentration of Gaussian variables,
\small\begin{equation}
\begin{aligned}
&\mathbb{P}\left[\max_{j\in\bigcap_{k=1}^r\mathcal{U}_k^c}\sum_{k=1}^K\left|\mathcal{R}_j^{(k)}\right|\geq t\right]\\ &\qquad=\mathbb{P}\left[\max_{j\in\bigcap_{k=1}^r\mathcal{U}_k^c}\max_{\mathbf{v}\in\{-1,+1\}^r}\sum_{k=1}^r v_k\mathcal{R}_j^{(k)}\geq t\right]\\ &\qquad\leq 2\exp\left(-\frac{t^2nC_{min}}{2rs\lambda_b^2} +r\log(2)+\log(p)\right)\qquad\forall t\geq 0.\\
\end{aligned}
\nonumber
\end{equation}\normalsize
Finally, we have
\footnotesize\begin{equation}
\begin{aligned}
&\!\mathbb{P}\!\left[\!\left\|P_{U_b^c}(\EstDual)\right\|_{\infty,1}\!\!\!<\!\lambda_b\right]\\ &\qquad \geq\mathbb{P}\left[\max_{j\in\bigcap_{k=1}^r\mathcal{U}_k^c}\sum_{k=1}^r\left|\mathcal{R}_j^{(k)}\right|+\max_{j\in\bigcap_{k=1}^r\mathcal{U}_k^c}\sum_{k=1}^r\left|\mathcal{W}_j^{(k)}\right|<\gamma_b\lambda_b\right]\\
&\qquad\geq\mathbb{P}\left[\max_{j\in\bigcap_{k=1}^r\mathcal{U}_k^c}\sum_{k=1}^r\left|\mathcal{R}_j^{(k)}\right|<t_0\right]\\ &\qquad\qquad\qquad\mathbb{P}\left[\max_{j\in\bigcap_{k=1}^r\mathcal{U}_k^c}\sum_{k=1}^r\left|\mathcal{W}_j^{(k)}\right|<\gamma_b\lambda_b-t_0\right]\\
&\qquad\geq\left(1-2\exp\left(-\frac{t_0^2nC_{min}}{2rs\lambda_b^2} +r\log(2)+\log(p)\right)\right)\\ &\qquad\qquad\qquad\left(1-2\exp\left(-\frac{(\gamma_b\lambda_b-t_0)^2n}{4\sigma^2r}+r\log(2)+\log(p)\right)\right).
\end{aligned}
\nonumber
\end{equation}\normalsize
This probability goes to $1$  for $t_0=\frac{\sqrt{Bs}\lambda_b}{\sqrt{Bs}\lambda_b+2\sigma\sqrt{C_{min}}}\gamma_b\lambda_b$ (the solution to $\frac{(\gamma_b\lambda_b-t_0)^2n}{4\sigma^2r}=\frac{t_0^2nC_{min}}{2rs\lambda_b^2}$), if
\footnotesize\begin{equation}
\lambda_b>\frac{\sqrt{4\sigma^2C_{min}r\Big(r\log(2)+\log(p)\Big)}}{\gamma_b\sqrt{nC_{min}}-\sqrt{Bsr\Big(r\log(2)+\log(p)\Big)}},
\nonumber
\end{equation}\normalsize
provided that $n>\frac{Bsr(r\log(2)+\log(p))}{\gamma_b^2C_{min}}$ as stated in the assumptions. Hence, with probability at least $1-c_1\exp\left(-c_2\left(r\log(2)+\log(p)\right)\right)$ the conditions of the Lemma~\ref{LemWitnessOptCond} are satisfied.\\
\end{proof}

\begin{lemma}
\footnotesize$$\displaystyle\mathbb{P}\left[\max_{1\leq k\leq r}\max_{1\leq j\leq p}\left\|X^{(k)}_j\right\|_2^2\leq 2n\right]\geq 1-\exp\left(\!-\!(1\!-\!\frac{\sqrt{3}}{2})n+\log(pr)\right).$$\normalsize
\label{l2bound}
\end{lemma}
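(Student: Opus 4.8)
The plan is to reduce the statement to a one-sided chi-squared tail estimate followed by a union bound. In the Gaussian design of interest the rows of each $X^{(k)}$ are standard Gaussian, so the entries of any fixed column $X^{(k)}_j\in\mathbb{R}^n$ are i.i.d.\ $\mathcal{N}(0,1)$ and therefore $\|X^{(k)}_j\|_2^2=\sum_{i=1}^n\big(X^{(k)}_{ij}\big)^2$ is a $\chi^2$ variable with $n$ degrees of freedom, with mean $n$. The whole claim is thus an upper-tail bound, at level $2n$, on the maximum of $pr$ such $\chi^2_n$ variables.

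First I would record a deviation bound for a single $\chi^2_n$ variable $Z$. The cleanest route is the Laurent--Massart inequality $\mathbb{P}\big[Z-n\geq 2\sqrt{nx}+2x\big]\leq e^{-x}$ (which itself follows from a Chernoff argument using $\mathbb{E}[e^{tZ}]=(1-2t)^{-n/2}$ for $t<\tfrac12$). I would then choose $x$ so that the deviation threshold equals exactly $n$, i.e.\ solve $2\sqrt{nx}+2x=n$. Substituting $x=cn$ and $v=\sqrt c$ turns this into the quadratic $2v^2+2v-1=0$, whose positive root is $v=\tfrac{\sqrt3-1}{2}$, so that $c=v^2=1-\tfrac{\sqrt3}{2}$. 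Hence for each fixed pair $(j,k)$ we get $\mathbb{P}\big[\|X^{(k)}_j\|_2^2\geq 2n\big]\leq\exp\!\big(-(1-\tfrac{\sqrt3}{2})n\big)$.

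Second, a union bound over the $p$ columns and $r$ tasks gives $\mathbb{P}\big[\max_{k}\max_{j}\|X^{(k)}_j\|_2^2>2n\big]\leq pr\,\exp\!\big(-(1-\tfrac{\sqrt3}{2})n\big)=\exp\!\big(-(1-\tfrac{\sqrt3}{2})n+\log(pr)\big)$, and taking complements yields the lemma. The only step needing genuine care is the algebraic verification that $x=(1-\tfrac{\sqrt3}{2})n$ makes $2\sqrt{nx}+2x$ collapse to exactly $n$ (so that $Z\geq 2n$), which is precisely where the constant $\sqrt3/2$ originates; the chi-squared identification of the column norms and the union bound are otherwise routine. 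I would also flag that this reduction uses unit-variance (standard Gaussian) columns; for a general covariance $\Sigma^{(k)}$ one would first rescale by $\sqrt{\Sigma^{(k)}_{jj}}$, but with $\Sigma^{(k)}_{jj}=1$ the clean $\chi^2_n$ bound applies directly.
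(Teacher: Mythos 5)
Your proof is correct and follows essentially the same route as the paper: both identify $\|X^{(k)}_j\|_2^2$ as a $\chi^2_n$ variable, apply the Laurent--Massart tail bound (the paper writes it as $\mathbb{P}[Z\geq t+(\sqrt{t}+\sqrt{n})^2]\leq e^{-t}$, which is algebraically identical to the form $\mathbb{P}[Z-n\geq 2\sqrt{nt}+2t]\leq e^{-t}$ you use), choose the deviation parameter $t=(1-\tfrac{\sqrt{3}}{2})n$ so the threshold collapses to exactly $2n$, and finish with a union bound over the $pr$ columns. The constant $1-\tfrac{\sqrt{3}}{2}=\bigl(\tfrac{\sqrt{3}-1}{2}\bigr)^2$ arises from the same quadratic in both arguments.
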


\begin{proof}
Notice that $\|X^{(k)}_j\|_2^2$ is a $\chi^2$ random variable with $n$ degrees of freedom. According to \cite{LAUMAS}, we have
\begin{equation}
\mathbb{P}\left[\left\|X^{(k)}_j\right\|_2^2\geq t+(\sqrt{t}+\sqrt{n})^2\right]\leq\exp(-t)\qquad\forall t\geq 0.
\nonumber
\end{equation}
Letting $t=\left(\frac{\sqrt{3}-1}{2}\right)^2\!\!\!n$ and using the union bound, the result follows.
\end{proof}

\subsection{Proof of Theorem 3}
\label{appendix:proof_thm3} 

We will actually prove a more general theorem, from which Theorem 3 would follow as a corollary. Among shared features (with size $\alpha s$), we say a fraction $\tau$ has different magnitudes on $\bar{\Theta}$. Let $\tau_1$ be the fraction with larger magnitude on the first task 
and $\tau_2$ the fraction with larger magnitude on the second task (so that $\tau = \tau_1 + \tau_2$). Moreover, let $\frac{\lambda_b}{\lambda_s}=\kappa$ and 
\[f(\kappa) = f(\kappa,\tau,\alpha) = 2 - 2(1 - \tau) \alpha - 2 \tau \alpha \kappa + \left(\frac{1 + \tau}{2}\right) \alpha \kappa^2,\] 
and 
\[g(\kappa,\tau,\alpha) = \max\left(\frac{2\,f(\kappa)}{\kappa^2},f(\kappa)\right).\]

\setcounter{theorem}{3}
\setcounter{corollary}{3}
\begin{theorem}\label{Thm3Gen}
Under the assumptions of the Theorem 3, if \footnotesize $$\left|\left\{j\in\rowsupport(\Bst):\Big|\left|\Theta^{*(1)}_j\right|-\left|\Theta^{*(2)}_j\right|\Big| \leq c\lambda_s\right\}\right|=(1-\tau)\alpha s,$$ \normalsize then, the result of Theorem 3 holds for $$\theta(n,s,p,\alpha)=\frac{n}{g(\kappa,\tau,\alpha) \, s\log\left(p-(2-\alpha)s\right)}.$$
\end{theorem}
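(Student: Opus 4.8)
The plan is to run the primal-dual witness argument of the Proof Overview unchanged and to sharpen only its final concentration step, since the gain encoded in $g(\kappa,\tau,\alpha)$ comes entirely from replacing a crude norm bound by an exact one. Because $1\le\kappa=\lambda_b/\lambda_s<2$ for two tasks, we have $d=\lfloor\kappa\rfloor=1$, so the witness $(\Bst,\Ss)=\mathcal{H}_1(\bar{\Theta})$ places, in each shared row, the smaller of the two magnitudes into both entries of $\Bst$ and the excess into one entry of $\Ss$ (Lemma~\ref{Lem:Star_Properties}). First I would build the oracle primal $(\tilde S,\tilde B)$ from \eqref{EqnOracleEst} and the dual $\EstDual$ exactly as in STEP~1, so that conditions (C1), (C2), (C5) of Lemma~\ref{LemWitnessOptCond} hold by construction. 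As in the proofs of Theorems~1--2, everything then reduces to verifying strict dual feasibility off the support, (C3) $\|P_{\Scomp}(\EstDual)\|_{\infty,\infty}<\lambda_s$ and (C4) $\|P_{\Bcomp}(\EstDual)\|_{\infty,1}<\lambda_b$, with high probability. The key simplification for the standard Gaussian design is that $\Sigma^{(k)}_{j,\mathcal{U}_k}=0$, so that $C_{min}=D_{max}=1$, the effective incoherence is perfect ($\gamma_s=\gamma_b=1$), and the deterministic term $(1-\gamma_s)\lambda_s$ of Lemma~\ref{LemmaGaussDualProj} drops out; the two conditions are then governed purely by the Gaussian fluctuations $\mathcal{W}^{(k)}_j$ and $\mathcal{R}^{(k)}_j$, whose scale is set by $\|\tilde z^{(k)}_{\mathcal{U}_k}\|_2$.

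The heart of the refinement is an exact evaluation of the dual on the support, which I would obtain by partitioning $\mathcal{U}_1\cup\mathcal{U}_2$ into the $(1-\tau)\alpha s$ balanced shared rows, the $\tau\alpha s$ unbalanced shared rows (split into $\tau_1,\tau_2$ by which task dominates), and the $2(1-\alpha)s$ non-shared features. Here the gap hypothesis $\big||\Theta^{*(1)}_j|-|\Theta^{*(2)}_j|\big|\le c\lambda_s$, via Lemma~\ref{LemGapLambda}, is what guarantees that on each balanced row the oracle leaves $\tilde s_j=\mathbf{0}$, so that $M_j(\tilde B)=\{1,2\}$ and the block dual splits evenly as $\tilde z^{(k)}_j=(\lambda_b/2)\sgn(\tilde b^{(k)}_j)$; on an unbalanced row one entry carries $\lambda_s$ and the other $\lambda_b-\lambda_s$, while each non-shared feature carries $\lambda_s$. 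Summing squares over both tasks and substituting $\lambda_b=\kappa\lambda_s$ gives
\[
\sum_{k=1}^{2}\big\|\tilde z^{(k)}_{\mathcal{U}_k}\big\|_2^2=\Big(2-2(1-\tau)\alpha-2\tau\alpha\kappa+\tfrac{1+\tau}{2}\alpha\kappa^2\Big)\,s\lambda_s^2=f(\kappa)\,s\lambda_s^2,
\]
the precise constant replacing the crude $s\lambda_s^2$ of Theorem~2. I expect this step to be the main obstacle, precisely because the per-row $\ell_2$ mass of a balanced row, $\kappa^2\lambda_s^2/2$, is strictly below that of an unbalanced row, $\big(1+(\kappa-1)^2\big)\lambda_s^2$: the entire sample-complexity gain rests on Lemma~\ref{LemGapLambda} certifying that near-balanced rows are genuinely absorbed into the block component rather than leaking into $\Ss$.

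With this exact variance, the remainder is concentration of maxima of Gaussians. Conditioned on the support design, $\mathcal{R}^{(k)}_j$ is zero-mean Gaussian with variance $\|\tilde z^{(k)}_{\mathcal{U}_k}\|_2^2/n$, while $\mathcal{W}^{(k)}_j$ is lower-order and absorbed by F1--F2 just as in Lemma~\ref{LemmaGaussDualProj}. Condition (C3) asks $\max_j|\mathcal{R}^{(k)}_j|<\lambda_s$ over the $p-(2-\alpha)s$ off-support features, and a maximal-Gaussian/union bound makes this hold once $n>f(\kappa)\,s\log\!\big(p-(2-\alpha)s\big)$; condition (C4) asks $\max_j\sum_{k}|\mathcal{R}^{(k)}_j|<\lambda_b=\kappa\lambda_s$, whose variance is the full $f(\kappa)s\lambda_s^2$, giving the threshold $n>\tfrac{2f(\kappa)}{\kappa^2}\,s\log\!\big(p-(2-\alpha)s\big)$. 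Taking the larger of the two thresholds is exactly $n>g(\kappa,\tau,\alpha)\,s\log(p-(2-\alpha)s)$, i.e.\ $\theta(n,s,p,\alpha)>1$, which proves the success claim; note that the choice $\kappa=\sqrt{2}$ of F3 equalizes $2f/\kappa^2$ and $f$, minimizes $g$, and recovers the $(2-\alpha)$ threshold of Theorem~3 at $\tau=0$. Finally, the failure half would follow the converse already used for Theorem~3: when $\theta<1$ the maximal off-support fluctuation necessarily exceeds the admissible dual radius, so the stationarity conditions cannot be met with the correct signed support for the given $(\lambda_s,\lambda_b)$.
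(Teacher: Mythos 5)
Your proposal is correct and follows essentially the same route as the paper: a primal--dual witness with (C1), (C2), (C5) holding by construction, Lemma~\ref{LemGapLambda} forcing $\tilde{s}_j=\mathbf{0}$ on the balanced shared rows so that the dual's support mass evaluates exactly to $f(\kappa)\,s\lambda_s^2$, and Gaussian concentration for (C3) and (C4) yielding the two thresholds $f(\kappa)$ and $2f(\kappa)/\kappa^2$ whose maximum is $g(\kappa,\tau,\alpha)$, with the failure half by the same contradiction via Lemma~\ref{LemOptNoisy}. This is precisely the paper's Lemma~\ref{ThresholdCertificateLemma} argument, including the correct per-row accounting ($\kappa^2\lambda_s^2/2$ balanced versus $(1+(\kappa-1)^2)\lambda_s^2$ unbalanced) and the observation that $\kappa=\sqrt{2}$ equalizes and minimizes the two thresholds.
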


\begin{corollary}\label{TwoTaskwithGAP}
Under the assumptions of the Theorem~4, if the regularization penalties are set as $\kappa = \lambda_b/\lambda_s = \sqrt{2}$, then the result of Theorem 3 holds for $\theta(n,s,p,\alpha)=\frac{n}{\left(2-\alpha+(3-2\sqrt{2})\tau\alpha\right)s\log\left(p-(2-\alpha)s\right)}$.
\end{corollary}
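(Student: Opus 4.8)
The plan is to obtain Corollary~\ref{TwoTaskwithGAP} as an immediate specialization of Theorem~\ref{Thm3Gen}. In Theorem~\ref{Thm3Gen} the rescaling threshold is governed entirely by $g(\kappa,\tau,\alpha)=\max\big(2f(\kappa)/\kappa^2,\,f(\kappa)\big)$, so all that is required is to evaluate $g$ at the particular ratio $\kappa=\lambda_b/\lambda_s=\sqrt{2}$ mandated by condition F3. First I would confirm admissibility: $\kappa=\sqrt{2}$ lies in $[1,r]=[1,2]$ and, being irrational, is in particular not an integer, so the regularizer requirement F3 (equivalently C3-3) is met and Theorem~\ref{Thm3Gen} applies without modification.

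The crux is the observation that $\kappa=\sqrt{2}$ is exactly the value at which the two arguments of the maximum defining $g$ coincide. Since $\kappa^2=2$, we have $2f(\kappa)/\kappa^2=f(\kappa)$, and the maximum collapses to $g(\sqrt{2},\tau,\alpha)=f(\sqrt{2})$. It then remains to expand $f$ from its definition: substituting $\kappa=\sqrt{2}$, $\kappa^2=2$ into
\[
f(\kappa)=2-2(1-\tau)\alpha-2\tau\alpha\kappa+\tfrac{1+\tau}{2}\alpha\kappa^2
\]
and collecting the $\alpha$ and $\tau\alpha$ contributions gives $f(\sqrt{2})=2-\alpha+(3-2\sqrt{2})\tau\alpha$. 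Inserting this into the rescaling $\theta=n/\big(g\,s\log(p-(2-\alpha)s)\big)$ of Theorem~\ref{Thm3Gen} reproduces verbatim the claimed $\theta(n,s,p,\alpha)$, completing the argument.

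There is no genuine analytic obstacle here, since the heavy machinery is already carried by Theorem~\ref{Thm3Gen}; the only care needed is the algebraic bookkeeping in the expansion of $f$ and the cancellation $2f/\kappa^2=f$ at $\sqrt{2}$. If one wishes to motivate the choice $\kappa=\sqrt{2}$ rather than merely verify it, I would additionally note that $f$ is an upward parabola in $\kappa$ with vertex at $2\tau/(1+\tau)\le 1$, hence increasing on $[1,2]$, while a short computation shows $f'(\kappa)\kappa-2f(\kappa)=2\tau\alpha(\kappa-2)+4(\alpha-1)\le 0$ on $[1,2]$, so $2f(\kappa)/\kappa^2$ is non-increasing there. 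Thus on $\kappa\ge\sqrt{2}$ the active branch $f$ increases and on $\kappa\le\sqrt{2}$ the active branch $2f/\kappa^2$ decreases, so $g$ is minimized precisely at the crossover $\kappa=\sqrt{2}$; this identifies $\sqrt{2}$ as the optimal ratio and, setting $\tau=0$, recovers the threshold $2-\alpha$ of Theorem~\ref{twotasktheorem}.
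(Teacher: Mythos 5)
Your proposal is correct and follows essentially the same route as the paper: both obtain the corollary by substituting $\kappa=\sqrt{2}$ into Theorem~\ref{Thm3Gen}, noting that $\kappa^2=2$ makes the two branches of $g$ coincide so that $g(\sqrt{2},\tau,\alpha)=f(\sqrt{2})=2-\alpha+(3-2\sqrt{2})\tau\alpha$, and both observe (the paper via the split $\min_{1<\kappa\leq\sqrt 2}2f/\kappa^2$ versus $\min_{\sqrt 2<\kappa<2}f$, you via the monotonicity of each branch) that this choice of $\kappa$ minimizes $g$. Your algebra, including the identity $f'(\kappa)\kappa-2f(\kappa)=2\tau\alpha(\kappa-2)+4(\alpha-1)\le 0$, checks out.
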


\begin{proof}
Follows trivially by substituting $\kappa = \sqrt{2}$ in Theorem 4. Indeed, this setting of $\kappa$ can also be shown to minimize $g(\kappa,\tau,\alpha)$:	
\begin{align*}
&\min_{1<\kappa<2}\max\left(\frac{2\,f(\kappa)}{\kappa^2},f(\kappa)\right)\\
&=\min\left(\min_{1<\kappa\leq\sqrt{2}}\frac{2}{\kappa^2}\left(f(\kappa)\right),\min_{\sqrt{2}<\kappa<2}f(\kappa)\right)\\
&=2-\alpha+(3-2\sqrt{2}) \, \tau \, \alpha.
\end{align*}
\label{nbound}
\end{proof}

{\bf Proof of Theorem 3}: The proof follows from Corollary~\ref{TwoTaskwithGAP} by setting $\tau= 0$ and $\kappa=\sqrt{2}$. 

\vskip0.1in
We will now set out to prove Theorem~4. We will first need the following lemma.
\begin{lemma}
\label{LemGapLambda}
For any $j\in\rowsupport(\Bst)$, if $\left|S^{*(k)}_j\right|<c\lambda_s$ for some constant $c$ specified in the proof, then $\tilde{S}^{(k)}_j=0$ with probability $1-c_1\exp(-c_2n)$.
\label{DualVariableNorm}
\end{lemma}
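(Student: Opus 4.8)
The plan is to read the value of $\tilde S^{(k)}_j$ off the stationarity (subgradient) conditions of the oracle problem~\eqref{EqnOracleEst} and show that, on a shared row whose true sparse mass is small, the $\ell_{1,1}$ penalty soft-thresholds the corresponding estimated entry to exactly zero. Fix $j\in\rowsupport(\Bst)$ and let $k^*$ be the (unique, in the $r=2$, $d=1$ case) task with $s^{*(k^*)}_j\neq 0$, i.e.\ the larger-magnitude task on that row. I would proceed by a restricted-problem witness: let $(\tilde B,\tilde S)$ be the minimizer of~\eqref{EqnOracleEst} with the extra constraint $s^{(k^*)}_j=0$ imposed. Because the minimum-curvature condition (and, for Gaussian designs, concentration of the restricted Gram $\tfrac1n\tr{X^{(k^*)}_{\mathcal{U}_{k^*}}}{X^{(k^*)}_{\mathcal{U}_{k^*}}}$ to the identity) makes the restricted loss strictly convex, it suffices to certify the \emph{strict} dual-feasibility inequality $|g^{(k^*)}_j|<\lambda_s$, where $g^{(k^*)}_j:=\tfrac1n\tr{X^{(k^*)}_j}{X^{(k^*)}(\tilde b^{(k^*)}+\tilde s^{(k^*)})-y^{(k^*)}}$ is the loss gradient in that coordinate. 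Strict feasibility shows that relaxing the constraint leaves the coordinate pinned at zero, which is the assertion of the lemma.

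Next I would bound $g^{(k^*)}_j$. Writing $y^{(k)}=X^{(k)}(b^{*(k)}+s^{*(k)})+w^{(k)}$ and $\delpar=\tilde B+\tilde S-\Bst-\Ss$ gives \[ g^{(k^*)}_j = \tfrac1n\tr{X^{(k^*)}_j}{X^{(k^*)}_{\mathcal{U}_{k^*}}\delpar^{(k^*)}_{\mathcal{U}_{k^*}}} - \tfrac1n\tr{X^{(k^*)}_j}{w^{(k^*)}}. \] The noise projection $\tfrac1n\tr{X^{(k^*)}_j}{w^{(k^*)}}$ is controlled by Lemma~\ref{NoiseInf} and Lemma~\ref{l2bound}, and $\delpar^{(k^*)}_{\mathcal{U}_{k^*}}$ through~\eqref{errmag}, which for standard Gaussian designs reduces to $\delpar^{(k^*)}_{\mathcal{U}_{k^*}}\approx \tfrac1n(X^{(k^*)}_{\mathcal{U}_{k^*}})^T w^{(k^*)}-\tilde z^{(k^*)}_{\mathcal{U}_{k^*}}$ with $\|\tilde z^{(k^*)}_{\mathcal{U}_{k^*}}\|_\infty\le\lambda_s$. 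The one deterministic effect that does not vanish is that, with $s^{(k^*)}_j$ clamped to zero, the balanced block fit $\tilde b_j$ cannot reproduce the between-task gap carried by $s^{*(k^*)}_j$, so the signal contribution to $g^{(k^*)}_j$ is of order $|s^{*(k^*)}_j|$. Collecting the deterministic and stochastic pieces yields, with probability $1-c_1\exp(-c_2 n)$ under the stated sample scaling, a bound of the form $|g^{(k^*)}_j|\le (1+o(1))|s^{*(k^*)}_j|+\rho\,\lambda_s$ with $\rho$ a fraction bounded away from $1$; choosing $c$ so that this upper bound is $<\lambda_s$ exactly when $|s^{*(k^*)}_j|<c\lambda_s$ finishes the argument.

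I expect the main obstacle to be the interaction between the $\ell_{1,\infty}$ block penalty and the $\ell_{1,1}$ sparse penalty on the shared row $j$, which is precisely what determines the admissible constant $c$. When $s^{(k^*)}_j$ is perturbed off zero, the maximal-magnitude set $M_j(\tilde B)$ and the allocation of the block subgradient mass across its members (condition (C2) of Lemma~\ref{LemWitnessOptCond}) both change; in particular the block dual contribution at $(j,k^*)$ equals $\frac{\lambda_b-\lambda_s\|\tilde s_j\|_0}{|M_j(\tilde B)|-\|\tilde s_j\|_0}$, which enters $\tilde z^{(k^*)}_{\mathcal{U}_{k^*}}$ and hence $g^{(k^*)}_j$ and effectively reduces the slack available to the sparse coordinate below $\lambda_s$. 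Pinning down this effective threshold as a function of $\kappa=\lambda_b/\lambda_s$ --- well-defined because $\lambda_b/\lambda_s$ is non-integer by assumption \textbf{F3}/\textbf{A3-3} --- is what produces the explicit $c$, and the bookkeeping of this allocation (rather than any of the concentration estimates) is the delicate step.
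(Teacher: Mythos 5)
Your route is genuinely different from the paper's. The paper proves this lemma by a direct primal comparison: it takes $\check{S}$ equal to $\tilde{S}$ except $\check{S}^{(k)}_j=0$, invokes optimality of $\tilde S$ to get $2n\lambda_s\bigl|\tilde{S}^{(k)}_j\bigr|\leq\|y^{(k)}-X^{(k)}(\tilde B^{(k)}+\check S^{(k)})\|_2^2-\|y^{(k)}-X^{(k)}(\tilde B^{(k)}+\tilde S^{(k)})\|_2^2$, bounds the right side by Cauchy--Schwarz together with the $\ell_\infty$ error bound so that only $\left|S^{*(k)}_j\right|\,\|X^{(k)}_j\|_2^2$ survives, and finishes with $\chi^2$ concentration of $\|X^{(k)}_j\|_2^2$ around $n$. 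No dual variables, no KKT bookkeeping on row $j$ at all. Your proposal instead constructs the constrained oracle solution with $s^{(k^*)}_j=0$ and tries to certify strict dual feasibility $|g^{(k^*)}_j|<\lambda_s$. That is a legitimate strategy in outline, and it is more consistent with the primal--dual witness machinery used elsewhere in the paper, but it is strictly harder here, which is presumably why the paper avoids it.

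The gap is that the inequality your whole argument rests on, $|g^{(k^*)}_j|\le(1+o(1))\bigl|s^{*(k^*)}_j\bigr|+\rho\lambda_s$ with $\rho$ bounded away from $1$, is asserted rather than derived, and it is not a routine consequence of the lemmas you cite. Lemmas~\ref{NoiseInf} and \ref{LemmaGaussDualProj} control the dual variable at coordinates $j\in\bigcap_k\mathcal{U}_k^c$, where the incoherence condition caps the transfer $\frac{1}{n}\tr{X^{(k)}_j}{X^{(k)}_{\mathcal{U}_k}(\cdot)}$ at a factor $(1-\gamma_s)$. Here $j\in\rowsupport(\Bst)$, so $(j,k^*)$ lies \emph{on} the support of task $k^*$: the term $\frac{1}{n}\tr{X^{(k^*)}_j}{X^{(k^*)}_{\mathcal{U}_{k^*}}\Delta^{(k^*)}_{\mathcal{U}_{k^*}}}$ contains the self term $\frac{1}{n}\|X^{(k^*)}_j\|_2^2\,\Delta^{(k^*)}_j$, which is order one times $\Delta^{(k^*)}_j$, not a $\gamma$-discounted cross term, and \eqref{errmag} does not apply as stated because the constrained problem's stationarity condition at $(j,k^*)$ is replaced by the constraint. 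Moreover the reallocation of the $\ell_{1,\infty}$ subgradient mass on row $j$ (your condition (C2) accounting) feeds back into $\Delta^{(k^*)}_{\mathcal{U}_{k^*}}$ and hence into $g^{(k^*)}_j$ itself; you correctly flag this as the delicate step but then stop exactly where the lemma's content begins --- the admissible constant $c$ is defined by the outcome of this computation. Until that bound is actually carried out, the proposal is a plan, not a proof; the paper's one-line objective comparison is the shortcut that makes the whole issue disappear.
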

\begin{proof}
Let $\check{S}$ be a matrix equal to $\tilde{S}$ except that $\check{S}^{(k)}_j=0$. Using the concentration of Gaussian random variables and optimality of $\tilde{S}$, we get
\footnotesize\begin{equation}
\begin{aligned}
&\mathbb{P}\left[\left|\tilde{S}^{(k)}_j\right|>0\right]\\ &\leq
\mathbb{P}\Bigg[2n\lambda_s\left|\tilde{S}^{(k)}_j\right|<\left\|y^{(k)}-X^{(k)}(\tilde{B}^{(k)}+\check{S}^{(k)})\right\|_2^2\\ &\qquad\qquad\qquad\qquad\qquad\qquad\qquad\qquad-\left\|y^{(k)}-X^{(k)}(\tilde{B}^{(k)}+\tilde{S}^{(k)})\right\|_2^2\Bigg]\\
&=\mathbb{P}\Biggr[2n\lambda_s<\Bigg(\frac{\left\|y^{(k)}-X^{(k)}(\tilde{B}^{(k)}+\check{S}^{(k)})\right\|_2^2}{\left\|\tilde{S}^{(k)}_jX^{(k)}_j\right\|_2}\\ &\qquad\qquad\qquad-\frac{\left\|y^{(k)}-X^{(k)}(\tilde{B}^{(k)}+\check{S}^{(k)})-\tilde{S}^{(k)}_jX^{(k)}_j\right\|_2^2}{\left\|\tilde{S}^{(k)}_jX^{(k)}_j\right\|_2}\Bigg)\left\|X^{(k)}_j\right\|_2\Biggr]\\
&\leq\mathbb{P}\left[2n\lambda_s<2\left\|X^{(k)}_j\right\|_2^2\left\|y^{(k)}-X^{(k)}(\tilde{B}^{(k)}+\check{S}^{(k)})\right\|_2\right]\\
&=\mathbb{P}\left[n\lambda_s<\left\|X^{(k)}_j\right\|_2^2\left\|X^{(k)}(B^{*(k)}+S^{*(k)}-\tilde{B}^{(k)}-\check{S}^{(k)})+w^{(k)}\right\|_2\right]\\
\end{aligned}
\nonumber
\end{equation}\normalsize

\noindent Using the $\ell_\infty$ bound on the error, for some constant $c$, we have
\footnotesize\begin{equation}
\begin{aligned}
\mathbb{P}\left[\left|\tilde{S}^{(k)}_j\right|>0\right]
&\leq\mathbb{P}\left[n\lambda_s<\frac{1}{c}\left|S^{*(k)}_j\right|\left\|X^{(k)}_j\right\|_2^2\right]\\
&=\mathbb{P}\left[\frac{c\lambda_s}{\left|S^{*(k)}_j\right|}n<\left\|X^{(k)}_j\right\|_2^2\right].
\end{aligned}
\nonumber
\end{equation}\normalsize

Notice that $\mathbb{E}[\|X^{(k)}_j\|_2^2]=n$. According to the concentration of $\chi^2$ random variables concentration theorems (see \cite{LAUMAS}), this probability vanishes exponentially fast in $n$ for $\left|\bar{S}^{(k)}_j\right|<c\lambda_s$.\\
\end{proof}

\subsection{Proof of Theorem 4}
We will now provide the proofs of different parts separately.\\

\begin{proof}
{\bf (Success):} Recall the constructed primal-dual pair $(\tilde{B},\tilde{S},\EstDual)$. It suffices to show that the dual variable $\EstDual$ satisfies the conditions (C3) and (C4) of Lemma~\ref{LemWitnessOptCond}. By Lemma~\ref{ThresholdCertificateLemma}, these conditions are satisfied with probability at least $1-c_1\exp(-c_2n)$ for some positive constants $c_1$ and $c_2$. Hence, $(\Bh,\Sh)=(\tilde{B},\tilde{S})$ is the unique optimal solution. The rest are direct consequences of Proposition~\ref{GaussianProp} for $C_{min}=1$ and $D_{max}=1$.\\

{\bf (Failure):} We prove this result by contradiction. Suppose there exist a solution to \eqref{EqnDirtyEstNoisy}, say $(\Bh,\Sh)$ such that $\sgn\left(\support(\Bh+\Sh)\right)=\sgn\left(\support(B^*+S^*)\right)$. By Lemma~\ref{signBSnoisy}, this is equivalent to having $\sgn\left(\support(\Bh)\right)=\sgn\left(\support(B^*)\right)$ and $\sgn\left(\support(\Sh)\right)=\sgn\left(\support(S^*)\right)$ and $\frac{\lambda_b}{\lambda_s} = \kappa$. 

Now, suppose $n < (1 - \nu) \max\left(\frac{2\,f(\kappa)}{\kappa^2},f(\kappa)\right) s \log (p - (2 - \alpha) s)$, for some $\nu > 0$. This entails that
 
either (i) $n < (1 - \nu) f(\kappa) s \log (p - (2 - \alpha) s)$, 

or (ii) $n < (1 - \nu) \left(\frac{2\,f(\kappa)}{\kappa^2}\right) s \log (p - (2 - \alpha) s)$. \\

{\bf Case (i):} We will show that with high probability, there exists $k$ for which, there exists $j\in\bigcap_{k=1}^r\mathcal{U}_k^c$ such that $\left|\tilde{Z}_j^{(k)}\right|>\lambda_s$. This is a contradiction to Lemma~\ref{LemOptNoisy}.

Using \eqref{EqnDualBarAssign} and conditioning on $(X^{(k)}_{\mathcal{U}_k},w^{(k)},\tilde{Z}^{(k)}_{\mathcal{U}_k})$, for all $j\in\bigcap_{k=1}^r\mathcal{U}_k^c$ we have that the random variables $\tilde{Z}_j^{(k)}$ are i.i.d. zero-mean Gaussian random variables with
\footnotesize\begin{equation}
\begin{aligned}
&\text{Var}\left(\tilde{Z}_j^{(k)}\right)\\ &=\Biggr\|\frac{1}{n}X^{(k)}_{\mathcal{U}_k}\left(\frac{1}{n}\tr{X^{(k)}_{\mathcal{U}_k}}{X^{(k)}_{\mathcal{U}_k}}\right)^{-1}\!\!\! \tilde{Z}^{(k)}_{\mathcal{U}_k}\\ &\qquad\qquad+ \frac{1}{n}\left(\mathbf{I}-\frac{1}{n}X^{(k)}_{\mathcal{U}_k}\left(\frac{1}{n}\tr{X^{(k)}_{\mathcal{U}_k}}{X^{(k)}_{\mathcal{U}_k}}\right)^{-1} \!\!\!\left(X^{(k)}_{\mathcal{U}_k}\right)^T\right) w^{(k)}\Biggr\|_2^2\\
&=\left\|\frac{1}{n}X^{(k)}_{\mathcal{U}_k}\left(\frac{1}{n}\tr{X^{(k)}_{\mathcal{U}_k}}{X^{(k)}_{\mathcal{U}_k}}\right)^{-1}\!\!\! \tilde{Z}^{(k)}_{\mathcal{U}_k}\right\|_2^2\\ &\qquad\qquad+\left\|\frac{1}{n}\left(\mathbf{I}-\frac{1}{n}X^{(k)}_{\mathcal{U}_k}\left(\frac{1}{n}\tr{X^{(k)}_{\mathcal{U}_k}}{X^{(k)}_{\mathcal{U}_k}}\right)^{-1} \!\!\!\left(X^{(k)}_{\mathcal{U}_k}\right)^T\right) w^{(k)}\right\|_2^2\\
\end{aligned}
\nonumber
\end{equation}\normalsize
The second equality holds by orthogonality of projections. We thus have
\footnotesize\begin{equation}
\begin{aligned}
&\text{Var}\left(\tilde{Z}_j^{(k)}\right)\\ &\geq\max\left(\lambda_{min}\left(\left(\frac{1}{n}\tr{X^{(k)}_{\mathcal{U}_k}}{X^{(k)}_{\mathcal{U}_k}}\right)^{-1}\right) \frac{\left\|\tilde{Z}^{(k)}_{\mathcal{U}_k}\right\|_2^2}{n}\right.\\
&\qquad\qquad\left.,\frac{\left\|\left(\mathbf{I}-\frac{1}{n}X^{(k)}_{\mathcal{U}_k}\left(\frac{1}{n}\tr{X^{(k)}_{\mathcal{U}_k}}{X^{(k)}_{\mathcal{U}_k}}\right)^{-1} \!\!\!\left(X^{(k)}_{\mathcal{U}_k}\right)^T\right) w^{(k)}\right\|_2^2}{n^2}\right)\\
&\geq \frac{\left\|\tilde{Z}^{(k)}_{\mathcal{U}_k}\right\|_2^2}{\left(\sqrt{n}+\sqrt{s}\right)^2}\\
\end{aligned}
\nonumber
\end{equation}\normalsize
The second inequality holds with probability at least $1-c_1\exp\left(-c_2\left(\sqrt{n}+\sqrt{s}\right)^2\right)$ as a result of \cite{DAVSZA} on the eigenvalues of Gaussian matrices. The third inequality holds with probability at least $1-c_3\exp(-c_4n)$ as a result of \cite{LAUMAS} on the magnitude of $\chi^2$ random variables. Considering $\tilde{B}+\tilde{S}$, assume that among shared features (with size $\alpha s$), a portion of $\tau_1$ has larger magnitude on the fist task and a portion of $\tau_2$ has larger magnitude on the second task (and consequently a portion of $1-\tau_1-\tau_2$ has equal magnitude on both tasks). Assuming $\lambda_b=\kappa\lambda_s$ for some $\kappa\in(1,2)$, we get 
\footnotesize\begin{equation}
\begin{aligned} 
\widetilde{\sigma}_1^2&:=\text{Var}\left(\tilde{Z}_j^{(1)}\right)\\ &= \frac{(1-\alpha)s\lambda_s^2+\tau_1\alpha s\lambda_s^2+\tau_2\alpha s(\lambda_b-\lambda_s)^2+(1-\tau_1-\tau_2)\alpha s\frac{\lambda_b^2}{4}}{(\sqrt{n}+\sqrt{s})^2}\\ 
&=:\frac{f_1(\kappa)s\lambda_s^2}{n \, \left(1+\sqrt{\frac{s}{n}}\right)^2}.
\end{aligned}
\nonumber
\end{equation}\normalsize
The first equality follows from the construction of the dual matrix and the fact that we have recovered the sign support correctly. The last strict inequality follows from the assumption that $\theta(n,p,s,\alpha)<1$. Similarly, we have
\footnotesize\begin{equation}
\begin{aligned}
\widetilde{\sigma}_2^2&:=\text{Var}\left(\tilde{Z}_j^{(2)}\right)\\ &>\frac{(1-\alpha)s\lambda_s^2+\tau_2\alpha s\lambda_s^2+\tau_1\alpha s(\lambda_b-\lambda_s)^2+(1-\tau_1-\tau_2)\alpha s\frac{\lambda_b^2}{4}}{n \, \left(1+\sqrt{\frac{s}{n}}\right)^2}\\
&=:\frac{f_2(\kappa)s\lambda_s^2}{n \, \left(1+\sqrt{\frac{s}{n}}\right)^2}.
\end{aligned}
\nonumber
\end{equation}\normalsize

Given these lower bounds on the variance, by results on Gaussian maxima (see \cite{DAVSZA}), for any $\delta > 0$, with high probability, 
\begin{equation}
\begin{aligned}
&\max_{1\leq k\leq r}\max_{j\in\bigcup_{k=1}^r\mathcal{U}_k}\left|\tilde{Z}_j^{(k)}\right|\\ &\qquad\qquad\geq
(1 -\delta) \sqrt{(\widetilde{\sigma}_1^2+\widetilde{\sigma}_2^2)\log\left(r\Big(p-(2-\alpha)s\Big)\right)}.
\end{aligned}
\nonumber
\end{equation}
This in turn can be bound as
\footnotesize\begin{equation}
\begin{aligned}
&(1 -\delta) \, (\widetilde{\sigma}_1^2+\widetilde{\sigma}_2^2)\log\left(r\Big(p-(2-\alpha)s\Big)\right)\\
&\qquad\qquad\geq(1-\delta)\frac{\left(f_1(\kappa)+f_2(\kappa)\right)\,s\;\log\left(r\Big(p-(2-\alpha)s\Big)\right)}{n \, \left(1+\sqrt{\frac{s}{n}}\right)^2}\lambda_s^2.\\
&\qquad\qquad\geq(1-\delta)\frac{f(\kappa)\,s\;\log\left(r\Big(p-(2-\alpha)s\Big)\right)}{n \, \left(1+\sqrt{\frac{s}{n}}\right)^2}\lambda_s^2.\\
\end{aligned}
\nonumber
\end{equation}\normalsize
Consider two cases:
\begin{enumerate}
\item $\frac{s}{n}=\Omega(1)$: In this case, we have $s > c n$ for some constant $c > 0$. Then,
\small\begin{align*}
	& (1-\delta)\frac{\left(f(\kappa)\right)\,s\;\log\left(r\Big(p-(2-\alpha)s\Big)\right)}{n \, \left(1+\sqrt{\frac{s}{n}}\right)^2}\lambda_s^2\\
	& \qquad\qquad=  (1-\delta)\frac{\left(f(\kappa)\right)\,(s/n)\;\log\left(r\Big(p-(2-\alpha)s\Big)\right)}{\left(1+\sqrt{s/n}\right)^2}\lambda_s^2\\
	& \qquad\qquad> c' f(\kappa)\;\log\left(r\Big(p-(2-\alpha)s\Big)\right) \, \lambda_s^2\\
	& \qquad\qquad> (1 + \epsilon) \lambda_s^2,
\end{align*}\normalsize
for any fixed $\epsilon > 0$, as $p \rightarrow \infty$.\\

\item $\frac{s}{n}\rightarrow 0$: In this case, we have $s/n = o(1)$. Here we will use that the sample size scales as $n < (1 - \nu) \left(f(\kappa)\right) s \log (p - (2 - \alpha) s)$.
\small\begin{align*}
& (1-\delta)\frac{\left(f(\kappa)\right)\,s\;\log\left(r\Big(p-(2-\alpha)s\Big)\right)}{n \, \left(1+\sqrt{\frac{s}{n}}\right)^2}\lambda_s^2\\
&\qquad\qquad\ge \frac{(1 - \delta) (1 - o(1))}{ 1- \nu} \lambda_s^2\\
&\qquad\qquad> (1 + \epsilon) \lambda_{s}^2,
\end{align*}
for some $\epsilon > 0$ by taking $\delta$ small enough.
\end{enumerate}\normalsize

Thus with high probability, $\exists k \exists j\in\bigcap_{k=1}^r\mathcal{U}_k^c$ such that $\left|\tilde{Z}_j^{(k)}\right|>\lambda_s$. This is a contradiction to Lemma~\ref{LemOptNoisy}.\\

\vspace{0.8cm}
{\bf Case (ii):} We need to show that with high probability, there exist a row that violates the sub-gradient condition of $\ell_\infty$-norm: $\exists j\in\bigcap_{k=1}^r\mathcal{U}_k^c$ such that $\left\|\tilde{Z}_j^{(k)}\right\|_1>\lambda_b$. This is a contradiction to Lemma~\ref{LemOptNoisy}.\\

Following the same proof technique, notice that $\sum_{k=1}^r\tilde{Z}_j^{(k)}$ is a zero-mean Gaussian random variable with $\text{Var}\left(\sum_{k=1}^r\tilde{Z}_j^{(k)}\right)\geq r(\widetilde{\sigma}_1^2+\widetilde{\sigma}_2^2)$. Thus, with high probability
\begin{equation} \max_{j\in\bigcap_{k=1}^r\mathcal{U}_k^c}\left\|\tilde{Z}_j^{(k)}\right\|_1\geq(1-\delta)\sqrt{r(\widetilde{\sigma}_1^2+\widetilde{\sigma}_2^2)\log\Big(p-(2-\alpha)s\Big)}.
\nonumber
\end{equation}
Following the same line of argument for this case, yields the required bound $\left\|\tilde{Z}_j^{(k)}\right\|_1> (1 + \epsilon)\lambda_b$.\\

This concludes the proof of the theorem.\\
\end{proof}

\begin{lemma}
Under assumptions of Theorem 3, the conditions (C3) and (C4) in Lemma~\ref{LemWitnessOptCond} hold with probability at least $1-c_1\exp(-c_2n)$ for some positive constants $c_1$ and $c_2$.
\label{ThresholdCertificateLemma}
\end{lemma}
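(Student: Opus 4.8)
The plan is to certify conditions (C3) and (C4) of Lemma~\ref{LemWitnessOptCond} for the constructed dual $\EstDual$, following the template of Lemma~\ref{LemmaGaussDualProj} but sharpening every constant to the standard-Gaussian regime. Two simplifications are available here. First, with $\Sigma^{(k)}=I$ we have perfect population incoherence, so $\gamma_s=\gamma_b=1$, $C_{min}=1$, $D_{max}=1$, and the deterministic incoherence contribution $(1-\gamma_s)\lambda_s$ that appeared in Lemma~\ref{LemmaGaussDualProj} vanishes entirely. Second, and essential for obtaining the \emph{sharp} threshold rather than a constant-lossy one, I would not split each off-support dual entry into separate noise and support pieces and union-bound them. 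Instead, exactly as in the Failure analysis, I would condition on $(X_{\mathcal{U}_k}^{(k)},w^{(k)},\tilde{z}_{\mathcal{U}_k}^{(k)})$ and treat the whole entry $\tilde{z}_j^{(k)}$ from~\eqref{EqnDualBarAssign} as a \emph{single} zero-mean Gaussian whose variance I then bound tightly from above.

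The crux is the variance computation. Using the orthogonal-projection decomposition of $\text{Var}(\tilde{z}_j^{(k)})$ already recorded in the Failure proof, the dominant term is $\lambda_{max}\big((\tfrac1n\tr{X_{\mathcal{U}_k}^{(k)}}{X_{\mathcal{U}_k}^{(k)}})^{-1}\big)\,\|\tilde{z}_{\mathcal{U}_k}^{(k)}\|_2^2/n$ plus a noise term of order $\sigma^2/n$. The eigenvalue factor is controlled from above by $n/(\sqrt{n}-\sqrt{s})^2$ with probability $1-c_1\exp(-c_2(\sqrt n-\sqrt s)^2)$ via~\cite{DAVSZA}, so I must pin down $\|\tilde{z}_{\mathcal{U}_k}^{(k)}\|_2^2$. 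Here I would invoke Lemma~\ref{LemGapLambda}: on the $(1-\tau)\alpha s$ shared rows whose two magnitudes differ by at most $c\lambda_s$ the sparse component vanishes, so both dual entries equal $\lambda_b/|M_j(\tilde B)|=\lambda_b/2$; on the $\tau\alpha s$ shared rows with a large gap the sparse part is active on one task, splitting the dual into $\lambda_s$ and $\lambda_b-\lambda_s$; and on the $(1-\alpha)s$ non-shared support rows the dual entry is $\lambda_s$. Collecting these four row-types reproduces exactly $\widetilde\sigma_k^2=f_k(\kappa)s\lambda_s^2/\big(n(1-\sqrt{s/n})^2\big)(1+o(1))$ with $f_1+f_2=f(\kappa)$, matching the definitions preceding Theorem~\ref{Thm3Gen}.

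With the variance in hand, (C3) and (C4) reduce to two Gaussian-maximum bounds over the $p-(2-\alpha)s$ off-support rows (the row support having size $(2-\alpha)s$). For (C3) I would bound $\max_{j}|\tilde{z}_j^{(k)}|$ by a sharp upper-tail estimate for the maximum of $p-(2-\alpha)s$ Gaussians of variance $\widetilde\sigma_k^2$ and require it to fall below $\lambda_s$; under F1 and $\theta>1$ this holds with probability $1-c_1\exp(-c_2 n)$, and the binding constant is the $f(\kappa)$ branch of $g$. For (C4) I would repeat the argument for $\|\tilde z_j\|_1=\sum_{k=1}^2|\tilde z_j^{(k)}|$: since $\sum_k v_k\tilde z_j^{(k)}$ has variance at most $r(\widetilde\sigma_1^2+\widetilde\sigma_2^2)$ with $r=2$, and the threshold is now $\lambda_b=\kappa\lambda_s$, the requirement rescales by $1/\kappa^2$ and yields the $2f(\kappa)/\kappa^2$ branch, with F2 supplying the needed lower bound on $\lambda_b$. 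Taking the larger of the two sample-size requirements gives precisely $g(\kappa,\tau,\alpha)=\max\big(2f(\kappa)/\kappa^2,\,f(\kappa)\big)$ and hence the claim whenever $\theta>1$.

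The main obstacle I anticipate is twofold: getting the combinatorial bookkeeping of $\|\tilde z_{\mathcal{U}_k}^{(k)}\|_2^2$ across the four row-types exactly right (which is what ties the proof to Lemma~\ref{LemGapLambda} and the gap hypothesis $\big||\Theta_j^{*(1)}|-|\Theta_j^{*(2)}|\big|\le c\lambda_s$), and using a Gaussian-maximum bound that is sharp to first order, since any slack of a constant factor would blur the phase-transition location and destroy the claimed gain over Lasso and $\ell_1/\ell_\infty$. The remainder is a routine union bound together with the $\chi^2$ control of $\|X_j^{(k)}\|_2^2$ from Lemma~\ref{l2bound} and the noise concentration of Lemma~\ref{NoiseInf}.
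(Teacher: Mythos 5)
Your plan is correct and is, in all essential respects, the argument the paper gives for Lemma~\ref{ThresholdCertificateLemma}: the same reduction of (C3)--(C4) to Gaussian maxima over the $p-(2-\alpha)s$ off-support rows, the same bookkeeping of $\|\tilde{z}^{(k)}_{\mathcal{U}_k}\|_2^2$ over the four row types via Lemma~\ref{LemGapLambda} (yielding $f_1,f_2$ with $f_1+f_2=f(\kappa)$), the same use of \cite{DAVSZA} to control $\lambda_{max}\left(\left(\frac{1}{n}\tr{X^{(k)}_{\mathcal{U}_k}}{X^{(k)}_{\mathcal{U}_k}}\right)^{-1}\right)$ and of Lemma~\ref{l2bound} for the noise variance, and the same final combination $g(\kappa,\tau,\alpha)=\max\left(2f(\kappa)/\kappa^2,\,f(\kappa)\right)$. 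The one place you depart from the paper is deliberate: you treat each off-support entry $\tilde{z}_j^{(k)}$ as a \emph{single} conditional Gaussian whose variance is the orthogonal sum of the support-correlation and noise pieces (as in the Failure half of Theorem~4), whereas the paper's proof of this lemma keeps the two pieces $\mathcal{Z}_j^{(k)}$ and $\mathcal{W}_j^{(k)}$ separate and optimizes a split threshold $t_0$ between the two tail bounds. The two routes are interchangeable here: yours is marginally tighter (it costs $\sqrt{v_1+v_2}$ rather than $\sqrt{v_1}+\sqrt{v_2}$ per standard deviation of the maximum) and is more symmetric with the converse bound, while the paper's split is what produces the regularizer thresholds in exactly the form {\bf F1}--{\bf F2}; both give the same leading-order requirement $n>g(\kappa,\tau,\alpha)\,s\log\left(p-(2-\alpha)s\right)$. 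The only items your outline leaves implicit are the deterministic on-support checks --- that $\frac{\lambda_b-\lambda_s\|\tilde{s}_j\|_0}{\left|M_j(\tilde{B})\right|-\|\tilde{s}_j\|_0}<\lambda_s$ for (C3) and $\lambda_s\|\tilde{s}_j\|_0<\lambda_b$ for (C4) --- which follow immediately from {\bf F3} (with $\kappa=\sqrt{2}$ one has $d=1$, $D(\tilde{S})\leq 1$ and $M(\tilde{B})\geq 2$) and should be stated for completeness.
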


\begin{proof}
First, we need to bound the projection of $\EstDual$ into the space $U_s^c$. Notice that
\footnotesize\begin{equation}
\left|\left(P_{U_s^c}(\EstDual)\right)_j^{(k)}\right|=\left\{
\begin{aligned}
&\frac{\lambda_b-\lambda_s\|\tilde{S}_j\|_0}{\left|M_{j}(\tilde{B})\right|-\|\tilde{S}_j\|_0}\\ &\qquad j\in\rowsupport(\tilde{B})\quad\&\quad (j,k)\notin\support(\tilde{S})\\
&\left|\tilde{Z}^{(k)}_{j}\right|\qquad\qquad j\in\bigcap_{k=1}^r\mathcal{U}_{k}^c\\
&0\qquad\qquad\qquad\qquad\qquad\text{ow}.\\
\end{aligned}\right..
\nonumber
\end{equation}\normalsize
By our assumption on the penalty regularizer coefficients, we have $\frac{\lambda_b-\lambda_s\|\tilde{S}_j\|_0}{\left|M_{j}^{\pm}(\tilde{B})\right|-\|\tilde{S}_j\|_0}<\lambda_s$. Moreover, we have
\footnotesize\begin{equation}
\begin{aligned}
&\left|\tilde{Z}^{(k)}_{j}\right|\\ &\leq\!\!\!\!\!\max_{j\in\bigcap_{k=1}^r\mathcal{U}_k^c} \underbrace{\left|\frac{1}{n}\tr{X^{(k)}_{j}\!\!}{\mathbf{I}\! -\!\frac{1}{n}X^{(k)}_{\mathcal{U}_k}\!\left(\!\frac{1}{n}\!\tr{X^{(k)}_{\mathcal{U}_k}}{X^{(k)}_{\mathcal{U}_k}}\right)^{\!\!-1} \!\!\!\!\left(X^{(k)}_{\mathcal{U}_k}\right)^T}\! w^{(k)}\right|}_{\mathcal{W}_j^{(k)}}\\
&\qquad+\max_{j\in\bigcap_{k=1}^r\mathcal{U}_k^c} \underbrace{\left|\frac{1}{n}\tr{X^{(k)}_{j}}{X^{(k)}_{\mathcal{U}_k}\left(\frac{1}{n}\tr{X^{(k)}_{\mathcal{U}_k}}{X^{(k)}_{\mathcal{U}_k}}\right)^{-1}} \tilde{Z}^{(k)}_{\mathcal{U}_k}\right|}_{\mathcal{Z}_j^{(k)}}\\ &\triangleq\max_{j\in\bigcap_{k=1}^r\mathcal{U}_k^c}\left|\mathcal{Z}_j^{(k)}\right|+\max_{j\in\bigcap_{k=1}^r\mathcal{U}_k^c}\left|\mathcal{W}_j^{(k)}\right|.
\end{aligned}
\nonumber
\end{equation}\normalsize
By Lemma~\ref{l2bound}, if $n\geq\frac{2}{2-\sqrt{3}}\log(pK)$ then with high probability $\left\|X_j^{(k)}\right\|_2^2\leq 2n$ and hence $\text{Var}\left(\mathcal{W}_j^{(k)}\right)\leq\frac{2\sigma^2}{n}$. Notice that $\mathbb{E}\left[\left\|X_j^{(k)}\right\|_2^2\right]=n$ and we added the factor of $2$ arbitrarily to use the concentration theorems. Using the concentration results for the zero-mean Gaussian random variable $\mathcal{W}_j^{(k)}$ and using the union bound, for all $t>0$, we get
\footnotesize\begin{equation}
\mathbb{P}\left[\max_{j\in\bigcap_{k=1}^r\mathcal{U}_k^c}\left|\mathcal{W}_j^{(k)}\right|\geq t\right]\leq 2\exp\left(-\frac{t^2n}{4\sigma^2}+\log\big(p-(2-\alpha)s\big)\right).\\
\nonumber
\end{equation}\normalsize

\noindent Conditioning on $\left(X^{(k)}_{\mathcal{U}_k},w^{(k)},\tilde{Z}^{(k)}\right)$'s, we have that $\mathcal{Z}_j^{(k)}$ is a zero-mean Gaussian random variable with
\small\begin{equation}
\text{Var}\left(\mathcal{Z}_j^{(k)}\right)\leq\frac{1}{n}\lambda_{max}\left(\left(\frac{1}{n}\tr{X^{(k)}_{\mathcal{U}_k}}{X^{(k)}_{\mathcal{U}_k}}\right)^{-1}\right)\left\|\tilde{Z}_{\mathcal{U}_k}^{(k)}\right\|_2^2.
\nonumber
\end{equation}\normalsize
According to the result of \cite{DAVSZA} on singular values of Gaussian matrices, for the matrix $X^{(k)}_{\mathcal{U}_k}$, for all $\delta>0$, we have
\footnotesize\begin{equation}
\mathbb{P}\left[\sigma_{min}\left(X^{(k)}_{\mathcal{U}_k}\right)\leq \left(1-\delta\right)\left(\sqrt{n}-\sqrt{s}\right)\right]\leq\exp\left(-\frac{\delta^2\left(\sqrt{n}-\sqrt{s}\right)^2}{2}\right),
\nonumber
\end{equation}\normalsize
and since \footnotesize$\lambda_{max}\left(\left(\tr{X^{(k)}_{\mathcal{U}_k}}{X^{(k)}_{\mathcal{U}_k}}\right)^{-1}\right)=\sigma_{min}\left(X^{(k)}_{\mathcal{U}_k}\right)^{-2}\,$\normalsize, we get
\footnotesize\begin{equation}
\begin{aligned}
&\mathbb{P}\left[\lambda_{max}\left(\left(\frac{1}{n}\tr{X^{(k)}_{\mathcal{U}_k}}{X^{(k)}_{\mathcal{U}_k}}\right)^{-1}\right)\geq \frac{\left(1+\delta\right)}{\left(1-\sqrt{\frac{s}{n}}\right)^2}\right]\\ &\qquad\qquad\qquad\qquad\leq\exp\left(-\frac{\left(\sqrt{\delta+1}-1\right)^2\left(\sqrt{n}-\sqrt{s}\right)^2}{2(1+\delta)}\right).
\end{aligned}
\nonumber
\end{equation}\normalsize

According to Lemma~\ref{DualVariableNorm}, if $\left|\left|\Theta^{*(1)}_j\right|-\left|\Theta^{*(2)}_j\right|\right| = o(\lambda_s)$, then with high probability $\tilde{S}_{j} = 0$, so that 
$|\tilde{\Theta}_{j}^{(1)}| = |\tilde{\Theta}_{j}^{(2)}|$. Thus, among shared features (with size $\alpha s$), a fraction $\tau$ have differing magnitudes on $\tilde{\Theta}$. Let $\tau_1$ be the fraction with larger magnitude on the first task 
and $\tau_2$ the fraction with larger magnitude on the second task (so that $\tau = \tau_1 + \tau_2$). Then, with high probability, recalling that $\lambda_b=\kappa\lambda_s$ for some $1<\kappa<2$, we get
\footnotesize\begin{equation}
\begin{aligned}
\text{Var}\left(\mathcal{Z}_j^{(1)}\right)&\leq\frac{\left\|\tilde{Z}_{\mathcal{U}_1}^{(1)}\right\|_2^2}{\left(\sqrt{n}-\sqrt{s}\right)^2}\\ &=\frac{(1-\alpha)s\lambda_s^2+\tau_1\alpha s\lambda_s^2+\tau_2\alpha s(\lambda_b-\lambda_s)^2+(1-\tau_1-\tau_2)\alpha s\frac{\lambda_b^2}{4}}{\left(\sqrt{n}-\sqrt{s}\right)^2}\\ &=\frac{\left(1-(1-\tau_1-\tau_2)\alpha-2\tau_2\alpha\kappa+\left(\tau_2+\frac{1-\tau_1-\tau_2}{4}\right)\alpha\kappa^2\right)s\lambda_s^2}{\left(\sqrt{n}-\sqrt{s}\right)^2}\\
&\triangleq\frac{f_1(\kappa)s\lambda_s^2}{\left(\sqrt{n}-\sqrt{s}\right)^2}.\\
\end{aligned}
\nonumber
\end{equation}\normalsize
Similarly,
\footnotesize\begin{equation}
\begin{aligned}
\text{Var}\left(\mathcal{Z}_j^{(2)}\right)&\leq\frac{\left\|\tilde{Z}_{\mathcal{U}_2}^{(2)}\right\|_2^2}{\left(\sqrt{n}-\sqrt{s}\right)^2}\\
&=\frac{\left(1-(1-\tau_1-\tau_2)\alpha-2\tau_1\alpha\kappa+\left(\tau_1+\frac{1-\tau_1-\tau_2}{4}\right)\alpha\kappa^2\right)s\lambda_s^2}{\left(\sqrt{n}-\sqrt{s}\right)^2}\\
&\triangleq\frac{f_2(\kappa)s\lambda_s^2}{\left(\sqrt{n}-\sqrt{s}\right)^2}.\\
\end{aligned}
\nonumber
\end{equation}\normalsize

\noindent By concentration of Gaussian random variables, we have
\footnotesize\begin{equation}
\begin{aligned}
&\mathbb{P}\left[\max_{j\in\bigcap_{k=1}^r\mathcal{U}_k^c}\left|\mathcal{Z}_j^{(k)}\right|\geq t\right]\\ &\qquad\qquad\leq 2\exp\left(-\frac{t^2\left(\sqrt{n}-\sqrt{s}\right)^2} {2f_k(\kappa)s\lambda_s^2}+\log\big(p-(1-\alpha)s\big)\right)\qquad\forall t\geq 0.\\
\end{aligned}
\nonumber
\end{equation}\normalsize
Using these bounds, we get
\footnotesize\begin{equation}
\begin{aligned}
&\mathbb{P}\!\left[\!\left\|P_{U_s^c}(\EstDual)\right\|_{\infty,\infty}\!\!\!<\!\lambda_s\right]\\ &\geq\mathbb{P}\left[\max_{j\in\bigcap_{k=1}^r\mathcal{U}_k^c}\left|\mathcal{Z}_j^{(k)}\right|+\max_{j\in\bigcap_{k=1}^r\mathcal{U}_k^c}\left|\mathcal{W}_j^{(k)}\right|<\lambda_s\qquad\forall \,1\leq k\leq K\right]\\
&\geq\mathbb{P}\left[\max_{j\in\bigcap_{k=1}^r\mathcal{U}_k^c}\left|\mathcal{Z}_j^{(k)}\right|<t_0\quad\forall\,1\leq k\leq r\right]\\ &\qquad\qquad\mathbb{P}\left[\max_{j\in\bigcap_{k=1}^r\mathcal{U}_k^c}\left|\mathcal{W}_j^{(k)}\right|<\lambda_s-t_0\quad\forall\,1\leq k\leq r\right]\\
&\geq\left(1-2\exp\left(-\frac{t_0^2\left(\sqrt{n}-\sqrt{s}\right)^2} {\left(f_1(\kappa)+f_2(\kappa)\right)s\lambda_s^2}+\log\big(p-(2-\alpha)s\big)+\log(r)\right)\right)\\ &\qquad\qquad\left(1-2\exp\left(-\frac{(\lambda_s-t_0)^2n}{4\sigma^2}+\log\big(p-(2-\alpha)s\big)+\log(r)\right)\right).
\end{aligned}
\nonumber
\end{equation}\normalsize
This probability goes to $1$ for \footnotesize$$t_0=\frac{\sqrt{\left(f_1(\kappa)+f_2(\kappa)\right)ns}\lambda_s}{\sqrt{\left(f_1(\kappa)+f_2(\kappa)\right)ns}\lambda_s+2\sigma(\sqrt{n}-\sqrt{s})}\lambda_s$$\normalsize  (the solution to $\frac{t_0^2\left(\sqrt{n}-\sqrt{s}\right)^2} {\left(f_1(\kappa)+f_2(\kappa)\right)s\lambda_s^2}=\frac{(\lambda_s-t_0)^2n}{4\sigma^2}$), if
\footnotesize\begin{equation}
\lambda_s>\frac{\sqrt{4\sigma^2\left(1-\sqrt{\frac{s}{n}}\right)^2\Big(\log(r)+\log\big(p-(2-\alpha)s\big)\Big)}} {\sqrt{n}-\left(\sqrt{s}+\sqrt{\left(f_1(\kappa)+f_2(\kappa)\right)s\Big(\log(r)+\log\big(p-(2-\alpha)s\big)\Big)}\right)}\\
\nonumber
\end{equation}\normalsize
provided that (substituting $r=2$), 
\footnotesize\begin{equation}
\begin{aligned}
n&>\left(f_1(\kappa)+f_2(\kappa)\right)s\log\Big(p-(2-\alpha)s\Big)\\
&\qquad+\Bigg(1+\left(f_1(\kappa)+f_2(\kappa)\right)\log(2)\\ &\qquad\qquad+2\sqrt{\left(f_1(\kappa)+f_2(\kappa)\right)\left(\log(2)+\log\Big(p-(2-\alpha)s\Big)\right)}\Bigg)s.
\end{aligned}
\nonumber
\end{equation}\normalsize
Since $f_1(\kappa)+f_2(\kappa)=f(\kappa)$ by definition, for large enough $p$ with $\frac{s}{p}=\mathbf{o}(1)$, we require
\begin{equation}
n>f(\kappa)s\log\Big(p-(2-\alpha)s\Big).
\label{eq:n_lim_sparse}
\end{equation}
\vspace{0.5cm}

\noindent Next, we need to bound the projection of $\EstDual$ into the space $U_b^c$. Notice that
\footnotesize\begin{equation}
\sum_{k=1}^r\left|\left(P_{U_b^c}(\EstDual)\right)_j^{(k)}\right|=\left\{
\begin{aligned}
&\lambda_s\|\tilde{S}_j\|_0\qquad\quad&j\in\bigcup_{k=1}^r\mathcal{U}_k-\rowsupport(\Bst)\\
&\sum_{k=1}^{r}\left|\tilde{Z}^{(k)}_{j}\right|&j\in\bigcap_{k=1}^r\mathcal{U}_k^c\\
&0&\text{ow}
\end{aligned}\right..
\nonumber
\end{equation}\normalsize
We have $\lambda_s\|\tilde{S}_j\|_0\leq\lambda_s D(\Ss)<\lambda_b$ by our assumption on the ratio of penalty regularizer coefficients. For all $j\in\bigcap_{k=1}^r\mathcal{U}_k^c$, we have
\footnotesize\begin{equation}
\begin{aligned}
&\sum_{k=1}^{r}\left|\tilde{Z}^{(k)}_{j}\right|\\ &\leq\!\!\!\!\!\!\!\max_{j\in\bigcap_{k=1}^r\mathcal{U}_k^c}\sum_{k=1}^{r} \underbrace{\left|\!\frac{1}{n}\!\tr{\!X^{(k)}_{j}\!\!}{\mathbf{I}\! -\!\frac{1}{n}X^{(k)}_{\mathcal{U}_k}\!\!\left(\!\frac{1}{n}\!\tr{X^{(k)}_{\mathcal{U}_k}}{X^{(k)}_{\mathcal{U}_k}}\right)^{\!\!-1} \!\!\!\!\!\left(X^{(k)}_{\mathcal{U}_k}\right)^T\!\!}\! w^{(k)}\right|}_{\mathcal{W}_j^{(k)}}\\
&\qquad+\max_{j\in\bigcap_{k=1}^r\mathcal{U}_k^c}\sum_{k=1}^{r} \underbrace{\left|\frac{1}{n}\tr{X^{(k)}_{j}}{X^{(k)}_{\mathcal{U}_k}\left(\frac{1}{n}\tr{X^{(k)}_{\mathcal{U}_k}}{X^{(k)}_{\mathcal{U}_k}}\right)^{-1}} \tilde{Z}^{(k)}_{\mathcal{U}_k}\right|}_{\mathcal{Z}_j^{(k)}}\\ &=\max_{j\in\bigcap_{k=1}^r\mathcal{U}_k^c}\sum_{k=1}^r\left|\mathcal{Z}_j^{(k)}\right|+\max_{j\in\bigcap_{k=1}^r\mathcal{U}_k^c}\sum_{k=1}^r\left|\mathcal{W}_j^{(k)}\right|.\\
\end{aligned}
\nonumber
\end{equation}\normalsize
Let $\mathbf{v}\in\{-1,+1\}^r$ be a vector of signs such that $\sum_{k=1}^r\left|\mathcal{W}_j^{(k)}\right|=\sum_{k=1}^rv_k\mathcal{W}_j^{(k)}$. Thus,
\small\begin{equation}
\text{Var}\left(\sum_{k=1}^{r}\left|\mathcal{W}_j^{(k)}\right|\right)=\text{Var}\left(\sum_{k=1}^{r}v_k\mathcal{W}_j^{(k)}\right)\leq\frac{2\sigma^2r}{n}.
\nonumber
\end{equation}\normalsize
Using the union bound and previous discussion, for all $t>0$, we get
\footnotesize\begin{equation}
\begin{aligned}
&\mathbb{P}\left[\max_{j\in\bigcap_{k=1}^r\mathcal{U}_k^c}\sum_{k=1}^r\left|\mathcal{W}_j^{(k)}\right|\geq t\right]\\ &\qquad\qquad\qquad\qquad=\mathbb{P}\left[\max_{j\in\bigcap_{k=1}^r\mathcal{U}_k^c}\max_{\mathbf{v}\in\{-1,+1\}^r}\sum_{k=1}^r v_k\mathcal{W}_j^{(k)}\geq t\right]\\ &\qquad\qquad\qquad\qquad\leq 2\exp\left(-\frac{t^2n}{4\sigma^2r}+r\log(2)+\log\big(p-(2-\alpha)s\big)\right).\\
\end{aligned}
\nonumber
\end{equation}\normalsize
Also from the previous analysis, assuming $\lambda_b=\kappa\lambda_s$ for some $1<\kappa<2$, we get
\footnotesize\begin{equation}
\begin{aligned}
&\text{Var}\left(\sum_{k=1}^r\left|\mathcal{Z}_j^{(k)}\right|\right)=\text{Var}\left(\sum_{k=1}^r v_k\mathcal{Z}_j^{(k)}\right)\leq\frac{\sum_{k=1}^r\left\|\tilde{Z}_j^{(k)}\right\|_2^2}{\left(\sqrt{n}-\sqrt{s}\right)^2}\\ &=\frac{2(1-\alpha)s\lambda_s^2+(\tau_1+\tau_2)\alpha s\lambda_s^2+(\tau_1+\tau_2)\alpha s(\lambda_b-\lambda_s)^2+2(1-\tau_1-\tau_2)\alpha s\frac{\lambda_b^2}{4}}{\left(\sqrt{n}-\sqrt{s}\right)^2}\\
&=\frac{\frac{1}{\kappa^2}\left(f_1(\kappa)+f_2(\kappa)\right)s\lambda_b^2}{\left(\sqrt{n}-\sqrt{s}\right)^2}.
\end{aligned}
\nonumber
\end{equation}\normalsize
and consequently for all $t>0$,
\footnotesize\begin{equation}
\begin{aligned}
&\mathbb{P}\left[\max_{j\in\bigcap_{k=1}^r\mathcal{U}_k^c}\sum_{k=1}^r\left|\mathcal{Z}_j^{(k)}\right|\geq t\right]\\&=\mathbb{P}\left[\max_{j\in\bigcap_{k=1}^r\mathcal{U}_k^c}\max_{\mathbf{v}\in\{-1,+1\}^r}\sum_{k=1}^r v_k\mathcal{Z}_j^{(k)}\geq t\right]\\ &\leq 2\exp\left(-\frac{t^2\left(\sqrt{n}-\sqrt{s}\right)^2}{\frac{1}{\kappa^2}\left(f_1(\kappa)+f_2(\kappa)\right)s\lambda_b^2} +r\log(2)+\log\big(p-(2-\alpha)s\big)\right).\\
\end{aligned}
\nonumber
\end{equation}\normalsize
Finally, we have
\footnotesize\begin{equation}
\begin{aligned}
&\!\mathbb{P}\!\left[\!\left\|P_{U_b^c}(\EstDual)\right\|_{\infty,1}\!\!\!<\!\lambda_b\right]\\& \geq\mathbb{P}\left[\max_{j\in\bigcap_{k=1}^r\mathcal{U}_k^c}\sum_{k=1}^r\left|\mathcal{Z}_j^{(k)}\right|+\max_{j\in\bigcap_{k=1}^r\mathcal{U}_k^c}\sum_{k=1}^r\left|\mathcal{W}_j^{(k)}\right|<\lambda_b\right]\\
&\geq\mathbb{P}\left[\max_{j\in\bigcap_{k=1}^r\mathcal{U}_k^c}\sum_{k=1}^r\left|\mathcal{Z}_j^{(k)}\right|<t_0\right]\\ &\qquad\qquad\qquad\mathbb{P}\left[\max_{j\in\bigcap_{k=1}^r\mathcal{U}_k^c}\sum_{k=1}^r\left|\mathcal{W}_j^{(k)}\right|<\lambda_b-t_0\right]\\
&\geq\left(1-2\exp\left(-\frac{t_0^2\left(\sqrt{n}-\sqrt{s}\right)^2}{\frac{1}{\kappa^2}\left(f_1(\kappa)+f_2(\kappa)\right)s\lambda_b^2} +r\log(2)+\log\big(p-(2-\alpha)s\big)\right)\right)\\ &\qquad\qquad\qquad\left(1-2\exp\left(-\frac{(\lambda_b-t_0)^2n}{4\sigma^2r}+r\log(2)+\log\big(p-(2-\alpha)s\big)\right)\right).
\end{aligned}
\nonumber
\end{equation}\normalsize
This probability goes to $1$  for \footnotesize$$t_0=\frac{\sqrt{\frac{1}{\kappa^2}\left(f_1(\kappa)+f_2(\kappa)\right)ns}\lambda_b}{\sqrt{\frac{1}{\kappa^2}\left(f_1(\kappa)+f_2(\kappa)\right)ns}\lambda_b+2\sigma(\sqrt{n}-\sqrt{s})}\lambda_b$$\normalsize (the solution to $\frac{(\lambda_b-t_0)^2n}{4\sigma^2r}=\frac{t_0^2(\sqrt{n}-\sqrt{s})^2}{\frac{1}{\kappa^2}\left(f_1(\kappa)+f_2(\kappa)\right)s\lambda_b^2}$), if
\footnotesize\begin{equation}
\lambda_b>\frac{\sqrt{4\sigma^2\left(1-\sqrt{\frac{s}{n}}\right)^2r\Big(r\log(2)+\log\big(p-(2-\alpha)s\big)\Big)}} {\sqrt{n}-\left(\sqrt{s}+\sqrt{\frac{1}{\kappa^2}\left(f_1(\kappa)+f_2(\kappa)\right)sr\Big(r\log(2)+\log\big(p-(2-\alpha)s\big)\Big)}\right)}\\
\nonumber
\end{equation}\normalsize
provided that (substituting $r=2$),
\footnotesize\begin{equation}
\begin{aligned}
n&>\frac{2}{\kappa^2}\left(f_1(\kappa)+f_2(\kappa)\right)s\log\Big(p-(2-\alpha)s\Big)\\ &\qquad\qquad+\Bigg(1+\frac{2}{\kappa^2}\left(f_1(\kappa)+f_2(\kappa)\right)2\log(2)\\ &\qquad\qquad\qquad+2\sqrt{\frac{2}{\kappa^2}\left(f_1(\kappa)+f_2(\kappa)\right)\left(2\log(2)+\log\Big(p-(2-\alpha)s\Big)\right)}\Bigg)s.
\end{aligned}
\nonumber
\end{equation}\normalsize
For large enough $p$ with $\frac{s}{p}=\mathbf{o}(1)$, we require 
\begin{equation}
n>\frac{2}{\kappa^2}f(\kappa)s\log\Big(p-(2-\alpha)s\Big).
\nonumber
\end{equation}

\noindent Combining this result with \eqref{eq:n_lim_sparse}, the lemma follows.\\
\end{proof}

\newpage
\bibliographystyle{plainnat}
\bibliography{mtdm}

\appendices
\section{Deterministic Necessary Optimality Conditions}
\label{appendix:Opt_Cond}

In this appendix, we investigate \underline{deterministic} necessary conditions for the optimality of the solutions $(\Bh,\Sh)$ of the problem \eqref{EqnDirtyEstNoisy}. 

\subsection{Sub-differential of $\ell_1/\ell_\infty$ and $\ell_1/\ell_1$ Norms}
In this section we state the sub-differential characterization of the norms we used in out convex program. The results can be directly derived from the definition of sub-differential of a function.

\begin{lemma} [Sub-differential of $\ell_1/\ell_\infty$-Norm]
The matrix $\widetilde{Z}\in\mathbb{R}^{p\times r}$ belongs to the sub-differential of $\ell_1/\ell_\infty$-norm of matrix $\widetilde{B}$, denoted as $\widetilde{Z}\in\partial\left\|\widetilde{B}\right\|_{1,\infty}$ iff 
\begin{itemize}
\item [(i)] for all $j\in\rowsupport(\widetilde{B})$, we have $\tilde{z}_j^{(k)}=\left\{\begin{array}{cc} t_j^{(k)}\,\sgn\left(\tilde{b}^{(k)}_{j}\right) & k \in M_{j}(\widetilde{B})\\
0 & \text{ow}.
\end{array}\right.$, where, $t_j^{(k)}\geq 0$ and $\sum_{k=1}^rt_j^{(k)}=1$.
\item [(ii)] for all $j\notin\rowsupport(\widetilde{B})$, we have $\sum_{k=1}^r\left|\tilde{z}_j^{(k)}\right|\leq 1$.
\end{itemize}
\end{lemma}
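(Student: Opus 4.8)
The plan is to exploit the fact that the $\ell_1/\ell_\infty$ norm decomposes as a sum over rows, $\|\widetilde{B}\|_{1,\infty} = \sum_{j=1}^p \|\tilde{b}_j\|_\infty$, so that its subdifferential is the Cartesian product (over rows) of the subdifferentials of the individual row-wise $\ell_\infty$ norms. Concretely, a matrix $\widetilde{Z}$ lies in $\partial\|\widetilde{B}\|_{1,\infty}$ if and only if each row $\tilde{z}_j$ lies in $\partial\|\tilde{b}_j\|_\infty$, the subdifferential of the $\ell_\infty$ norm on $\mathbb{R}^r$ evaluated at $\tilde{b}_j$. This reduces the entire statement to characterizing the subdifferential of a single $\ell_\infty$ norm and then reading off the two cases $\tilde{b}_j \neq 0$ (yielding (i)) and $\tilde{b}_j = 0$ (yielding (ii)).

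First I would establish the separability of the subdifferential. Since $\|\widetilde{B}\|_{1,\infty}$ is a sum of functions each depending on a single row, the defining subgradient inequality $\|\widetilde{B}'\|_{1,\infty} \ge \|\widetilde{B}\|_{1,\infty} + \langle \widetilde{Z}, \widetilde{B}' - \widetilde{B}\rangle$ for all $\widetilde{B}'$ decomposes, because both the norm and the inner product split as sums over rows. Perturbing only one row at a time then shows this global inequality is equivalent to the collection of row-wise inequalities $\tilde{z}_j \in \partial\|\tilde{b}_j\|_\infty$ for every $j$, and summing the row-wise inequalities gives the converse.

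Next I would characterize $\partial\|x\|_\infty$ for $x \in \mathbb{R}^r$ using the standard dual-norm description of the subdifferential of a norm, $\partial\|x\| = \{g : \langle g, x\rangle = \|x\|,\ \|g\|_* \le 1\}$, together with the fact that the dual of $\ell_\infty$ is $\ell_1$. At $x = 0$ this reduces to the dual unit ball $\{g : \|g\|_1 \le 1\}$, which is precisely condition (ii). At $x \neq 0$, the constraints $\|g\|_1 \le 1$ and $\langle g, x\rangle = \|x\|_\infty$, combined with H\"older's inequality $\langle g, x\rangle \le \|g\|_1\,\|x\|_\infty \le \|x\|_\infty$, force $\|g\|_1 = 1$ and force $g$ to be supported on the argmax set $M_j(\widetilde{B}) = \{k : |x^{(k)}| = \|x\|_\infty\}$ with signs matching those of $x$; writing $g^{(k)} = t^{(k)}\,\sgn(x^{(k)})$ with $t^{(k)} \ge 0$ and $\sum_k t^{(k)} = 1$ then recovers condition (i). The converse, that any such $g$ satisfies the subgradient inequality, is an immediate check.

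The argument is essentially routine bookkeeping; the only point requiring care is the equality analysis at $x \neq 0$, where one must verify that tightness in H\"older's inequality simultaneously pins down both the \emph{support} of the subgradient (only the maximizing coordinates) and its \emph{sign pattern}, rather than merely its total $\ell_1$ mass. I expect no substantive obstacle beyond this.
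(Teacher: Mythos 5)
Your proposal is correct and is exactly the standard argument the paper gestures at (the paper itself omits the proof, remarking only that the result ``can be directly derived from the definition of sub-differential''): separate the norm over rows, then apply the dual-norm characterization $\partial\|x\|=\{g:\langle g,x\rangle=\|x\|,\ \|g\|_*\le 1\}$ with $\ell_1$ dual to $\ell_\infty$, using tightness in H\"older's inequality to pin down the support, sign pattern, and total mass of the subgradient in the nonzero-row case. No gaps.
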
  

\begin{lemma} [Sub-differential of $\ell_1/\ell_1$-Norm]
The matrix $\widetilde{Z}\in\mathbb{R}^{p\times r}$ belongs to the sub-differential of $\ell_1/\ell_1$-norm of matrix $\widetilde{S}$, denoted as $\widetilde{Z}\in\partial\left\|\widetilde{S}\right\|_{1,1}$ iff 
\begin{itemize}
\item [(i)] for all $(j,k)\in\support(\widetilde{S})$, we have $\tilde{z}_j^{(k)}=\sgn\left(\tilde{s}^{(k)}_{j}\right)$.
\item [(ii)] for all $(j,k)\notin\support(\widetilde{S})$, we have $\left|\tilde{z}_j^{(k)}\right|\leq 1$.
\end{itemize}
\end{lemma}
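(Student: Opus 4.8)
The plan is to exploit the complete entrywise separability of the $\ell_1/\ell_1$-norm. Writing $\|\widetilde{S}\|_{1,1}=\sum_{j=1}^{p}\sum_{k=1}^{r}\left|\tilde{s}_j^{(k)}\right|$, we see the norm is a sum of functions each depending on a single coordinate $\tilde{s}_j^{(k)}$. First I would recall the defining sub-gradient inequality: $\widetilde{Z}\in\partial\|\widetilde{S}\|_{1,1}$ holds if and only if $\|A\|_{1,1}\geq\|\widetilde{S}\|_{1,1}+\tr{\widetilde{Z}}{A-\widetilde{S}}$ for every test matrix $A\in\mathbb{R}^{p\times r}$.

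Next I would reduce this single joint inequality to a family of independent scalar inequalities. Since $\tr{\widetilde{Z}}{A-\widetilde{S}}=\sum_{j,k}\tilde{z}_j^{(k)}\left(a_j^{(k)}-\tilde{s}_j^{(k)}\right)$ also splits across coordinates, the joint condition is equivalent to requiring, for each index $(j,k)$, that $|x|\geq\left|\tilde{s}_j^{(k)}\right|+\tilde{z}_j^{(k)}\left(x-\tilde{s}_j^{(k)}\right)$ for all $x\in\mathbb{R}$. In other words, $\widetilde{Z}\in\partial\|\widetilde{S}\|_{1,1}$ exactly when $\tilde{z}_j^{(k)}\in\partial\left|\tilde{s}_j^{(k)}\right|$ for every $(j,k)$, where the right-hand side denotes the sub-differential of the scalar absolute-value function.

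It then remains only to compute this scalar sub-differential. If $\tilde{s}_j^{(k)}\neq 0$, the map $x\mapsto|x|$ is differentiable at $\tilde{s}_j^{(k)}$ with derivative $\sgn\left(\tilde{s}_j^{(k)}\right)$, so $\partial\left|\tilde{s}_j^{(k)}\right|$ is the singleton $\left\{\sgn\left(\tilde{s}_j^{(k)}\right)\right\}$; this yields condition (i) for $(j,k)\in\support(\widetilde{S})$. If $\tilde{s}_j^{(k)}=0$, the scalar inequality collapses to $|x|\geq\tilde{z}_j^{(k)}\,x$ for all $x\in\mathbb{R}$, which holds precisely when $\left|\tilde{z}_j^{(k)}\right|\leq 1$; this yields condition (ii) for $(j,k)\notin\support(\widetilde{S})$.

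There is no substantive obstacle here: the whole argument is elementary convex analysis. The only step deserving an explicit line of justification is the passage from the joint sub-gradient inequality to the per-coordinate conditions, which I would make rigorous by testing with matrices $A$ that agree with $\widetilde{S}$ in every entry except one, thereby isolating each scalar inequality in turn. The companion $\ell_1/\ell_\infty$ lemma stated just above would be handled by the same template, except that there the norm is separable only across rows rather than individual entries, so the per-row sub-differential of the $\ell_\infty$-norm replaces the sub-differential of the absolute value.
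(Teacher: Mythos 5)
Your argument is correct and matches the route the paper itself takes: the paper offers no written proof for this lemma, stating only that it ``can be directly derived from the definition of sub-differential,'' and your entrywise separability argument is exactly that standard derivation, reducing $\partial\|\widetilde{S}\|_{1,1}$ to the product of the scalar sub-differentials $\partial\lvert\tilde{s}_j^{(k)}\rvert$ and reading off conditions (i) and (ii). The one step you flag --- isolating each coordinate by testing with matrices that agree with $\widetilde{S}$ elsewhere for necessity, and summing the scalar inequalities for sufficiency --- is indeed all that needs to be said.
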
  

\subsection{Necessary Conditions}
The first lemma shows a necessary condition for any solution of the problem  \eqref{EqnDirtyEstNoisy}.

\begin{lemma}
If $(\Sh,\Bh)$ is a solution (uniqueness is \underline{NOT} required) of (\ref{EqnDirtyEstNoisy}) then the following properties hold
\begin{itemize}
\item [(P1)] $\sgn(\Bs)=\sgn(\2b)$ for all $(j,k)\in\support(\Sh)$ with $j\in\rowsupport(\Bh)$.
\item [(P2)] if $\frac{\lambda_b}{\lambda_s}$ is not an integer, $\frac{1}{D(\Sh)}>\frac{\lambda_s}{\lambda_b}>\frac{1}{M(\Bh)}$.
\item [(P3)] $\left|\2b\right|=\left\|\1b\right\|_{\infty}$ for all $(j,k)\in\support(\Sh)$.
\item [(P4)] if $\frac{\lambda_b}{\lambda_s}$ is not an integer, $\forall j\, \exists k$ such that $(j,k)\notin\support(\Sh)$ and $\left|\2b\right|=\left\|\1b\right\|_{\infty}$.
\end{itemize}
\label{signBSnoisy}
\end{lemma}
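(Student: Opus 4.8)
The plan is to exploit the single structural fact driving all four properties: the quadratic loss in \eqref{EqnDirtyEstNoisy} depends on $S$ and $B$ only through their sum $\hat{\Theta} := \Sh + \Bh$. Writing $L(\Theta) = \frac{1}{2n}\sum_{k=1}^r \|y^{(k)} - X^{(k)}\theta^{(k)}\|_2^2$, the partial gradients of $L$ with respect to $s^{(k)}$ and $b^{(k)}$ coincide and both equal $\nabla_{\theta^{(k)}} L(\hat{\Theta})$. Hence the stationarity (subgradient sum-rule) conditions for the convex program force a \emph{single} matrix $\EstDual := -\nabla L(\hat{\Theta})$ to serve simultaneously as a scaled subgradient of both regularizers:
\[ \tfrac{1}{\lambda_s}\EstDual \in \partial\regsparse{\Sh}, \qquad \tfrac{1}{\lambda_b}\EstDual \in \partial\regblocksparse{\Bh}. \]
First I would record the entrywise meaning of these two memberships via the sub-differential characterizations above: for $(j,k)\in\support(\Sh)$ one has $\tilde{z}_j^{(k)} = \lambda_s\,\sgn(\hat{s}_j^{(k)})$ while $|\tilde{z}_j^{(k)}|\le\lambda_s$ off $\support(\Sh)$; and for $j\in\rowsupport(\Bh)$ one has $\tilde{z}_j^{(k)} = \lambda_b\, t_j^{(k)}\sgn(\hat{b}_j^{(k)})$ with weights $t_j^{(k)}\ge 0$ supported on $M_j(\Bh)$ and $\sum_k t_j^{(k)} = 1$, while $\sum_k|\tilde{z}_j^{(k)}|\le\lambda_b$ for $j\notin\rowsupport(\Bh)$. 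This coupling --- one certificate $\EstDual$ obeying two distinct subgradient conditions --- is the crux; the rest is disciplined bookkeeping.

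For (P1) and (P3) I would fix $(j,k)\in\support(\Sh)$ with $j\in\rowsupport(\Bh)$. The $\ell_1/\ell_1$ condition gives $\tilde{z}_j^{(k)} = \lambda_s\sgn(\hat{s}_j^{(k)})\neq 0$. Since the $\ell_1/\ell_\infty$ subgradient vanishes off $M_j(\Bh)$, a nonzero $\tilde{z}_j^{(k)}$ forces $k\in M_j(\Bh)$, i.e. $|\hat{b}_j^{(k)}| = \|\hat{b}_j\|_\infty$, which is (P3); the case $j\notin\rowsupport(\Bh)$ of (P3) is vacuous as both sides are zero. Matching the sign of $\tilde{z}_j^{(k)}$ from the two conditions then yields $\sgn(\hat{s}_j^{(k)}) = \sgn(\tilde{z}_j^{(k)}) = \sgn(\hat{b}_j^{(k)})$, which is (P1).

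For (P2) I would pass from entries to row sums. For any $j\in\rowsupport(\Bh)$, summing $|\tilde{z}_j^{(k)}|=\lambda_b t_j^{(k)}$ over $M_j(\Bh)$ gives $\sum_k|\tilde{z}_j^{(k)}| = \lambda_b$; as each term is at most $\lambda_s$, this yields $\lambda_b \le |M_j(\Bh)|\,\lambda_s$, and minimizing over $j$ gives $\lambda_b \le M(\Bh)\lambda_s$. Conversely, for every row $j$ the $\ell_1/\ell_\infty$ mass ($=\lambda_b$ when $j\in\rowsupport(\Bh)$, $\le\lambda_b$ otherwise) dominates the contribution of the $\|\hat{s}_j\|_0$ positions carrying $|\tilde{z}_j^{(k)}|=\lambda_s$, so $\|\hat{s}_j\|_0\,\lambda_s \le \lambda_b$, hence $D(\Sh)\lambda_s \le \lambda_b$. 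Because $M(\Bh)$ and $D(\Sh)$ are integers while $\lambda_b/\lambda_s$ is assumed non-integer, equality is impossible in both, so the inequalities are strict and rearrange to $\frac{1}{D(\Sh)} > \frac{\lambda_s}{\lambda_b} > \frac{1}{M(\Bh)}$. Finally, for (P4) I would reuse this strict counting: for $j\in\rowsupport(\Bh)$, parts (P1) and (P3) place $\support(\hat{s}_j)\subseteq M_j(\Bh)$ with $\|\hat{s}_j\|_0 \le D(\Sh) < \lambda_b/\lambda_s < M(\Bh)\le|M_j(\Bh)|$, leaving some $k\in M_j(\Bh)$ with $(j,k)\notin\support(\Sh)$ and $|\hat{b}_j^{(k)}| = \|\hat{b}_j\|_\infty$; for $j\notin\rowsupport(\Bh)$, $\|\hat{s}_j\|_0 \le D(\Sh) < \lambda_b/\lambda_s \le r$ (invoking the standing assumption {\bf A3-3}) leaves a zero column $k$, where trivially $|\hat{b}_j^{(k)}| = 0 = \|\hat{b}_j\|_\infty$. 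The main obstacle is not any single computation but identifying and exploiting the shared certificate $\EstDual$, and then tracking the in/out-of-row-support case split consistently across all four parts while using the non-integrality of $\lambda_b/\lambda_s$ to upgrade each weak inequality to a strict one.
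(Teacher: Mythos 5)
Your proof is correct, but it takes a genuinely different route from the paper's. The paper proves Lemma~\ref{signBSnoisy} by a purely primal exchange argument: for each property it assumes failure and constructs a competitor pair $(\check{B},\check{S})$ with $\check{B}+\check{S}=\Bh+\Sh$ (so the quadratic loss is untouched) whose penalty $\lambda_s\|\check{S}\|_{1,1}+\lambda_b\|\check{B}\|_{1,\infty}$ is strictly smaller, contradicting optimality; non-integrality of $\lambda_b/\lambda_s$ enters there through the strict inequality $\lambda_s\lfloor\lambda_b/\lambda_s\rfloor<\lambda_b$ that quantifies the penalty saving. You instead argue through the first-order conditions: since the loss depends on $(S,B)$ only through their sum, a single matrix $\EstDual=-\nabla L(\Sh+\Bh)$ must lie simultaneously in $\lambda_s\partial\|\Sh\|_{1,1}$ and $\lambda_b\partial\|\Bh\|_{1,\infty}$ --- this is precisely the paper's Lemma~\ref{LemOptNoisy}, stated in the same appendix but not invoked in the paper's own proof of Lemma~\ref{signBSnoisy} --- and you then read (P1)--(P4) off the entrywise subdifferential descriptions, using non-integrality only to upgrade $\lambda_b\le|M_j(\Bh)|\lambda_s$ and $D(\Sh)\lambda_s\le\lambda_b$ to strict inequalities. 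Your route is shorter and more uniform (one certificate yields all four properties), and it is actually more careful on one point: for $j\notin\rowsupport(\Bh)$ the paper dismisses (P4) as trivial, whereas you correctly observe that one still needs $\|\hat{s}_j\|_0<r$ and supply it from (P2) together with the standing assumption $\lambda_b/\lambda_s\le r$. What the paper's exchange argument buys in return is self-containedness: it works directly from the definition of a minimizer, without the subgradient characterizations or Lemma~\ref{LemOptNoisy}. Both proofs share the harmless degenerate-case caveat that $M(\Bh)$ and $D(\Sh)$ must be read as restricted to nonzero rows for the ratios in (P2) to make sense.
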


\begin{proof}
We provide the proof of each property separately.
\begin{itemize}
\item [(P1)] Suppose there exists $(j_0,k_0)\in\support(\Sh)$, such that $\sgn(\Bs)=-\sgn(\2b)$. Let $\check{B},\check{S}\in\mathbb{R}^{p\times r}$ be matrices equal to $\Bh,\Sh$ in all entries except at $(j_0,k_0)$. Consider the following two cases
\begin{enumerate}
\item $\left|\Cs+\3b\right|\leq\left\|\4b\right\|_{\infty}$: Let $\check{b}_{j_0}^{\left(k_0\right)}=\3b+\Cs$ and $\check{s}_{j_0}^{\left(k_0\right)}=0$. Notice that $(j_0,k_0)\notin\support(\check{S})$.
\item $\left|\Cs+\3b\right|>\left\|\4b\right\|_{\infty}$: Let $\check{b}_{j_0}^{\left(k_0\right)}=-\sgn\left(\3b\right)\left\|\4b\right\|_{\infty}$ and $\check{s}_{j_0}^{\left(k_0\right)}=\Cs+\3b-\check{b}_{j_0}^{\left(k_0\right)}$. Notice that $\sgn\left(\check{b}_{j_0}^{\left(k_0\right)}\right)=\sgn\left(\check{s}_{j_0}^{\left(k_0\right)}\right)$. \end{enumerate}
Since $\check{B}+\check{S}=\Bh+\Sh$ and $\|\check{b}_{j_0}\|_{\infty}\leq\|\4b\|_{\infty}$ and $\|\check{s}_{j_0}\|_1<\|\Ds\|_1$, it is a contradiction to the optimality of $(\Bh,\Sh)$.\\
\item [(P2)] We prove the result in two steps by establishing 1. $M(\Bh)>\left\lfloor\frac{\lambda_b}{\lambda_s}\right\rfloor$ and 2. $D(\Sh)<\left\lceil\frac{\lambda_b}{\lambda_s}\right\rceil$.
\begin{enumerate}
\item In contrary, suppose there exists a row $j_0\in\rowsupport(\Bh)$ such that $\left|M_{j_0}(\Bh)\right|\leq\left\lfloor\frac{\lambda_b}{\lambda_s}\right\rfloor$. Let $k^*$ be the index of the element whose magnitude is ranked $\left(\left\lfloor\frac{\lambda_b}{\lambda_s}\right\rfloor+1\right)$ among the element of the vector $\4b+\Ds$. Let $\check{B},\check{S}\in\mathbb{R}^{p\times r}$ be matrices equal to $\Bh,\Sh$ in all entries except on the row $j_0$ and
\footnotesize\begin{equation}
\5b=\left\{
\begin{aligned}
&\left|\6b+\Fs\right|\sgn\left(\5b\right)\\ &\qquad\qquad\qquad\qquad\left|\5b+\Es\right|\geq\left|\6b+\Fs\right|\\
&\5b+\Es\qquad\qquad\qquad\qquad\qquad\text{ow},
\end{aligned}\right.
\nonumber
\end{equation}\normalsize
and $\check{s}_{j_0}=\Ds+\4b-\check{b}_{j_0}$. Notice that $M(\check{B})>\left\lfloor\frac{\lambda_b}{\lambda_s}\right\rfloor$ and $\sgn\left(\check{s}_{j_0}^{(k)}\right)=\sgn\left(\check{b}_{j_0}^{(k)}\right)$ for all $(j_0,k)\in\support\left(\check{s}_{j_0}\right)$ since $\sgn\left(\Es\right)=\sgn\left(\5b\right)$ for all $(j_0,k)\in\support\left(\Sh_{j_0}\right)$ by (P1). Further, since $\check{S}+\check{B}=\Sh+\Bh$ and $\|\check{b}_{j_0}\|_{\infty}=\left|\6b\right|+\left|\Fs\right|$ and $\|\check{s}_{j_0}\|_1\leq\|\Ds\|_1+ \left\lfloor\frac{\lambda_b}{\lambda_s}\right\rfloor\left(\left\|\4b\right\|_{\infty}-\left|\check{b}_{j_0}^{\left(k^*\right)}\right|-\left|\check{s}_{j_0}^{\left(k^*\right)}\right|\right)$, this is a contradiction to the optimality of $(\Bh,\Sh)$ due to the fact that $\lambda_s\left\lfloor\frac{\lambda_b}{\lambda_s}\right\rfloor<\lambda_b$.\\

\item In contrary, suppose there exists a row $j_0\in\rowsupport(\Sh)$ such that $\left\|\Ds\right\|_0\geq\left\lceil\frac{\lambda_b}{\lambda_s}\right\rceil$. Let $k^*$ be the index of the element whose magnitude is ranked $\left\lceil\frac{\lambda_b}{\lambda_s}\right\rceil$ among the elements of the vector $\4b+\Ds$. Let $\check{B},\check{S}\in\mathbb{R}^{p\times r}$ be matrices respectively equal to $\Bh$ and $\Sh$ in all entries except on the row $j_0$ and
\footnotesize\begin{equation}
\5b=\left\{
\begin{aligned}
&\left|\6b+\Fs\right|\sgn\left(\5b\right)\\ &\qquad\qquad\qquad\qquad\left|\5b+\Es\right|\geq\left|\6b+\Fs\right|\\
&\5b+\Es\qquad\qquad\qquad\qquad\qquad\text{ow},
\end{aligned}\right.
\nonumber
\end{equation}\normalsize
and $\check{s}_{j_0}=\Ds+\4b-\check{b}_{j_0}$. Notice that $D(\check{S})<\left\lceil\frac{\lambda_b}{\lambda_s}\right\rceil$ and $\sgn\left(\check{s}_{j_0}^{(k)}\right)=\sgn\left(\check{b}_{j_0}^{(k)}\right)$ for all $(j_0,k)\in\support\left(\check{s}_{j_0}\right)$ since $\sgn\left(\Es\right)=\sgn\left(\5b\right)$ for all $(j_0,k)\in\support\left(\Ds\right)$. Since $\check{S}+\check{B}=\Sh+\Bh$ and $\|\check{b}_{j_0}\|_{\infty}=\left|\6b\right|+\left|\Fs\right|$ and $\|\check{s}_{j_0}\|_1\leq\|\Ds\|_1+ \left(\left\lceil\frac{\lambda_b}{\lambda_s}\right\rceil-1\right)\left(\left\|\4b\right\|_{\infty}-\left|\check{b}_{j_0}^{\left(k^*\right)}\right|-\left|\check{s}_{j_0}^{\left(k^*\right)}\right|\right)$, this is a contradiction to the optimality of $(\Bh,\Sh)$, due to the fact that $\lambda_s\left(\left\lceil\frac{\lambda_b}{\lambda_s}\right\rceil-1\right)<\lambda_s\left\lfloor\frac{\lambda_b}{\lambda_s}\right\rfloor<\lambda_b$.\\
\end{enumerate}

\item [(P3)] If $j\notin\rowsupport(\Bh)$ then the result is trivial. Suppose there exists $(j_0,k_0)\in\support(\Sh)$ with $j_0\in\rowsupport(\Sh)$ such that $\left|b_{j_0}^{\left(k_0\right)}\right|<\|\4b\|_{\infty}$. Let $\check{B},\check{S}\in\mathbb{R}^{p\times r}$ be matrices equal to $\Bh,\Sh$ in all entries except for the entry corresponding to the index $(j_0,k_0)$. Let $\check{b}_{j_0}^{\left(k_0\right)}=\left\|\4b\right\|_{\infty}\sgn\left(\3b\right)$ if $\left|\3b+\Cs\right|\geq\left\|b_{j_0}\right\|_{\infty}$ and $\check{b}_{j_0}^{\left(k_0\right)}=\3b+\Cs$ otherwise. Let $\check{s}_{j_0}^{\left(k_0\right)}=\Cs+\3b-\check{b}_{j_0}^{\left(k_0\right)}$.
Since $\check{B}+\check{S}=\Bh+\Sh$ and $\left\|\check{b}_{j_0}\right\|_{\infty}=\left\|\4b\right\|_{\infty}$ and $\left\|\check{s}_{j_0}\right\|_{1}<\left\|\Ds\right\|_{1}$, it is a contradiction to the optimality of $(\Bh,\Sh)$.\\

\item [(P4)] If $j\notin\rowsupport(\Bh)$ or $j\notin\rowsupport(\Sh)$ the result is trivial. Suppose there exists a row $j_0\in\rowsupport(\Bh)\cap\rowsupport(\Sh)$ such that the result does not hold for that. Let $k^*=\arg\max_{\{k:(j,k)\notin\support(\Sh)\}}\left|\2b\right|$. Let $\check{B},\check{S}\in\mathbb{R}^{p\times r}$ be matrices equal to $\Bh,\Sh$ in all entries except for the row $j_0$ and
\begin{equation}
\5b=\left\{
\begin{aligned}
&\left|\6b\right|\sgn\left(\5b\right)\qquad &(j_0,k)\in\support(\Sh)\\
&\5b&\text{ow},
\end{aligned}\right.
\nonumber
\end{equation}
and $\check{s}_{j_0}=\Ds+\4b-\check{b}_{j_0}$. Since $\check{B}+\check{S}=\Sh+\Bh$ and $\left\|\check{b}_{j_0}\right\|_{\infty}=\left|\6b\right|$ and by (P2) and (P3), $\left\|\check{s}_{j_0}\right\|_1\leq\left\|\Ds\right\|_1 +\left(\left\lceil\frac{\lambda_b}{\lambda_s}\right\rceil-1\right)\left(\left\|\4b\right\|_{\infty}-\left|\6b\right|\right)$, this is a contradiction to the optimality of $(\Bh,\Sh)$, due to the fact that $\lambda_s\left(\left\lceil\frac{\lambda_b}{\lambda_s}\right\rceil-1\right)<\lambda_s\left\lfloor\frac{\lambda_b}{\lambda_s}\right\rfloor<\lambda_b$.\\
\end{itemize}
This concludes the proof of the lemma.\\
\end{proof}

The next lemma shows why the assumption that the ratio of penalty regularizer parameters is crucial for our analysis. This is not a deterministic result, but since it is related to optimality conditions, we included this lemma in this appendix.

\begin{lemma}
If $(\Sh,\Bh)$ with $\Bh\neq\mathbf{0}$ is a solution to (\ref{EqnDirtyEstNoisy}) and $d=\frac{\lambda_b}{\lambda_s}$ is an integer then $(\Sh,\Bh)$ is not the unique solution.
\label{Lem:non_uniqueness}
\end{lemma}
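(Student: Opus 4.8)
The plan is to exploit that the data-fit term in \eqref{EqnDirtyEstNoisy} depends on $B$ and $S$ only through their sum $\Theta=B+S$, so that a second optimum can be produced \emph{without} changing $\Theta$ at all. First I would observe that the objective splits as a loss $L(\Theta)$ plus the row-separable penalty $\sum_{j}\big(\lambda_b\|b_j\|_\infty+\lambda_s\|s_j\|_1\big)$. Hence, fixing $\Theta^\ast:=\Bh+\Sh$, any solution must, row by row, minimize $R_j(b,s):=\lambda_b\|b\|_\infty+\lambda_s\|s\|_1$ subject to $b+s=\theta^\ast_j$; otherwise one could lower $R_j$ on an offending row while leaving $\Theta^\ast$ (and thus $L$) untouched, contradicting optimality. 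So it suffices to show that, for some row, this per-row decomposition problem has more than one minimizer.

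Second, I would reduce the per-row problem to a one-dimensional problem over a clipping threshold. The same local-exchange arguments used in Lemma~\ref{signBSnoisy} to prove (P1) and (P3) show that any optimal $(b,s)$ for a fixed $\theta$ has $\support(s)\subseteq M_j(b)$ with matching signs, which forces the clipping form $b^{(k)}=\sgn(\theta^{(k)})\min\{|\theta^{(k)}|,v\}$ and $s=\theta-b$ with $v=\|b\|_\infty$. The per-row optimal value then equals $\min_{v\ge 0}h(v)$ with
\[
h(v)=\lambda_b\,v+\lambda_s\sum_{k=1}^r\big(|\theta^{(k)}|-v\big)_+ .
\]
This $h$ is piecewise linear, and on the interval where exactly $i$ coordinates of $\theta$ exceed $v$ its slope is $\lambda_b-i\lambda_s=\lambda_s(d-i)$.

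Third --- and here the integrality of $d=\lambda_b/\lambda_s$ is decisive --- the slope vanishes precisely when $i=d$, i.e.\ on the whole interval $\big(|\theta|_{(d+1)},\,|\theta|_{(d)}\big)$ between the $(d{+}1)$-th and $d$-th largest magnitudes (with the convention $|\theta|_{(r+1)}:=0$). Thus $h$ is \emph{constant} there, and every threshold in that interval yields a distinct optimal decomposition with identical penalty. I would apply this to a row $j_0\in\rowsupport(\Bh)$, which exists because $\Bh\neq\mathbf 0$ and has $|\theta^\ast_{j_0}|_{(d)}=\|\hat b_{j_0}\|_\infty>0$. Whenever the interval is nondegenerate, I can pick $v'\neq\|\hat b_{j_0}\|_\infty$ inside it, form the corresponding clipping pair $(\check b_{j_0},\check s_{j_0})\neq(\hat b_{j_0},\hat s_{j_0})$, and leave every other row unchanged; the resulting $(\check B,\check S)$ has the same $\Theta^\ast$ and the same objective value, exhibiting a second optimum. (By contrast, when $d$ is \emph{not} an integer the slope $\lambda_s(d-i)$ never vanishes, $h$ has a unique minimizing breakpoint, and the decomposition is forced to be unique --- which is exactly why Assumption~{\bf A3-3} excludes integer ratios.)

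The main obstacle is the degenerate case in which the interval collapses, namely $|\theta^\ast_{j_0}|_{(d)}=|\theta^\ast_{j_0}|_{(d+1)}$ (a magnitude tie straddling the threshold); there the per-row minimizer can genuinely be unique, so a purely deterministic statement would be false. This is precisely why the lemma is asserted only in the stochastic sense, and I would close the gap by arguing that under the continuous (Gaussian) noise model such exact ties occur with probability zero. The cleanest route is to use the error representation \eqref{errmag}, which exhibits the relevant on-support entries of $\delpar=\tilde{B}+\tilde{S}-\Bst-\Ss$ (hence of $\Theta^\ast=\bar\Theta+\delpar$) as a non-degenerate Gaussian vector conditioned on the design, up to a deterministic dual shift; consequently the magnitudes $|\theta^{\ast(k)}_{j_0}|$ are almost surely pairwise distinct, the flat interval has positive length with probability one, and the construction above produces a second optimum, yielding the claimed non-uniqueness with probability $1-c_1\exp(-c_2 n)$.
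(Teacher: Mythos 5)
Your proof is correct and rests on the same underlying mechanism as the paper's, but it packages that mechanism in a genuinely different and cleaner way. The paper argues locally: after pinning down $|M_{j_0}(\Bh)|\geq d$ and $\|\Sh_{j_0}\|_0\leq d$ by exchange arguments, it splits into the case $\|\Sh_{j_0}\|_0=d$ (shift magnitude $\delta$ from $\Sh$ to $\Bh$ on the $d$ supported coordinates, so the penalty changes by $\lambda_b\delta-d\lambda_s\delta=0$) and the case $\|\Sh_{j_0}\|_0<d$ (show $|M_{j_0}(\Bh)|=d$ almost surely, then shift mass the other way). Your reformulation of the per-row decomposition as the one-dimensional piecewise-linear problem $\min_{v\ge 0}h(v)$ with slope $\lambda_s(d-i)$ unifies these two perturbations: both are moves of the clipping threshold $v$ within the flat piece of $h$, and integrality of $d$ is exactly the condition for a flat piece to exist. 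Your version buys a one-line explanation of why non-integer ratios force per-row uniqueness (which the paper only gestures at via {\bf A3-3}), and it isolates the sole obstruction as a magnitude tie between the $d$-th and $(d{+}1)$-th largest entries of $\widehat{\Theta}_{j_0}$ --- which is precisely the event $|M_{j_0}(\Bh)|\geq d+1$ that the paper also dispatches probabilistically via \eqref{errmag} and continuity of the Gaussian noise, so you are not assuming anything the paper does not (and both arguments share the same soft spot there, since $\widehat{\Theta}$ is the output of an optimization whose penalty actively creates magnitude ties). Two minor inaccuracies that do not affect the argument: the optimal threshold satisfies $\|\hat{b}_{j_0}\|_\infty$ lying in the closed flat interval rather than equaling the $d$-th largest magnitude; and you do not actually need the clipping characterization of the \emph{given} optimum, since exhibiting two distinct clipping decompositions at different thresholds in a nondegenerate flat interval already contradicts uniqueness.
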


\begin{proof}
In contrary, assume that $(\Sh,\Bh)$ is the unique solution. Take a non-zero row $\4b$ with $j_0\in\rowsupport(\Bh)$. If $\left|M_{j_0}(\Bh)\right|<d$, then let $\check{B},\check{S}\in\mathbb{R}^{p\times r}$ be two matrices equal to $\Bh,\Sh$ except on the row $j_0$ and let $\check{b}_{j_0}=\mathbf{0}$ and $\check{s}_{j_0}=\4b+\Ds$. Then, $(\check{B},\check{S})$ are \emph{strictly} better solutions than $(\Bh,\Sh)$. This contradicts the optimality of $(\Bh,\Sh)$. Hence, $\left|M_{j_0}(\Bh)\right|\geq d$. with similar argument we can conclude that $\left\|\Sh_{j_0}\right\|_0\leq d$.\\

If $\left\|\Sh_{j_0}\right\|_0=d$, then let $0<\delta\leq\min_{(j_0,k)\in\support(\Sh)}\left|\Es\right|$ and $\check{B}(\delta),\check{S}(\delta)\in\mathbb{R}^{p\times r}$ be two matrices equal to $\Bh,\Sh$ except for the entries indexed $(j_0,k)\in\support(\Sh)$ and let $\check{b}_{j_0}^{(k)}=\5b+\delta\sgn\left(\5b\right)$ and $\check{s}_{j_0}^{(k)}=\Es-\delta\sgn\left(\Es\right)$ for all $(j_0,k)\in\support(\Sh)$. Then, $(\check{B}(\delta),\check{S}(\delta))$ is another solution to (\ref{EqnDirtyEstNoisy}). This contradicts the uniqueness of $(\Bh,\Sh)$.

If $\left\|\Sh_{j_0}\right\|_0<d$, then using Lemma~\ref{signBSnoisy} and Equation~\ref{errmag}, we have
\footnotesize\begin{equation}
\begin{aligned}
&\mathbb{P}\left[\left|M_{j_0}(\Bh)\right|\geq d+1\right]\\ &=\sum_{i=1}^{r-d}\mathbb{P}\left[\left|M_{j_0}(\Bh)\right|=d+i\right]\\
&=\sum_{i=1}^{r-d}\mathbb{P}\Bigg[\exists k_1,\ldots,k_{i+1}\!\in \!M_{j_0}(\Bh)\quad\forall l=1,\ldots,i+1:\\ &\qquad\qquad\qquad\qquad\qquad\qquad\qquad\|\7b+\underbrace{\Gs}_{0}|=\left\|\4b\right\|_{\infty}\Bigg]\\
&=\sum_{i=1}^{r-d}\mathbb{P}\Bigg[\exists k_1,\ldots,k_{i+1}\!\in \!M_{j_0}(\Bh)\quad\forall l=1,\ldots,i+1:\\ 
&\qquad\qquad\qquad\qquad\qquad\qquad\qquad\left|\delpar_{j_0}^{\left(k_l\right)}\right|=\left|\bsjkl+\ssjkl\right|+\left\|\1b\right\|_{\infty}\Bigg]\\
&=\sum_{i=1}^{r-d}\mathbb{P}\Bigg[\exists k_1,\ldots,k_{i+1}\!\in \!M_{j_0}(\Bh)\quad\forall l,m=1,\ldots,i+1:\\ 
&\qquad\qquad\qquad\qquad\qquad\qquad\qquad\left|\delpar_{j_0}^{\left(k_l\right)}\right|=C_{k_l,k_m}+\left|\delpar_{j_0}^{\left(k_m\right)}\right|\Bigg]=0.\\
\end{aligned}
\nonumber
\end{equation}\normalsize
In above equation $C_{k_l,k_m}$ are some constants. The last conclusion follows from the fact that $\delpar_{j_0}^{(k_l)}$'s are  continuous Gaussian variables and the cardinality of this event is less than the cardinality of the space they lie in. Hence, $\left|M_{j_0}(\Bh)\right|=d$.

Let $0<\delta<\left\|b_{j_0}\right\|_{\infty}$ and $\check{B}(\delta),\check{S}(\delta)\in\mathbb{R}^{p\times r}$ be two matrices equal to $\Bh,\Sh$ except for the entries indexed $(j_0,k)$ for $k\in M_{j_0}(\Bh)$ and let $\check{b}_{j_0}^{(k)}=\5b-\delta$ and $\check{s}_{j_0}^{(k)}=\Es+\delta$ for all $k\in M_{j_0}(\Bh)$. Then, $(\check{B}(\delta),\check{S}(\delta))$ is another solution to (\ref{EqnDirtyEstNoisy}). This contradicts the uniqueness of $(\Bh,\Sh)$.\\
\end{proof}

Next lemma characterizes the optimal solution by introducing a dual variable $\hat{Z}$.\\

\begin{lemma}[Convex Optimality]\label{LemOptNoisy}
If $(\Bh,\Sh)$ is a solution of \eqref{EqnDirtyEstNoisy} then there exists a matrix $\hat{Z}\in\mathbb{R}^{p\times r}$, called \emph{dual variable}, such that $\hat{Z}\in \lambda_s \partial \|\Sh\|_{1,1}$ and $\hat{Z} \in \lambda_b \partial \|\Bh\|_{1,\infty}$ and for all $k = 1,\hdots,r$,
\begin{equation}\label{EqnConvOptCond}
    \frac{1}{n} \tr{X^{(k)}}{X^{(k)}} \left(\Hs+\8b\right) - \frac{1}{n} (X^{(k)})^T y^{(k)} + \hat{z}^{(k)} = 0.\\
\end{equation}
\end{lemma}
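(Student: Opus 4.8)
The plan is to recognize that \eqref{EqnDirtyEstNoisy} is an unconstrained convex program whose objective splits into a differentiable data-fitting term
$$L(S,B) := \frac{1}{2n}\sum_{k=1}^{r}\left\|y^{(k)} - X^{(k)}\left(s^{(k)}+b^{(k)}\right)\right\|_2^2$$
plus the two convex regularizers $\lambda_s\|S\|_{1,1}$, which depends on $S$ alone, and $\lambda_b\|B\|_{1,\infty}$, which depends on $B$ alone. Since the objective is convex and $L$ is everywhere differentiable, the first-order optimality condition $0\in\partial F(\Sh,\Bh)$ holds at any minimizer, and the Moreau--Rockafellar sum rule applies without any qualification: the subdifferential of the total objective is the gradient of $L$ plus the subdifferentials of the two regularizers. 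Because these regularizers act on disjoint blocks of variables, their combined subdifferential is block-separable, so the optimality condition decouples into one condition per block.

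First I would compute the block gradients of the smooth term. Differentiating the quadratic, both $\nabla_{s^{(k)}}L$ and $\nabla_{b^{(k)}}L$ equal $\frac{1}{n}(X^{(k)})^T X^{(k)}(\Hs+\8b) - \frac{1}{n}(X^{(k)})^T y^{(k)}$. The crucial observation is that these two gradients \emph{coincide}, because $s^{(k)}$ and $b^{(k)}$ enter $L$ only through their sum. The decoupled optimality conditions then read: there exist $G_s\in\partial\|\Sh\|_{1,1}$ and $G_b\in\partial\|\Bh\|_{1,\infty}$ with $\nabla_S L + \lambda_s G_s = 0$ and $\nabla_B L + \lambda_b G_b = 0$, where $\nabla_S L$ (resp.\ $\nabla_B L$) denotes the matrix whose $k$-th column is $\nabla_{s^{(k)}}L$ (resp.\ $\nabla_{b^{(k)}}L$).

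Next I would exploit the identity $\nabla_S L = \nabla_B L$: subtracting the two block conditions forces $\lambda_s G_s = \lambda_b G_b$. Defining $\hat{Z} := \lambda_s G_s = \lambda_b G_b$ produces a single matrix that simultaneously lies in $\lambda_s\,\partial\|\Sh\|_{1,1}$ and in $\lambda_b\,\partial\|\Bh\|_{1,\infty}$, which are precisely the first two required memberships. Substituting $\hat{z}^{(k)} = \lambda_s g_s^{(k)}$ into $\nabla_{s^{(k)}}L + \lambda_s g_s^{(k)} = 0$ reproduces \eqref{EqnConvOptCond} verbatim for every $k=1,\ldots,r$, completing the argument.

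There is no serious obstacle here---this is a direct application of the subgradient optimality conditions for convex minimization. The only point deserving explicit mention is the justification that the subdifferential of the total objective separates additively into $\nabla L$ and the block-separable subdifferentials of the two regularizers; this is standard precisely because $L$ is differentiable everywhere and the two penalties act on disjoint variable blocks. I would state this sum rule, after which the identification of the common dual matrix $\hat{Z}$ is immediate.
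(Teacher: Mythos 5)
Your proof is correct and follows exactly the route the paper intends: the paper's own proof is the one-line remark that the result ``follows from the standard first order optimality argument,'' and your write-up simply fills in that argument --- the Moreau--Rockafellar sum rule, the observation that $\nabla_{s^{(k)}}L=\nabla_{b^{(k)}}L$ since $s^{(k)}$ and $b^{(k)}$ enter only through their sum, and the resulting identification $\hat{Z}=\lambda_s G_s=\lambda_b G_b$. No discrepancy to report.
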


\begin{proof}
The proof follows from the standard first order optimality argument.
\end{proof}

\section{Coordinate Descent Algorithm}
\label{appendix:coordinate_descent}

We use the coordinate descendent algorithm described as follows. The algorithm takes the tuple $(X, Y, \lambda_s, \lambda_b, \epsilon, B, S)$ as input, and outputs $(\Bh, \Sh)$. Note that $X$ and $Y$ are given to this algorithm, while $B$ and $S$ are our initial guess or the warm start of the regression matrices. $\epsilon$ is the precision parameter which determines the stopping criterion.\\
\\
We update elements of the sparse matrix $S$ using the subroutine $UpdateS$, and update elements in the block sparse matrix $B$ using the subroutine $UpdateB$, respectively, until the regression matrices converge. The pseudocode is in Algorithm 1 to Algorithm 3.\\

\begin{algorithm}
\caption{Our Model Solver}
\begin{algorithmic}
\REQUIRE $X$, $Y$, $\lambda_b$, $\lambda_s$, $B$, $S$ and $\varepsilon$
\ENSURE $\Sh$ and $\Bh$
\STATE
\STATE \textbf{Initialization:}
\FOR {$j=1:p$}
	\FOR {$k=1:r$}
		\STATE $c_j^{(k)}\leftarrow\tr{X_j^{(k)}}{y^{(k)}}$
		\FOR {$i=1:p$}
			\STATE $d_{i,j}^{(k)}\leftarrow\tr{X_i^{(k)}}{X_j^{(k)}}$
		\ENDFOR
	\ENDFOR
\ENDFOR
\STATE
\STATE \textbf{Updating:}
\LOOP	
	\STATE $S\leftarrow UpdateS(c; d; \lambda_s; B; S)$
	\STATE $B\leftarrow UpdateB(c; d; \lambda_b; B; S)$
	\IF {Relative Update $<\epsilon$}
		\STATE BREAK
	\ENDIF
\ENDLOOP 
\STATE RETURN $\Bh=B,\; \Sh=S$
\end{algorithmic}
\end{algorithm}

\begin{algorithm}
\caption{UpdateB}
\begin{algorithmic}
\REQUIRE c, d, $\lambda_b$, $B$ and $S$
\ENSURE $B$
\STATE Update $B$ using the cyclic coordinate descent algorithm for $\ell_1/\ell_\infty$ while keeping $S$ unchanged.
\STATE
\FOR {$j=1:p$}
	\FOR {$k=1:r$}
		\STATE $\alpha_j^{(k)} \leftarrow c_j^{(k)}-\sum_{i\neq j} (b_i^{(k)}+s_i^{(k)})d_{i,j}^{(k)}-s_i^{(k)}d_{j,j}^{(k)}$
		\IF {$\sum_{k = 1}^r |\alpha ^{(k)} _j | \leq \lambda_b $}
			\STATE $b_j\leftarrow 0$
		\ELSE
			\STATE Sort $\alpha$ to be $|\alpha_j^{(k_1)}| \ge |\alpha_j^{(k_2)}|\geq\cdot\cdot\cdot\geq |\alpha_j^{(k_r)}|$
			\STATE $m^*=\arg \max _{1 \le m \le r} (\sum_{k = 1}^r |\alpha ^{(k_m)} _j | - \lambda_b)/m$
			\FOR {$i=1:r$}
				\IF {$i>m^*$}
					\STATE $b_j^{(k_i)}\leftarrow\alpha_j^{(k_i)}$
				\ELSE 
					\STATE $b^{(k_i)} _j\leftarrow\frac{\sgn(\alpha ^{(k_i )} _j )}{m^*}\left(\sum_{l = 1}^{m^*}|\alpha ^{(k_l )} _j | - \lambda_b\right)$
				\ENDIF
			\ENDFOR
		\ENDIF
	\ENDFOR
\ENDFOR
\STATE RETURN $B$
\end{algorithmic}
\end{algorithm}

\begin{algorithm}[t]
\caption{Update-S}
\begin{algorithmic}
\REQUIRE c, d, $\lambda_s$, $B$ and $S$
\ENSURE $S$
\STATE Update $S$ using the cyclic coordinate descent algorithm for LASSO while keeping $B$ unchanged.

\FOR {$j=1:p$}
	\FOR {$k=1:r$}
		\STATE $\alpha_j^{(k)} \leftarrow c_j^{(k)}-\sum_{i \neq j} (b_i^{(k)}+s_i^{(k)})d_{i,j}^{(k)}-s_i^{(k)}d_{j,j}^{(k)}$
		\IF {$|\alpha_j^{(k)}| \leq \lambda_s$}
			\STATE $s_j^{k}\leftarrow 0$
		\ELSE
			\STATE $s_j^{k}\leftarrow\alpha_j^{(k)}-\lambda_s\sgn(\alpha_j^{(k)})$
		\ENDIF
	\ENDFOR
\ENDFOR
\STATE RETURN $S$
\end{algorithmic}
\end{algorithm}

\subsection{Correctness of Algorithms}
In this algorithm, $B$ is the block sparse matrix and $S$ is the sparse matrix. We alternatively update $B$ and $S$ until they converge. When updating $S$, we cycle through each element of $S$ while holding all the other elements of $S$ and $B$ unchanged; When updating $B$, we update each block $B_j$ (the coefficient vector of the $j^{th}$ feature for $r$ tasks) as a whole, while keeping $S$ and other coefficient vector of $B$ fixed.\\
\\
For updating $B$, the subproblem is updating $B_j$
\begin{eqnarray}
\1b = \arg\min_{b_j} && \frac{1}{2} \sum_{k=1}^{r}  
\left\|r_j^{(k)} - b_j^{(k)}X_j^{(k)} \right\|_{2}^{2} + 
\lambda_b \|b_j\|_{\infty}.
\end{eqnarray}

If we take the partial residual vector $r_j^{(k)}=y^{(k)}-\sum\limits_{l \ne j} (b_l^{(k)}X_l^{(k)})-\sum_{l}(s_l^{(k)}X_l^{(k)})$, the correctness of this algorithm will directly follow from the correctness of coordinate descent algorithm of $\ell_1/\ell_{inf}$ in \cite{BlockCD}. With the same argument, the correctness of the Algorithm 3 can be proven.\\

\end{document}